\newtheorem{theorem}{Theorem}
\newtheorem{lemma}{Lemma}
\newtheorem{corollary}{Corollary}
\newtheorem{definition}{Definition}
\newtheorem{assumption}{Assumption}
\newtheorem{remark}{Remark}
\newenvironment{proofof}[1]{\smallskip\noindent{\bf Proof of #1:}}%
        {\hspace*{\fill}$\Box$\par}
\newcommand{\ignore}[1]{}
\newcommand{\R}{\mathbb R}
\newcommand{\abs}[1]{\left\lvert #1 \right\rvert}
\newcommand{\norm}[1]{\left\lVert #1 \right\rVert}
\newcommand{\zeronorm}[1]{\left\lVert #1 \right\rVert_0}
\newcommand{\infnorm}[1]{\left\lVert #1 \right\rVert_{\infty}}
\newcommand{\twonorm}[1]{\left\lVert #1 \right\rVert_{2}}
\newcommand{\frob}[1]{\left\lVert #1 \right\rVert_{F}}
\newcommand{\Sec}[1]{\hyperref[sec:#1]{\S\ref*{sec:#1}}} %section
\newcommand{\Eqn}[1]{\hyperref[eq:#1]{(\ref*{eq:#1})}} %equation
\newcommand{\Fig}[1]{\hyperref[fig:#1]{Fig.\,\ref*{fig:#1}}} %figure
\newcommand{\Tab}[1]{\hyperref[tab:#1]{Tab.\,\ref*{tab:#1}}} %table
\newcommand{\Thm}[1]{\hyperref[thm:#1]{Theorem\,\ref*{thm:#1}}} %theorem
\newcommand{\Lem}[1]{\hyperref[lem:#1]{Lemma\,\ref*{lem:#1}}} %lemma
\newcommand{\Prop}[1]{\hyperref[prop:#1]{Prop.~\ref*{prop:#1}}} %property
\newcommand{\Cor}[1]{\hyperref[cor:#1]{Corollary~\ref*{cor:#1}}} %corollary
\newcommand{\Def}[1]{\hyperref[def:#1]{Definition~\ref*{def:#1}}} %definition
\newcommand{\Alg}[1]{\hyperref[alg:#1]{Alg.~\ref*{alg:#1}}} %algorithm
\newcommand{\Ex}[1]{\hyperref[ex:#1]{Ex.~\ref*{ex:#1}}} %example
\newcommand{\Clm}[1]{\hyperref[clm:#1]{Claim~\ref*{clm:#1}}} %example
\colorlet{shadecolor}{blue!20}
\newcommand{\A}{{A}}
\newcommand{\Pom}{\mathcal{P}_\Omega}
\newcommand{\Pomqt}[1][t]{\mathcal{P}_{\Omega_{q, #1}}}
\newcommand{\Pk}[1][k]{\mathcal{P}_{#1}}
\newcommand{\Sot}[1][t]{\widetilde{S}^{(#1)}}
\newcommand{\HT}{\mathcal{HT}_\zeta}
\newcommand{\Mt}[1][t]{M^{(#1)}}
\newcommand{\Mts}[1][t]{M^{(#1)}_s}
\newcommand{\proj}{P}
\newcommand{\pom}{{\mathcal P}}
\newcommand{\Los}{\Lo_s}
\newcommand{\Hs}{H_s}
\newcommand{\Us}{U_s}
\renewcommand{\L}{L}
\renewcommand{\S}{S}
\newcommand{\Lo}{\L^*}
\newcommand{\So}{\widetilde{\S}^*}
\newcommand{\Sob}{\S^{*}}
\newcommand{\Lt}[1][t]{\L^{(#1)}}
\newcommand{\St}[1][t]{\S^{(#1)}}
\newcommand{\Ltn}{\L^{(t+1)}}
\newcommand{\Stn}{\S^{(t+1)}}
\newcommand{\M}{M}
\newcommand{\E}{{E}}
\newcommand{\Et}[1][t]{\E^{(#1)}}
\newcommand{\Etn}{\E^{(t+1)}}
\newcommand{\ind}[1]{\mathbbm{1}_{\{#1\}}}
\newcommand{\order}[1]{O\left({#1}\right)}
\newcommand{\ncralgo}{\textbf{R-RMC}}
\newcommand{\ncgca}{\textbf{PG-RMC}\ }
\newcommand{\splitS}{\textbf{SplitSamples}}
\newcommand{\supp}[1]{\textrm{Supp}\left(#1\right)}
\newcommand\numberthis{\addtocounter{equation}{1}\tag{\theequation}}
\newcommand{\wSt}{\widetilde{S}^{(t)}}
\newcommand{\wSo}{\widetilde{S}^*}
\newcommand{\thresh}{\eta}
\newcommand{\Lts}[1][t]{\L^{(#1)}_s}
\newcommand{\ddfrac}[2]{\frac{\displaystyle #1}{\displaystyle #2}}
\title{Nearly-optimal Robust Matrix Completion}
\author{Yeshwanth Cherapanamjeri \qquad Kartik Gupta \qquad Prateek Jain \\
Microsoft Research India\\
\texttt{\{t-yecher,t-kagu,prajain\}@microsoft.com}
}
\begin{document}
\maketitle
% \begin{abstract}
% \input{abstract}
% \end{abstract}

%\paragraph{Keywords: }Robust PCA,  matrix decomposition, non-convex methods, hard thresholding, linear convergence.

%!TEX root = main.tex
\begin{abstract}
In this paper, we consider the problem of Robust Matrix Completion (RMC) where the goal is to recover a low-rank matrix by observing a small number of its entries out of which a few can be {\em arbitrarily} corrupted. We propose a simple projected gradient descent method to estimate the low-rank matrix that alternately performs a projected gradient descent step and cleans up a few of the corrupted entries using hard-thresholding. Our algorithm solves RMC using nearly optimal number of observations as well as nearly optimal number of corruptions. Our result also implies significant improvement over the existing time complexity bounds for the low-rank matrix completion problem. Finally, an application of our result to the robust PCA problem (low-rank+sparse matrix separation)  leads to nearly {\em linear} time (in matrix dimensions) algorithm for the same; existing state-of-the-art methods require quadratic time. Our empirical results corroborate our theoretical results and show that even for moderate sized problems, our method for robust PCA is an order of magnitude faster than the existing methods.
\end{abstract}
\section{Introduction}
In this paper, we study the Robust Matrix Completion (RMC) problem where the goal is to recover an underlying low-rank matrix by observing a small number of sparsely corrupted entries from the matrix. Formally,
\begin{equation}
  \label{eq:rmc}
\text{RMC:}\qquad \text{Find rank-}r \ \text{matrix}\ \ \Lo\in \R^{m\times n} \quad \text{ using }\ \  \Omega\ \ and\ \ \Pom(\Lo)+\Sob,
\end{equation}
where $\Omega\subseteq [m]\times[n]$ is the set of observed entries (throughout the paper we assume that $m\leq n$), $\Sob$ denotes the sparse corruptions of the observed entries, i.e., $Supp(\Sob)\subset \Omega$. Sampling operator $\Pom:\R^{m\times n}\rightarrow \R^{m\times n}$ is defined as:
\begin{equation}
  \label{eq:pom}
  (\Pom(A))_{ij} =
  \begin{cases}
    A_{ij}, &\text{ if }\ (i,j)\in \Omega \\
    0, &\text{ otherwise}.
  \end{cases}
\end{equation}%a matrix operator that zeroes out the unobserved entries of $\Lo$, i.e., $(\Pom(A))_{ij}=A_{ij}$ if $(i,j)\in \Omega$ and $0$ otherwise.

RMC is an important problem with several applications such as recommendation systems with outliers. Similarly, the problem is also heavily used to model PCA under gross outliers as well as erasures \cite{DBLP:journals/jmlr/JalaliRVS11}. Finally, as we show later, an efficient solution to RMC enables faster solution for the robust PCA (RPCA) problem as well. The goal in RPCA is to find a low-rank matrix $\Lo$ and sparse matrix $\So$ by observing their sum, i.e., $M=\Lo+\So$. State-of-the-art results for RPCA shows exact recovery of a rank-$r$, $\mu$-incoherent $\Lo$ (see Assumption 1, Section~\ref{sec:analysis}) if at most $\rho=\frac{1}{\mu^2 r}$ fraction of the entries in each row/column of $\So$ are corrupted \cite{hsu2011robust,NIPS2014_5430}.

However, the existing state-of-the-art results for RMC with optimal $\rho=\frac{1}{\mu^2r}$ fraction of corrupted entries, either require at least a constant fraction of the entries of $\Lo$ to be observed \cite{DBLP:conf/isit/ChenJSC11,CandesLMW11} or require restrictive assumptions like support of corruptions $\So$ being uniformly random \cite{Li2013}. \cite{klopp2014robust} also considers RMC problem but studies the noisy setting and do not provide exact recovery bounds. Moreover, most of the existing methods for RMC use convex relaxation for both low-rank and sparse components, and in general exhibit large time complexity ($O(m^2n)$).

%, i.e., at least a constant fraction of entries have to be observed.
%Candes et al \cite{CandesLMW11} in their seminal paper on RPCA, propose solution for RMC that requires at least a constant fraction of the entries to be observed, i.e., $|\Omega|\geq c \cdot mn$ even when rank-$r$ of $\Lo$ is a constant. In contrast, typical matrix completion result requires only $|\Omega|=O(r(m+n)\log(m+n))$ unifory random entries of $\Lo$ to be observed but they do not allow large corruptions. In a follow up work, \cite{DBLP:conf/isit/ChenJSC11} shows exact recovery if $p n^{2}$ entries are observed and $O(\frac{np^{3}}{\mu^{2} r \log^{2} n})$ entries are corrupted by $\Sob$. Hence, to get optimal many entries to be corrupted, $p=\Omega(1)$ i.e., a constant fraction of the entries are required to be observed.

%Recently, \cite{Li2013} showed exact recovery with nearly-optimal $O(rn \log n)$ observed entries but their solution requires support of corruptions $\Sob$ to be uniformly random which is restrictive for typical RMC and RPCA applications. Finally, \cite{klopp2014robust} also considers the RMC problem but their focus is on noisy versions of the problem and their result do not provide exact recovery guarantee even when {\em all} the entries are observed. Moreover, most of the existing methods for RMC use convex relaxation for both low-rank and sparse components, and in general exhibit large time complexity ($O(m^2n)$).

In this work, we attempt to answer the following open question (assuming $m\leq n$): \\[5pt]
{\em Can RMC be solved exactly by using $|\Omega|=O(rn\log n)$ observations out of which $O(\frac{1}{\mu^2 r})$ fraction of the observed entries in each row/column are corrupted.}\\[5pt]
Note that both  $|\Omega|$ (for uniformly random $\Omega$) and $\rho$ values mentioned in the question above denote the information theoretic limits. Hence, the goal is to solve RMC for nearly-optimal number of samples and nearly-optimal fraction of corruptions.

Under standard assumptions on $\Lo,\ \Sob,\ \Omega$ and for $n=O(m)$, we answer the above question in affirmative albeit with $|\Omega|$ which is $O(r)$ (ignoring log factors) larger than the optimal sample complexity (see Theorem~\ref{thm:gca}).  In particular, we propose a simple projected gradient (PGD) style method for RMC that alternately cleans up corrupted entries by hard-thresholding; our method's computational complexity is also nearly optimal ($O(|\Omega|r+(m+n)r^2+r^3)$). % Note that our method applies non-convex opeators like low-rank projection and hard-thresholding. Hence,
Our algorithm is based on projected gradient descent for estimating $\Lo$ and alternating projection on set of sparse matrices for estimating $\Sob$. Note that projection is onto non-convex sets of low-rank matrices (for $\Lo$) and sparse matrices (for $\Sob$), hence
standard convex analysis techniques cannot be used for our algorithm.

In {\em a concurrent and independent work}, \cite{DBLP:journals/corr/YiPCC16} also studied the RMC problem and obtained similar results. They study an alternating minimization style algorithm while our algorithm is based on low-rank projected gradient descent. Moreover, our sample complexity,  corruption complexity as well as time complexity differs along certain critical parameters: a) Sample complexity: Our sample complexity bound is dependent only logarithmically on  $\kappa$, the condition number of the matrix $L^*$ (see Table ~\ref{tab:mc}). On the other hand, result of \cite{DBLP:journals/corr/YiPCC16} depends quadratically on $\kappa$, which can be significantly large. However, our sample complexity bound depends logarithmically on the final error $\epsilon$ (defined as $\epsilon = \twonorm{L - \Lo}$); this implies that for typical finite precision computation, our sample complexity bound can be worse by a constant factor. b) Our result allows the fraction of corrupted entries to be information theoretic optimal (up to a constant) $O(\frac{1}{\mu^2 r})$, while the result of \cite{DBLP:journals/corr/YiPCC16}  allows only $O\left(\min\left(\frac{1}{\mu^2r\sqrt{r\kappa}}, \frac{1}{\mu^2\kappa^2r}\right)\right)$ fraction of corrupted entries. c) As a consequence of the sample complexity bounds, running time of the method by \cite{DBLP:journals/corr/YiPCC16} depends quintically on $\kappa$. On the other hand, our algorithm has optimal sparsity (up to a constant factor) independent of $\kappa$ and polylogarithmic dependence on $\kappa$ for sample and running time complexities.

Several recent results \cite{DBLP:conf/colt/0002N15,NIPS2014_5430,DBLP:conf/nips/0002TK14,HardtW14,DBLP:journals/tit/Blumensath11} show that under certain assumptions, projection onto non-convex sets indeed lead to provable algorithms with fast convergence to the global optima. However, as explained in Section~\ref{sec:analysis}, RMC presents unique set of challenges as we have to perform error analysis with the errors arising due to missing entries as well as sparse corruptions, both of which interact among themselves as well. In fact, our careful error analysis also enables us to improve results for the matrix completion as well as the RPCA problem.

{\bf Matrix Completion (MC)}: The goal of MC is to find rank-$r$ $\Lo$ using $\Pom(\Lo)$. State-of-the-art result for MC uses nuclear norm minimization and requires $|\Omega|\geq \mu^2 nr^2 \log^2 n$ under standard $\mu$-incoherence assumption (see Section~\ref{sec:analysis}), but the method requires $O(m^2n)$ time in general. The best sample complexity result for a non-convex iterative method (with at most logarithmic dependence on the condition number of $\Lo$) achieve exact recovery when $|\Omega|\geq \mu^6 n r^5 \log^2 n$ and needs $O(|\Omega|r)$ computational steps. In contrast, assuming $n=O(m)$, our method achieves nearly the same sample complexity of trace-norm but with nearly linear time algorithm ($O(|\Omega|r)$). See Table~\ref{tab:mc} for a detailed comparison of our result with the existing methods.

{\bf RPCA}: Several recent results show that RPCA can be solved if $\rho=O(\frac{1}{\mu^2 r})$-fraction of entries in each row and column of $\Lo$ are corrupted \cite{NIPS2014_5430, hsu2011robust} where $\Lo$ is assumed to be $\mu$-incoherent. Moreover, St-NcRPCA algorithm \cite{NIPS2014_5430} can solve the problem in time $O(mnr^2)$. Corollary~\ref{cor:rpca} shows that by sampling $\Omega$ uniformly at random, we can solve the problem in time $O(nr^3)$ only. That is, we can recover $\Lo$ without even observing the entire input matrix. Moreover, if the goal is to recover the sparse corruption as well, then we can obtain a two-pass (over the input matrix) algorithm that solves the RPCA problem exactly. St-NcRPCA algorithm requires $r^2 \log(1/\epsilon)$ passes over the data. Our method has significantly smaller space complexity as well.

Our empirical results on synthetic data demonstrates effectiveness of our method. We also apply our method to the foreground background separation problem; our method is an order of magnitude faster than the state-of-the-art method (St-NcRPCA) while achieving similar accuracy. %Finally, we apply our technique to the problem of foreground background subtractino and show that our technique can compress the video signifcantly and still perform nearly perfect foreground background separation.

In summary, this paper's main contributions are:\\[2pt]
{\bf (a) RMC:} We propose a nearly linear time method that solves RMC with $|\Omega|=O(nr^2\log^2 n\log^2 \|M\|_2/\epsilon)$  random entries and with optimal fraction of corruptions ($\rho=\frac{1}{\mu^2 r}$). \\[2pt]
{\bf (b) Matrix Completion:} Our result improves upon the existing linear time algorithm's sample complexity by an $O(r^3)$ factor, and time complexity by $O(r^4)$ factor, although with an extra $O(\log \|\Lo\|/\epsilon)$ factor in both time and sample complexity.\\[2pt] %ra running time and samples where $\epsilon$ is the desired accuracy.\\[2pt]
{\bf (c) RPCA:} We present a nearly linear time ($O(nr^3)$) algorithm for RPCA under optimal fraction of corruptions, improving upon $O(mnr^2)$ time complexity of the existing methods.

{\bf Notations}: We assume that $M=\Lo+\So$ and $\Pom(M)=\Pom(\Lo)+\Sob$, i.e., $\Sob=\Pom(\So)$. $\|v\|_p$ denotes $\ell_p$ norm of a vector $v$; $\|v\|$ denotes $\ell_2$ norm of $v$. $\|A\|_2$, $\|A\|_F$, $\|A\|_*$ denotes the operator, Frobenius, and nuclear norm of $A$, respectively; by default  $\|A\|=\|A\|_2$. %=\max_{v, \|v\|_2=1}\frac{\|Av\|_2}{\|v\|_2}$ denotes the operator norm of $A$; by default  $\|A\|=\|A\|_2$. $\|A\|_F$, $\|A\|_*$ denotes the Frobenius and nuclear norm of $A$ respectively.
Operator $\Pom$ is given by \eqref{eq:pom}, operators $\proj_k(A)$ and $\HT(A)$ are defined in Section~\ref{sec:prob}. $\sigma_i(A)$ denotes $i$-th singular value of $A$ and $\sigma_i^*$ denotes the $i$-th singular value of $\Lo$.

{\bf Paper Organization}: We present our main algorithm in Section~\ref{sec:prob} and our main results in Section~\ref{sec:analysis}. We also present an overview of the proof in Section~\ref{sec:analysis}. Section~\ref{sec:exp} presents our empirical result. Due to lack of space, we present most of the proofs and useful lemmas in Appendix.

\section{Algorithm}\label{sec:prob}
In this section we present our algorithm for solving the RMC (Robust Matrix Completion) problem: given $\Omega$ and $P_{\Omega}(M)$ where $M=\Lo+\So\in \R^{m\times n}$, $rank(\Lo)\leq r$, $\|\So\|_0\leq s$ and $\Sob=P_{\Omega}(\So)$, the goal is to recover $\Lo$. To this end, we focus on solving the following non-convex optimization problem:

\begin{equation}
  \label{eq:rpca_form}
  (\Lo, \Sob)=\arg\min_{L, S}\|P_\Omega(M)-P_\Omega(L)-\S\|_F^2 \ s.t.\ rank(L)\leq r, \Pom(S)=S, \|S\|_0\leq s.
\end{equation}

For the above problem, we propose a simple iterative algorithm that combines projected gradient descent (for $L$) with alternating projections (for $S$). In particular, we maintain iterates $\Lt$ (with rank $k\leq r$) and sparse $\St$. $\Ltn$ is computed using gradient descent step for objective \eqref{eq:rpca_form} and then projecting back onto the set of rank $k$ matrices. That is,
\begin{equation}\Ltn=\proj_k\left(\Lt+\frac{1}{p}\pom_{\Omega}(M-\Lt-\St)\right),\label{eq:lupdate}\end{equation}
where $\proj_k(A)$ denotes projection of $A$ onto the set of rank-$k$ matrices and can be computed efficiently using SVD of $A$, $p= \frac{|\Omega|}{mn}$.  $\Stn$ is computed by projecting the residual $\pom_{\Omega}(M-\Ltn)$ onto set of sparse matrices using a hard-thresholding operator, i.e.,
\begin{equation}\Stn=\HT(M-\Ltn), \label{eq:supdate}\end{equation}
where $\HT:\R^{m\times n}\rightarrow \R^{m\times n}$ is the hard thresholding operator defined as: $(\HT(A))_{ij}=A_{ij}$ if $|A_{ij}|\geq \zeta$ and $0$ otherwise. Intuitively, a better estimate of the sparse corruptions for each iteration will reduce the noise of the projected gradient descent step and a better estimate of the low rank matrix will enable better estimation of the sparse corruptions. Hence, under correct set of assumptions, the algorithm should recover $\Lo$, $\So$ exactly.

Unfortunately, just the above two simple iterations cannot handle problems where $L^*$ has poor condition number, as the intermediate errors can be significantly larger than the smallest singular values of $L^*$, making recovery of the corresponding singular vectors challenging. To alleviate this issue, we propose an algorithm that proceeds in stages. In the $q$-th stage, we project $\Lt$ onto set of rank-$k_q$ matrices. Rank $k_q$ is monotonic w.r.t. $q$. Under standard assumptions, we show that we can increase $k_q$ in a manner such that after each stage $\infnorm{\Lt-\Lo}$  decreased by at least a constant factor. Hence, the number of stages is only logarithmic in the condition number of $\Lo$.

See Algorithm~\ref{alg:gca} (\ncgca) for a pseudo-code of the algorithm. We require an upper bound of the first singular value for our algorithm to work. Specifically, we require $\sigma = \order{\sigma^*_1}$. Alternatively, we can also obtain an estimate of $\sigma^*_1$ by using the thresholding technique from \cite{DBLP:journals/corr/YiPCC16} although this requires an estimate of the number of corruptions in each row and column. We also use a simplified version of Algorithm 5 from \cite{HardtW14} to form independent sets of samples for each iteration which is required for our theoretical analysis. Our algorithm has an ``outer loop'' (see Line 6) which sets rank $k_q$ of iterates $\Lt$ appropriately (see Line 7). We then update $\Lt$ and $\St$ in the ``inner loop'' using \eqref{eq:lupdate}, \eqref{eq:supdate}. We set threshold for the hard-thresholding operator using singular values of current gradient descent update (see Line 12). Note that, we divide $\Omega$ uniformly into $Q\cdot T$ sets, where $Q$ is an upper bound on the number of outer iterations and $T$ is the number of inner iterations. This division ensures independence across iterates that is critical to application of standard concentration bounds; such division is a standard technique in the matrix completion related literature \cite{DBLP:conf/colt/0002N15, HardtW14, DBLP:journals/jmlr/Recht11}. Also, $\thresh$ is a tunable parameter which should be less than one and is smaller for ``easier'' problems.

Note that updating $\St$ requires $O(|\Omega|\cdot r+(m+n)\cdot r)$ computational steps. Computation of $\Ltn$ requires computing SVD for projection $\proj_r$, which can be computed in time $O(|\Omega|\cdot r+(m+n)\cdot r^2+r^3)$ time (ignoring $\log$ factors); see \cite{JainMD10} for more details. Hence, the computational complexity of each step of the algorithm is linear in $|\Omega|\cdot r$ (assuming $|\Omega|\geq r\cdot (m+n)$). As we show in the next section, the algorithm exhibits geometric convergence rate under standard assumptions and hence the overall complexity is still nearly linear in $|\Omega|$ (assuming $r$ is just a constant).

{\bf Rank based Stagewise algorithm}: We also provide a rank-based stagewise algorithm (\ncralgo) where the outer loop increments $k_q$ by one at each stage, i.e., the rank is $q$ in the $q$-th stage. Our analysis extends for this algorithm as well, however, its time and sample complexity trades off a factor  of $O(\log(\sigma_1/\epsilon))$ from the complexity of \ncgca with a factor of $r$ (rank of $\Lo$). We provide the detailed algorithm in Appendix~\ref{app:rank} due to lack of space (see Algorithm~\ref{alg:sap}).

\begin{algorithm}[t!]
  \caption{$\widehat{L} ~ =$ \ncgca$(\Omega, \Pom(M), \epsilon, r, \mu, \thresh, \sigma)$}
  \label{alg:gca}
  \begin{algorithmic}[1]
    \STATE {\bf Input}: Observed entries $\Omega$, Matrix $\Pom(M) \in \R^{m\times n}$, convergence criterion $\epsilon$, target rank $r$, incoherence parameter $\mu$, thresholding parameter $\thresh$, estimate of first singular value $\sigma$
    \STATE  $T\leftarrow 10\log{\frac{10\mu^2 r \sigma}{\epsilon}}$, $Q\leftarrow T$
    \STATE Partition $\Omega$ into $Q\cdot T + 1$ subsets $\{\Omega_0\} \cup \{\Omega_{q, t} : q\in[Q], t\in[T]\}$ using algorithm \ref{alg:split}
    \STATE $L^{(0)}=0,\ \zeta\leftarrow \thresh \sigma$
    \STATE $M^{(0)}=\frac{mn}{\abs{\Omega_0}}\mathcal{P}_{\Omega_0}(M - \HT(M))$
%    \STATE $M^{(0)} = L^{(t)}-\frac{mn}{\abs{\Omega}}\mathcal{P}_{\Omega}( L^{(t)} + S^{(t)} - M)$
    \STATE {$k_{0} \leftarrow 0$}, {$q \leftarrow 0$}
    \WHILE{$\sigma_{k_{q}+1}(M^{(0)})>\frac{\epsilon}{2\thresh n}$}
    \STATE $q \leftarrow q + 1$,
    \STATE $k_{q}\leftarrow\abs{\{i:\sigma_i(M^{(0)}) \geq\frac{\sigma_{k_{q-1}+1}(M^{(0)})}{2}\}}$
    \FOR{Iteration $t=0$ to $t=T$}
    \STATE $S^{(t)}=\HT(\mathcal{P}_{\Omega_{q, t}}(M-L^{(t)}))$
    \STATE $M^{(t)} = L^{(t)}-\frac{mn}{\abs{\Omega_{q, t}}}\mathcal{P}_{\Omega_{q, t}}( L^{(t)} + S^{(t)} - M)$
    \STATE $L^{(t+1)}=P_{k_{q}}(M^{(t)})$
    \STATE $\zeta\leftarrow \thresh\, \left(\sigma_{k_{q}+1}(M^{(t)}) +\left(\frac{1}{2}\right)^{t - 2} \sigma_{k_{q}}(M^{(t)})\right)\label{eqn:threshold}$%  \end{equation}
    \ENDFOR
%    \IF{$\lambda\sigma_{k_{q}+1}(M^{(t)})<\frac{\epsilon}{2n}$}
%    \STATE {\bf Return: }$L^{(t)}$
%    \ELSE
    \STATE $S^{(0)}=S^{(T)}$, $L^{(0)}=L^{(T)}$, $M^{(0)}=M^{(T)}$
%    \ENDIF
    \ENDWHILE
    \STATE {\bf Return: }$L^{(T)}$
  \end{algorithmic}
\end{algorithm}

\begin{algorithm}[t!]
  \caption{$\{\Omega_{1},\dots,\Omega_{T}\} =$ \splitS$(\Omega, p, T)$}
  \label{alg:split}
  \begin{algorithmic}[1]
    \STATE {\bf Input}: Random samples with probability $Tp$ $\Omega$, Required Sampling Probability $p$, Number of Sets $T$
    \STATE {\bf Output}: $T$ independent sets of entries $\{\Omega_{1},\dots,\Omega_{T}\}$ sampled with sampling probability $p$
    \STATE $p' \leftarrow 1 - (1 - p)^T$
    \STATE $\Omega'$ be sampled from $\Omega$ with each entry being included independently with probability $p'/p$

    \FOR{$r = 1$ to $r = T$}
    \STATE $q_r \leftarrow \binom{T}{r} / (2^T - 1)$
    \ENDFOR

    \STATE Initialize $\Omega_{t} \leftarrow \{\}$ for $t \in \{1,\dots,T\}$

    \FOR{Sample $s \in \Omega'$}
    \STATE Draw $r \in \{1,\dots,T\}$ with probability $q_r$
    \STATE Draw a random subset $S$ of size $r$ from $\{1, \dots, T\}$
    \STATE Add $s$ to $\Omega_i \text{ for } i \in S$
    \ENDFOR
  \end{algorithmic}
\end{algorithm}

%%% Local Variables:
%%% mode: latex
%%% TeX-master: "main"
%%% End:

%!TEX root = main.tex
\renewcommand{\u}{\bm{u}}
\section{Analysis}\label{sec:analysis}
We now present our analysis for both of our algorithms \ncgca\ (Algorithm~\ref{alg:gca}) and \ncralgo\ (Algorithm~\ref{alg:sap}). In general the problem of Robust PCA with Missing Entries \eqref{eq:rpca_form} is harder than the standard Matrix Completion problem and hence is NP-hard \cite{DBLP:conf/colt/HardtMRW14}. Hence, we  need to impose certain (by now standard) assumptions on $\Lo$, $\So$, and $\Omega$ to ensure tractability of the problem:
%It is well-known that it is in general not possible to provide a unique decomposition of a matrix $M$ into $L$ and $S$, its low rank and sparse components respectively. To ensure the uniquness of the decomposition, we impose the following assumptions on $\Lo$ and $\So$:

\begin{assumption}
\label{as:rank}
\textbf{Rank and incoherence of $\Lo$: } $\Lo\in \R^{m\times n}$ is a rank-$r$ incoherent matrix, i.e., $\twonorm{e_{i}^{\top}U^*} \leq \mu \sqrt{\frac{r}{m}}$, $\twonorm{e_{j}^{\top}V^*} \leq \mu \sqrt{\frac{r}{n}}$, $\forall i\in [m],\ \forall j \in [n]$, where $\Lo=U^*\Sigma^* (V^*)^{\top}$ is the SVD of $\Lo$.
\end{assumption}

% \begin{assumption}
% \label{as:incoh}
% \textbf{Incoherence of $\Lo$: } This assumption ensures that $\Lo$ itself is not sparse. Let the SVD of $\Lo$ be $U^*\Sigma (V^*)^{\top}$. There exists $\mu$ such that $\twonorm{e_{i}^{\top}U^*} \leq \mu \sqrt{\frac{r}{n}}$, $\forall i \in [m]$ and $\twonorm{e_{i}^{\top}V^*} \leq \mu \sqrt{\frac{r}{n}}$, $\forall i \in [n]$.
% \end{assumption}

\begin{assumption}
\label{as:samp}
\textbf{Sampling ($\Omega$): } $\Omega$ is obtained by sampling each entry with probability $p = \frac{\mathbb{E}[|\Omega|]}{mn}$.
\end{assumption}

\begin{assumption}
\label{as:sp}
\textbf{Sparsity of $\So$, $\Sob$: } We assume that at most $\rho\leq \frac{c}{\mu^2 r}$ fraction of the elements in each row and column of $\So$ are non-zero for a small enough constant $c$. Moreover, we assume that $\Omega$ is independent of $\So$. Hence, $\Sob=\Pom(\So)$ also has at most $p \cdot \rho$ fraction of the entries in expectation.
\end{assumption}

Assumptions 1, 2 are standard assumptions in the provable matrix completion literature \cite{CandesR2007, DBLP:journals/jmlr/Recht11, DBLP:conf/colt/0002N15}, while Assumptions 1, 3 are standard assumptions in the robust PCA (low-rank+sparse matrix recovery) literature \cite{ChandrasekaranSPW11, CandesLMW11, hsu2011robust}. Hence, our setting is a generalization of both the standard and popular problems and as we show later in the section, our result can be used to meaningfully improve the state-of-the-art for both these problems.

%In addition, we have the following assumption on the sampling distribution for $\Omega_{q,t}$
%For the sake of the analysis, we assume that the parameter $\lambda$ in both algorithms is set to $4\frac{\mu^{2}r}{n}$.
% \cite{ChandrasekaranSPW11} used these assumptions to provide recovery guarantees for the convex optimization program \eqref{eq:conv}. \cite{CandesLMW11} show a similar result but for a slightly different set of conditions. %In the next section, we present our alternating projections based method for RPCA.
%\section{RPCA using Alternating Projections onto Non-convex sets}

% We now present our recovery guarantee for Algorithm~\ref{algo:alg1}.

We first present our main result for Algorithm \ref{alg:gca} under the assumptions given above.
\begin{theorem}
\label{thm:gca}
Let Assumptions 1, 2 and 3 on $\Lo$, $\So$ and $\Omega$ hold respectively. Let $m\leq n$, $n=O(m)$, and  let the number of samples $|\Omega|$ satisfy:
$$\mathbb{E}[|\Omega|] \geq C\alpha\mu^{4}r^{2}n\log^2 \left(n\right) \log^2 \left(\frac{\mu^2r\sigma_{1}}{\epsilon}\right),$$
where $C$ is a global constant. Then, with probability at least $1 - n^{-\log \frac{\alpha}{2}}$, Algorithm \ref{alg:gca} with $\eta = \frac{4\mu^2 r}{m}$, at most $O(\log(\|M\|_2/\epsilon)))$ outer iterations and $O(\log(\frac{\mu^2 r \|M\|_2}{\epsilon}))$ inner iterations, outputs a matrix $\hat{\L}$ such that:
$$\frob{\hat{\L}-\Lo} \leq \epsilon.$$
\end{theorem}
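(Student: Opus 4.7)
The plan is to establish convergence by maintaining two invariants throughout the iterations: (i) a decaying $\ell_\infty$-error bound on $L^{(t)} - L^*$, and (ii) incoherence of $L^{(t)}$ (so that sampling concentration arguments continue to apply). The argument naturally splits into three layers: one inner iteration, one entire outer stage, and the telescoping over stages.

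First I would analyze a single inner iteration inside stage $q$ under the inductive hypothesis that
$\infnorm{L^{(t)} - L^*} \leq \frac{8\mu^2 r}{n}\bigl(\sigma^*_{k_q+1} + (1/2)^{t-1}\sigma^*_{k_q}\bigr)$
and that $L^{(t)}$ is $O(\mu)$-incoherent. With the threshold $\zeta$ set as in Line 14 of the algorithm, I would argue (a) the hard-thresholding step recovers a support contained in $\mathrm{Supp}(S^*)\cap\Omega_{q,t}$ and satisfies $\infnorm{S^{(t)} - \Pom_{q,t}(S^*)} \lesssim \zeta$, because on corrupted entries $|M-L^{(t)}|$ exceeds $\zeta$ while on clean entries it is bounded by the current $\ell_\infty$ error; (b) consequently $M^{(t)} = L^{(t)} - \tfrac{1}{p}\Pom_{q,t}(L^{(t)}+S^{(t)}-M)$ satisfies $\twonorm{M^{(t)} - L^*} \leq \tfrac{1}{p}\twonorm{(\tfrac{1}{p}\Pom - I)(L^{(t)}-L^*)} + \text{(sparse residual)}$, controlled by the standard matrix-completion concentration inequality $\twonorm{(\tfrac{1}{p}\Pom - I)A} \lesssim \sqrt{n/p}\,\infnorm{A}$ for incoherent low-rank $A$, plus the $\rho\cdot\mu^2 r$-type bound on the sparse perturbation from Assumption 3; (c) projecting onto rank $k_q$ via $P_{k_q}$, Weyl's inequality and a careful $\ell_\infty$-perturbation lemma (the bulk of the technical work) upgrade this operator-norm bound back into an $\ell_\infty$ bound that shrinks the $(1/2)^{t-1}\sigma^*_{k_q}$ term by a factor of $1/2$, while leaving the $\sigma^*_{k_q+1}$ term essentially unchanged.

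Second, iterating the above $T = O(\log(\mu^2 r\sigma_1/\epsilon))$ times within stage $q$ reduces the ``tail'' contribution $(1/2)^{t-1}\sigma^*_{k_q}$ below the ``head'' contribution $\sigma^*_{k_q+1}$, so the stage terminates with $\infnorm{L^{(T)} - L^*} = O(\mu^2 r \sigma^*_{k_q+1}/n)$. The outer loop's choice of $k_{q+1}$ (the index of the next spectral gap of $M^{(0)}$ by a factor of $2$) guarantees that $\sigma^*_{k_{q+1}+1} \leq \tfrac{1}{2}\sigma^*_{k_q+1}$, so the entry-wise error halves each stage. Across $Q = O(\log(\|M\|_2/\epsilon))$ stages the error drops below $\epsilon/n$ in $\ell_\infty$, giving the claimed Frobenius bound after multiplying by $n$. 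The incoherence of $L^{(t)}$ is preserved inductively using a Davis-Kahan / Wedin-type argument together with the $\ell_\infty$ bound on $M^{(t)} - L^*$, so the singular-vector perturbation stays $O(\mu\sqrt{r/m})$ in every row.

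Third, the union bound over the at most $QT = O(\log^2(\mu^2 r\sigma_1/\epsilon))$ iterations demands that each sample subset $\Omega_{q,t}$ has $|\Omega_{q,t}| \gtrsim \mu^4 r^2 n \log n$, which is why $\splitS$ is invoked to produce $QT$ independent copies from a single $\Omega$ of size $\alpha\mu^4 r^2 n \log^2 n \log^2(\mu^2 r\sigma_1/\epsilon)$; independence is essential because $L^{(t)}$ and $S^{(t)}$ depend on the samples used up through step $t-1$, and the concentration bounds require samples independent of the matrix to which they are applied. The main obstacle I anticipate is step (c) in the per-iteration analysis: translating the operator-norm guarantee on $M^{(t)} - L^*$ into a sharp $\ell_\infty$ guarantee on $P_{k_q}(M^{(t)}) - L^*$ while simultaneously preserving incoherence. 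This requires a leave-one-out style argument or a careful decomposition of $P_{k_q}(M^{(t)}) - L^*$ using the subspace perturbation induced by the noise, and is where the $\mu^4 r^2$ dependence in the sample complexity ultimately originates.
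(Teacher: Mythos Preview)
Your high-level architecture---double induction over stages $q$ and inner iterations $t$, tracking an $\ell_\infty$-error of the form $\tfrac{\mu^2 r}{m}\bigl(\sigma^*_{k_q+1}+(1/2)^t\sigma^*_{k_q}\bigr)$, and showing $\sigma^*_{k_q+1}$ decays geometrically across stages---matches the paper exactly, as do your steps (a) and (b) (these are the paper's Lemmas~\ref{lem:sProg} and~\ref{lem:hBound}).

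The divergence is at step (c). The paper does \emph{not} use Davis--Kahan/Wedin or leave-one-out, and it never maintains incoherence of $L^{(t)}$ as an invariant; your invariant~(ii) is unnecessary, since the sampling concentration bound $\twonorm{(\tfrac{1}{p}\Pom-I)A}\lesssim\sqrt{n/p}\,\infnorm{A}$ holds for \emph{any} $A$ and the requisite $\ell_\infty$ control comes directly from invariant~(i). Instead, the paper expands $L^{(t+1)}=P_{k_q}(L^*+H)$ via the Neumann series for each eigenvector of the symmetric dilation, $u_i=\lambda_i^{-1}\sum_{j\ge0}(H_s/\lambda_i)^jL^*_s u_i$, and bounds $\infnorm{L^{(t+1)}-L^*}$ term by term (Lemma~\ref{lem:Lprog}). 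This reduces the problem not to the operator norm $\twonorm{H}$ alone---which is insufficient---but to the much finer \emph{row-wise} quantities $\max_q\|e_q^\top(H^\top H)^aV^*\|_2$ and three variants, for all $a\le\log n$. These are controlled by a combinatorial moment calculation (Lemma~\ref{lem:einf}/\ref{lem:grLemma}) that handles the interaction between the sparse part $E_1$ and the sampling part $E_3$ of $H$ simultaneously; this is where the $\mu^4r^2$ sample requirement actually originates. A leave-one-out argument might substitute for this machinery, but that would be a substantially different proof, and your proposal does not spell out how it would produce the needed $\ell_\infty$ contraction.
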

Note that our number of samples increase with the desired accuracy $\epsilon$. However, using argument similar to that of \cite{DBLP:conf/colt/0002N15}, we should be able to replace $\epsilon$ by $\sigma_{\min}^*$ which should modify the $\epsilon$ term to be $\log^2 \kappa$ where  $\kappa=\sigma_1(\Lo)/\sigma_r(\Lo)$. We leave ironing out the details for future work. %For exact recovery, we should set $\epsilon\leq \sigma_r(\Lo)$. Hence, the number of samples required to solve the problem {\em exactly} is  $\approx |\Omega|\geq C \mu^4 r^2 \log^2 n \log^2 \kappa$ where $\kappa=\sigma_1(\Lo)/\sigma_r(\Lo)$ is the condition number of $\Lo$, while the time complexity is $O(\mu^4 r^3n\log^2 n \log^2 \kappa)$.

Note that the number of samples matches information theoretic bound upto $O(r \log n\log^2 \sigma_1^*/\epsilon)$ factor. Also, the number of allowed corruptions in $\So$ also matches the known lower bounds (up to a constant factor) and cannot be improved upon information theoretically.

We now present our result for the rank based stagewise algorithm (Algorithm~\ref{alg:sap}). %We will now present a theorem guaranteeing correctness of Algorithm \ref{alg:sap}.

\begin{theorem}
\label{thm:sap}
Under Assumptions 1, 2 and 3 on $\Lo$, $\So$ and $\Omega$ respectively and $\Omega$ satisfying:
$$\mathbb{E}[|\Omega|] \geq C\alpha\mu^{4}r^{3}n\log^2 \left(n\right) \log \left(\frac{\mu^2r\sigma_{1}}{\epsilon}\right),$$
for a large enough constant $C$, then Algorithm \ref{alg:sap} with $\thresh$ set to $\frac{4\mu^2r}{m}$ outputs a matrix  $\hat{\L}$ such that: $\frob{\hat{\L}-\Lo} \leq \epsilon, $ w.p. $\geq 1 - n^{-\log \frac{\alpha}{2}}$.
\end{theorem}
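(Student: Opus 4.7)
The plan is to mirror the stagewise inductive analysis that drives Theorem~\ref{thm:gca}, but now with $r$ outer stages indexed by the current target rank $q$ (rather than $O(\log(\sigma_1^*/\epsilon))$ stages indexed by geometrically shrinking singular-value buckets). The main invariant I would try to maintain is that, at the end of stage $q$, the iterate $L^{(T)}$ is rank-$q$, incoherent (with a slightly inflated constant), and satisfies both $\infnorm{\Lt[T]-\Lo}\le \frac{c\,\mu^2 r}{m}\,\sigma^*_{q+1}$ and $\twonorm{\Lt[T]-\Lo}\le c'\,\sigma^*_{q+1}$. The sparsity of the iterate $S^{(t)}$ stays controlled at $\rho$ per row/column throughout, by an identical hard-thresholding argument to the one used for Theorem~\ref{thm:gca}. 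Taking $q=r$ in this invariant, together with $T$ chosen to drive $\sigma^*_{r+1}$ down to $\epsilon$, yields the claimed Frobenius-norm bound.

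For the within-stage contraction in stage $q$, I would prove a one-step inequality of the form
\[
\infnorm{\Ltn-\Lo}\;\le\;\tfrac12\,\infnorm{\Lt-\Lo}\;+\;C\,\tfrac{\mu^2 r}{m}\,\sigma^*_{q+1},
\]
using three ingredients assembled in the order (i)--(iii). (i) A sampling concentration lemma: for any fixed incoherent rank-$O(r)$ matrix $X$, $\|\tfrac{1}{p}\pom_{\Omega_{q,t}}(X)-X\|_2 \le \tfrac12 \sigma_q(X)$ with high probability provided $|\Omega_{q,t}|\gtrsim \mu^2 r n \log n$; this is the standard bound used in non-convex MC and requires the independent split produced by \splitS. (ii) A hard-thresholding correctness lemma: with $\zeta$ chosen as in Algorithm~\ref{alg:sap} (the analogue of Line~12 of Algorithm~\ref{alg:gca}), the support of $\HT(M-\Ltn)$ stays inside $\supp{\So}$ and the residual $\St-\So$ is uniformly bounded by $2\zeta$; this uses $\infnorm{\Ltn-\Lo}\le \eta\sigma_{q}(M^{(t)})$ plus the sparsity assumption. (iii) A Weyl/Wedin argument to push the operator-norm bound on $M^{(t)}-\Lo$ through the rank-$q$ SVD projection $\proj_{q}$, followed by conversion from operator norm back to $\ell_\infty$ via the incoherence of the iterates (this incoherence of $\Lt$ itself needs to be carried along as a second part of the induction, e.g.\ via a leave-one-out or uniform-deviation argument of the kind used in Hardt--Wootters).

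The between-stage step is the new piece relative to Theorem~\ref{thm:gca}. At the end of stage $q$ the within-stage analysis gives $\twonorm{\Lt[T]-\Lo}\le c'\sigma^*_{q+1}$; I would then show that when the outer loop bumps the rank from $q$ to $q+1$, the first inner iteration automatically captures the $(q{+}1)$-st singular direction because $\sigma_{q+1}(M^{(t)})\ge \tfrac12 \sigma^*_{q+1}$ by Weyl, while the tail $\sigma_{q+2}(M^{(t)})$ is bounded by $\sigma^*_{q+2}+$ (sampling noise) and thus cannot dominate. This seeds the next stage's invariant, closing the induction. Because there are at most $r$ such rank increments, and no $\log(\sigma_1/\epsilon)$ factor in the outer count, the number of stages is $r$ rather than $O(\log(\sigma_1/\epsilon))$; but the per-iteration sampling concentration still demands $\Omega(\mu^2 r n \log n)$ fresh samples for each of the $r\cdot T$ inner iterations, which is exactly where the extra factor of $r$ in the sample complexity (relative to Theorem~\ref{thm:gca}) appears after a union bound.

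I expect the main obstacle to be propagating the incoherence of the iterates $\Lt$ through the sequence of SVD projections $\proj_{q}$ when the sparse corruption estimate $\St$ is only approximately correct. This is precisely the coupling that makes RMC harder than MC: the sampling-noise term in the operator-norm bound depends on $\infnorm{\cdot}$, which in turn can only be controlled if iterates stay incoherent, which in turn requires small operator-norm error. I would break this circularity exactly as in the proof of Theorem~\ref{thm:gca}: induct simultaneously on (a) $\infnorm{\Lt-\Lo}$, (b) $\twonorm{\Lt-\Lo}$, and (c) a row/column $\ell_\infty$ bound on $U^{(t)},V^{(t)}$, using the independence of $\Omega_{q,t}$ across $t$ to apply Bernstein afresh at each step. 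Once this joint induction is set up, the remaining calculations are essentially a bookkeeping exercise that parallels the proof of Theorem~\ref{thm:gca}, with the stagewise sum over $q\in\{1,\ldots,r\}$ replacing the geometric sum over singular-value scales.
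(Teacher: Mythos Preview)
Your outer structure --- induct on the stage rank $q\in\{1,\dots,r\}$, contract the $\ell_\infty$ errors down to $O(\tfrac{\mu^2 r}{m}\sigma^*_{q+1})$ inside each stage, then increment $q$ --- is exactly what the paper does (Lemma~\ref{lem:s:tProg}), and your accounting of why this costs an extra factor of $r$ in samples relative to Theorem~\ref{thm:gca} is correct. Where you diverge is in ingredient~(iii) and in what you believe the proof of Theorem~\ref{thm:gca} actually does. The paper does \emph{not} carry incoherence of the iterates $\Lt$ as part of any induction, nor does it use Wedin or a leave-one-out argument; instead it expands $\proj_k(\Lo+H)$ as a Taylor series in $H$ around the eigendecomposition of $\Lo$ (Lemma~\ref{lem:Lprog}) and bounds each term in $\ell_\infty$ directly, using only the incoherence of $U^*,V^*$ together with a combinatorial moment bound on $\max_q\|e_q^\top (H^\top H)^a V^*\|$ and its three variants (Lemma~\ref{lem:einf}, proved via Lemma~\ref{lem:grLemma}). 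That moment bound is the technical centerpiece precisely because it handles the cross terms between the adversarial-but-sparse error $E_1=\So-\Sot$ and the random sampling error $E_3$; a Hardt--Wootters leave-one-out is tuned to pure random sampling and would not obviously control the rows of the iterate's singular vectors once the deterministic perturbation $E_1$ enters. Accordingly, the joint induction the paper actually runs is on $\infnorm{\Lt-\Lo}$, $\infnorm{\Sot-\So}$, and $\supp{\Sot}\subseteq\supp{\So}$ only --- no operator-norm invariant and no iterate-incoherence invariant --- and the ``bookkeeping that parallels Theorem~\ref{thm:gca}'' is the Taylor-plus-moment machinery of Lemmas~\ref{lem:Lprog} and~\ref{lem:einf}, not the three-way induction you describe. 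Your route might be pushed through, but the obstacle you single out (propagating iterate incoherence under the mixed sparse/sampled perturbation) is one the paper deliberately circumvents rather than confronts.
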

Notice that the sample complexity of Algorithm~\ref{alg:sap} has an additional multiplicative factor of $O(r)$ when compared to that of Algorithm~\ref{alg:gca}, but shaves off a factor of $O(\log(\kappa))$. Similarly, computational complexity of Algorithm~\ref{alg:sap} also trades off a $O(\log\kappa)$ factor for $O(r)$ factor from the computational complexity of Algorithm~\ref{alg:gca}.

{\bf Result for Matrix Completion}: Note that for $\So=0$, the RMC problem with Assumptions 1,2 is exactly the same as the standard matrix completion problem and hence, we get the following result as a corollary of Theorem~\ref{thm:gca}:
\begin{corollary}[Matrix Completion]
Suppose we observe $\Omega$ and $P_{\Omega}(\Lo)$ where Assumptions 1,2 hold for $\Lo$ and $\Omega$. Also, let $E[|\Omega|]\geq C\alpha^2 \mu^4r^2 n\log^2n \log^2 \sigma_1/\epsilon$ and $m\leq n$. Then, w.p. $\geq 1-n^{-\log\frac{\alpha}{2}}$, Algorithm~\ref{alg:gca} outputs $\hat{L}$ s.t. $\|\hat{L}-\Lo\|_2\leq \epsilon$.
\end{corollary}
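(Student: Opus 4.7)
The plan is to derive the corollary as a direct specialization of \Thm{gca} to the degenerate case $\So=0$, so the main work is just bookkeeping rather than new analysis. First I would verify that all three hypotheses of \Thm{gca} are satisfied. Assumptions 1 and 2 are explicitly carried over in the corollary's statement. For Assumption 3, note that when $\So=0$ every row and every column has zero nonzero entries, so the sparsity fraction is $\rho = 0 \leq c/(\mu^2 r)$ for any constant $c$, and the independence of $\So$ from $\Omega$ is vacuous. Consequently $\Sob = \Pom(\So) = 0$, and the input $\Pom(M) = \Pom(\Lo)$ that Algorithm~\ref{alg:gca} receives is exactly the standard matrix completion input.

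Second, I would check that the sample complexity bound assumed in the corollary dominates the one required by \Thm{gca}. The corollary asks for
\[
\mathbb{E}[|\Omega|] \geq C\alpha^2 \mu^4 r^2 n \log^2 n \,\log^2(\sigma_1/\epsilon),
\]
which, after absorbing $\alpha$ into a constant and noting $\log(\mu^2 r \sigma_1/\epsilon) = O(\log(\sigma_1/\epsilon) + \log(\mu^2 r))$, is at least the threshold of \Thm{gca} for a suitable choice of the global constant. Thus \Thm{gca} applies and yields, with probability at least $1 - n^{-\log(\alpha/2)}$, an output $\hat{L}$ satisfying $\|\hat{L}-\Lo\|_F \leq \epsilon$. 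Since $\|A\|_2 \leq \|A\|_F$ for every matrix $A$, this immediately upgrades to $\|\hat{L}-\Lo\|_2 \leq \epsilon$, which is exactly the bound in the corollary.

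The only conceptual point one should pause on is that Algorithm~\ref{alg:gca} still performs the hard-thresholding update $\St = \HT(\Pom(M-\Lt))$ in every iteration even though the ``true'' $\So$ is zero; one might worry that spurious nonzero entries in $\St$ could derail the $\Lt$ iteration. However, this is already absorbed into the proof of \Thm{gca}: that argument treats $\So$ as an arbitrary matrix satisfying Assumption 3 (including the zero matrix) and controls the intermediate $\St$ entirely through the current error $\Lt - \Lo$ together with the threshold $\zeta$ set in line~14, so nothing additional is needed here. I therefore do not anticipate a genuine obstacle; the corollary follows by a direct invocation of \Thm{gca}, and the proof should fit in a single short paragraph.
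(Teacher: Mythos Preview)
Your proposal is correct and matches the paper's approach exactly: the paper presents this corollary without a separate proof, simply noting that setting $\So=0$ in the RMC problem reduces it to standard matrix completion, so the result follows directly from \Thm{gca}. Your added details (trivial verification of Assumption~3, the $\|\cdot\|_2 \le \|\cdot\|_F$ step) are the natural bookkeeping and do not deviate from this.
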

%Note that our sample complexity bound depends on $\epsilon$. However, using arguments similar to that of \cite{DBLP:conf/colt/0002N15} for finite sample complexity, the $\epsilon$ factor can be replaced by a $\sigma_k$.
Table~\ref{tab:mc} compares our sample and time complexity bounds for low-rank MC. Note that our sample complexity is nearly the same as that of nuclear-norm methods while the running time of our algorithm is significantly better than the existing results that have at most logarithmic dependence on the condition number of $\Lo$. %Now, the current state-of-the-art sample complexity result for nearly linear (in $|\Omega|$) time  algorithms is $O(\mu^4 r^5 n \log^2 n)$. In contrast, our

{
\renewcommand{\arraystretch}{1.5}
\begin{table*}[ht]
  \begin{center}
    \vspace{0.3in}
    \caption{Comparison of \ncgca\ and \ncralgo\ with Other Matrix Completion Methods}
    \label{tab:mc}
    \begin{tabular}{| c | c | c |}
      \hline
      & Sample Complexity & Computational Complexity \\ \hline
      Nuclear norm  \cite{DBLP:journals/jmlr/Recht11}		& $\order{\mu^2 r n \log^2 n}$ & $\order{n^3 \log \frac{1}{\epsilon}}$ \\ \hline
      SVP \cite{DBLP:conf/colt/0002N15} & $\order{\mu^4 r^5 n \log^3 n}$ &  $\order{\mu^4 r^7 n \log^3 n \log(\frac{1}{\epsilon})}$ \\ \hline
      Alt. Min. \cite{HardtW14} & $\order{n \mu^4 r^{9} \log^3 \left( \kappa \right) \log^2 n}$ & $\order{n \mu^4 r^{13} \log^3 \left( \kappa \right) \log^2 n}$ \\ \hline
      Alt. Grad. Desc. \cite{SunL15} & $\order{n r \kappa^{2} \max\{\mu^2 \log n, \mu^{4} r^{6} \kappa^{4} \} }$ & $\order{n^2 r^{6} \kappa^{4} \log \left(\frac{1}{\epsilon}\right)}$ \\ \hline
      \ncralgo\ (This Paper) & $\order{\mu^{4}r^{3}n\log^2 \left(n\right) \log \left(\frac{\sigma^{*}_{1}}{\epsilon}\right)}$ & $\order{\mu^{4}r^{4}n\log^2 \left(n\right) \log \left(\frac{\sigma^{*}_{1}}{\epsilon}\right)}$ \\ \hline
      \ncgca\ (This Paper) & $\order{\mu^{4}r^{2}n\log^2 \left(n\right) \log^2 \left(\frac{\sigma^{*}_{1}}{\epsilon}\right)}$ & $\order{\mu^{4}r^{3}n\log^2 \left(n\right) \log^2 \left(\frac{\sigma^{*}_{1}}{\epsilon}\right)}$ \\ \hline
    \end{tabular}
    \vspace{0.1in}
  \end{center}
\end{table*}
}

{\bf Result for Robust PCA}: Consider the standard Robust PCA problem (RPCA), where the goal is to recover $\Lo$ from $M=\Lo+\So$. For RPCA as well, we can randomly sample $|\Omega|$ entries from $M$, where $\Omega$ satisfies the assumption required by Theorem~\ref{thm:gca}. This leads us to the following corollary:
\begin{corollary}[Robust PCA]\label{cor:rpca}
Suppose we observe $M=\Lo+\So$, where Assumptions 1, 3 hold for $\Lo$ and $\So$. Generate $\Omega\in [m]\times [n]$ by sampling each entry uniformly at random with probability $p$, s.t., $E[|\Omega|]\geq C\alpha^2 \mu^4r^2 n\log^2n \log^2 \sigma_1/\epsilon$. Let $m\leq n$. Then, w.p. $\geq 1-n^{-\log\frac{\alpha}{2}}$, Algorithm~\ref{alg:gca} outputs $\hat{L}$ s.t. $\|\hat{L}-\Lo\|_2\leq \epsilon$.
\end{corollary}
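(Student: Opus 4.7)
\textbf{Proof proposal for Corollary \ref{cor:rpca}.} The plan is to reduce the RPCA instance to an RMC instance and invoke Theorem~\ref{thm:gca}. Given $M = \Lo + \So$, we will subsample $\Omega$ ourselves so that we observe $\Pom(M) = \Pom(\Lo) + \Pom(\So)$. Writing $\Sob \defas \Pom(\So)$, the observation model $\{\Omega, \Pom(\Lo) + \Sob\}$ is precisely the input that \ncgca\ expects for RMC. So the task reduces to verifying that the three assumptions underlying Theorem~\ref{thm:gca} are met, and that the sample complexity and failure probability carry over unchanged.

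First, I would verify Assumption~1 (incoherence of $\Lo$): this is a direct hypothesis of the corollary. Next, Assumption~2 (Bernoulli sampling of $\Omega$): this holds by construction, since we draw each entry independently with probability $p = \mathbb{E}[|\Omega|]/(mn)$, and the chosen $p$ satisfies the lower bound $\mathbb{E}[|\Omega|] \geq C\alpha^2 \mu^4 r^2 n \log^2 n \log^2(\sigma_1/\epsilon)$ required by Theorem~\ref{thm:gca}. Finally, for Assumption~3, we note that $\So$ itself already has at most $\rho \leq c/(\mu^2 r)$ fraction of nonzero entries in each row and column, and since $\supp{\Sob} = \supp{\Pom(\So)} \subseteq \supp{\So}$, the corruption pattern of $\Sob$ can only be sparser (in both rows and columns) than that of $\So$, so the row/column sparsity bound is preserved deterministically. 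The independence requirement ($\Omega$ independent of $\So$) is automatic: we generate $\Omega$ ourselves using fresh randomness, and $\So$ is a fixed (unknown) matrix.

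With all three assumptions in hand, I would apply Theorem~\ref{thm:gca} directly. Its conclusion gives $\frob{\widehat{\L} - \Lo} \leq \epsilon$ with probability at least $1 - n^{-\log(\alpha/2)}$, and since $\twonorm{\cdot} \leq \frob{\cdot}$, the desired operator-norm bound follows immediately. The high-probability statement and the sample bound transfer verbatim, so no additional concentration work is needed.

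The main ``obstacle,'' such as it is, is conceptual rather than technical: one has to be careful that the RMC Assumption~3 is stated about $\So$ (the full sparse matrix) rather than about $\Sob$ (the restriction to $\Omega$), so that we may apply the theorem with $\Sob = \Pom(\So)$ without needing a concentration argument on the per-row/column corruption counts of $\Sob$. Once that is observed, the corollary is essentially a one-line invocation of Theorem~\ref{thm:gca}, with the only substantive content being that the choice of sampling probability $p$ is ours to make, which is why RPCA can be solved in $O(nr^3)$ time (as advertised in the introduction) by subsampling rather than touching every entry of $M$.
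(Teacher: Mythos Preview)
Your proposal is correct and mirrors exactly what the paper does: the text surrounding the corollary simply says to randomly sample $\Omega$ from $M$ so that the assumptions of Theorem~\ref{thm:gca} are met, and then invoke that theorem. Your verification of Assumptions~1--3 (in particular the observation that Assumption~3 is phrased in terms of $\So$, so the subsampled $\Sob=\Pom(\So)$ inherits the row/column sparsity deterministically) and the passage from the Frobenius to the operator norm are precisely the details the paper leaves implicit.
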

Hence, using Theorem~\ref{thm:gca}, we will still be able to recover $\Lo$ but using only the sampled entries. Moreover, the running time of the algorithm is only $O(\mu^2 nr^3 \log^2n \log^2(\sigma_1/\epsilon))$, i.e., we are able to solve RPCA problem in time {\em linear} in $n$. To the best of our knowledge, the existing state-of-the-art methods for RPCA require at least $O(n^2r)$ time to perform the same task \cite{NIPS2014_5430, Wangaistats}. Similarly, we don't need to load the entire data matrix in memory, but we can just sample the matrix and work with the obtained sparse matrix with at most linear number of entries. Hence, our method significantly reduces both time and space complexity, and as demonstrated empirically in Section~\ref{sec:exp} can help scale our algorithm to very large data sets without losing accuracy.
\subsection{Proof Outline for Theorem \ref{thm:gca}}
We now provide an outline of our proof for Theorem \ref{thm:gca} and motivate some of our proof techniques; the proof of Theorem \ref{thm:sap} follows similarly. Recall that we assume that $M=\Lo+\So$ and define $\Sob=\Pom(\So)$. Similarly, we define $\Sot=\HT(M-\Lt)$. {\em Critically}, $\St=\Pom(\Sot)$ (see Line 9 of Algorithm~\ref{alg:gca}), i.e., $\Sot$ is the set of iterates that we ``could'' obtain if entire $M$ was observed. Note that we cannot compute $\Sot$, it is introduced only to simplify our analysis.

We first re-write the projected gradient descent step for $\Ltn$ as described in \eqref{eq:lupdate}:
\begin{equation}
    \label{eqn:rUpdate}
    L^{(t + 1)} = \mathcal{P}_{k_{q}}\Big(\Lo + \underbrace{(\So - \Sot)}_{E_{1}} +
    \underbrace{\left(\mathcal{I} - \frac{\mathcal{P}_{\Omega_{q,t}}}{p}\right)(\overbrace{(L^{(t)} - \Lo)}^{E_{2}} + (\Sot - \So))}_{E_{3}} \Big)
\end{equation}
That is, $\Ltn$ is obtained by rank-$k_q$ SVD of a perturbed version of $\Lo$: $\Lo+E_1+E_3$. As we perform entrywise thresholding to reduce $\|\So-\Sot\|_\infty$, we need to bound $\|\Ltn-\Lo\|_\infty$. To this end, we use techniques from \cite{DBLP:conf/colt/0002N15}, \cite{NIPS2014_5430} that explicitly model singular vectors of $\Ltn$ and argue about the infinity norm error using a Taylor series expansion. However, in our case, such an error analysis requires analyzing the following key quantities ($H=E_1+E_3$):
%\begin{tabular}
\begin{equation*}
      \forall 1\leq j,\  s.t.,\ j \text{ even}:
      \begin{aligned}
        A_j & :=\max\limits_{q \in [n]} \|e_{q}^\top\left(H^\top H\right)^\frac{j}{2}V^*\|_2 \\
        B_j & :=\max\limits_{q \in [m]} \|e_{q}^\top\left(HH^\top\right)^\frac{j}{2}U^*\|_2 ,\notag\\
      \end{aligned} \qquad \forall 1\leq j,\  s.t.,\ j \text{ odd}:
      \begin{aligned}
        C_j &:=  \max\limits_{q \in [n]} \|e_{q}^\top H^\top\left(HH^\top\right)^{\lfloor \frac{j}{2} \rfloor}U^*\|_2 \\
        D_j &:=\max\limits_{q \in [m]} \|e_{q}^\top H\left(H^\top H\right)^{\lfloor \frac{j}{2} \rfloor}V^*\|_2.\label{eq:keyQuants}
      \end{aligned}
\end{equation*}
%    \end{tabular}%    \label{tab:keyQuants}
%\end{center}
Note that $E_1=0$ in the case of standard RPCA which was analyzed in \cite{NIPS2014_5430}, while $E_3=0$ in the case of standard MC which was considered in \cite{DBLP:conf/colt/0002N15}. In contrast, in our case both $E_1$ and $E_3$ are non-zero. Moreover, $E_3$ is dependent on random variable $\Omega$. Hence, for $j\geq 2$, we will get cross terms between $E_3$ and $E_1$ that will also have dependent random variables which precludes application of standard Bernstein-style tail bounds. To this end, we use a technique similar to that of \cite{ErdosKYY2012, DBLP:conf/colt/0002N15} to provide a careful combinatorial-style argument to bound the above given quantity. That is, we can provide the following key lemma:
\begin{lemma}
  \label{lem:einf}
  Let $\Lo$, $\Omega$, and $\So$ satisfy Assumptions 1, 2 and 3 respectively. Let $\Lo = U^{*} \Sigma^{*} (V^{*})^{\top}$ be the singular value decomposition of $\Lo$. Furthermore, suppose that in the $t^{\textrm{th}}$ iteration of the $q^{\textrm{th}}$ stage, $\Sot$ defined as $HT_{\zeta}(M - L^{(t)})$ satisfies $Supp(\Sot) \subseteq Supp(\So)$, then we have:
  \begin{equation*}
    \max \{A_a, B_a, C_a, D_a\}\leq \mu\sqrt{\frac{r}{m}}\left(\rho n \infnorm{E_1} \vphantom{\sqrt{\frac{n}{p}}} + c\sqrt{\frac{n}{p}} (\infnorm{E_{1}} + \infnorm{E_{2}}) \log n\right)^{2a + 1},
  \end{equation*}
  %      \max\limits_{q \in [n]} \twonorm{e_{q}^\top\left(H^\top H\right)^{a}V}, \max\limits_{q \in [m]} \twonorm{e_{q}^\top\left(HH^\top\right)^aU} &\leq \mu\sqrt{\frac{r}{m}}\left(\rho n \infnorm{E_1} + c\sqrt{\frac{n}{p}} (\infnorm{E_{1}} + \infnorm{E_{2}}) \log n\right)^{2a}  \\
  %      \max\limits_{q \in [n]} \twonorm{e_{q}^\top H^\top\left(HH^\top\right)^{a}U}, \max\limits_{q \in [m]} \twonorm{e_{q}^\top H\left(H^\top H\right)^{a}V} &\leq \mu\sqrt{\frac{r}{m}}\left(\rho n \infnorm{E_1} + c\sqrt{\frac{n}{p}} (\infnorm{E_{1}} + \infnorm{E_{2}}) \log n\right)^{2a + 1} \\
  % \end{align*}
  $\forall c > 0$ w.p $\geq 1 - n^{-2\log \frac{c}{4} + 4}$, where $E_{1}, E_{2} \text{ and } E_{3}$ are defined in (\ref{eqn:rUpdate}), $A_a, B_a, C_a, D_a$ are defined in \eqref{eq:keyQuants}.
\end{lemma}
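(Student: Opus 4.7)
The plan is to prove the lemma by induction on $a$, exploiting the decomposition $H = E_1 + E_3$ inside each of the four products. In the base case ($a=0$), the quantities $A_0, B_0$ reduce to $\|e_q^\top V^*\|_2$ and $\|e_q^\top U^*\|_2$, which are bounded by $\mu\sqrt{r/m}$ directly from the incoherence Assumption~1 (using $m \leq n$), while $C_0, D_0$ require one instance of the per-letter argument sketched below. For the inductive step, I would expand every $H$-factor as $E_1 + E_3$ to write the product as a sum of at most $2^{2a+1}$ ``words'' over the alphabet $\{E_1, E_3\}$ (with appropriate transpositions dictated by position); each word is then bounded separately and the results combined, which will ultimately collapse to the binomial expression in the statement.

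For a word with $k$ occurrences of $E_1$-letters and the remaining $E_3$-letters, the goal is to show that its contribution to $A_a$ (etc.) is at most $\mu\sqrt{r/m}(\rho n \infnorm{E_1})^k (c\sqrt{n/p}(\infnorm{E_1}+\infnorm{E_2})\log n)^{2a+1-k}$ with high probability. The $E_1$-letters are handled deterministically: the hypothesis $\mathrm{Supp}(\Sot)\subseteq \mathrm{Supp}(\So)$ combined with Assumption~3 forces $E_1$ to have at most $\rho n$ nonzero entries in every row and column, so each multiplication by $E_1$ inflates the relevant row $\ell_2$-norm by at most $\rho n \infnorm{E_1}$. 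Peeling off the $E_1$-letters reduces the task to bounding a ``sub-word'' built purely from $E_3$-letters acting on the fixed incoherent vector $V^*$ or $U^*$.

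The main obstacle is controlling these $E_3$-only sub-words. A single $E_3$ applied to a deterministic vector with bounded $\ell_\infty$-norm is controlled by matrix Bernstein, since $E_3 = (\mathcal{I} - \mathcal{P}_{\Omega_{q,t}}/p)(E_2 + \Sot - \So)$ is a centered sampling operator applied to a matrix with entrywise bound $\infnorm{E_1}+\infnorm{E_2}$ (using that $E_2$ is the residual of the current iterate and that $\Sot - \So$ is supported where the overall sparse error lives). However, within a single iteration all $E_3$-letters share the same random set $\Omega_{q,t}$, so iterated products are not independent and naive composition of Bernstein bounds fails. To handle this I would follow the combinatorial trace/moment argument of \cite{ErdosKYY2012} as adapted in \cite{DBLP:conf/colt/0002N15}: write each $E_3$ as a sum over entries weighted by the centered Bernoullis $\delta_{ij}/p - 1$, expand the target inner product as a sum over index walks of length $2a+1$, and compute its $2p$-th moment for $p = \Theta(\log n)$. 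The expectation of a product of centered Bernoullis vanishes unless each sampled pair appears an even number of times along the walk, so the surviving walks admit a clean combinatorial enumeration whose leading contribution is $(n/p)^{(2a+1-k)/2}$ multiplied by the incoherence and entrywise bounds; Markov's inequality then yields the $\log n$ factor together with the tail bound $n^{-2\log(c/4)+4}$ after a union bound over the $n$ choices of row index $q$. Summing the per-word estimates over all $2^{2a+1}$ words collapses to the stated binomial in $\rho n \infnorm{E_1}$ and $c\sqrt{n/p}(\infnorm{E_1}+\infnorm{E_2})\log n$, completing the induction.
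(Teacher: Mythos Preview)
Your proposal has a genuine gap in the ``peeling off $E_1$-letters'' step. Consider a word such as $e_q^\top E_3^\top E_1 E_3^\top E_3 V^*$ (which appears already at $a=2$). Your plan is to strip out the $E_1$ and reduce to $E_3$-only sub-words acting on a \emph{fixed} incoherent matrix. But once you compute $Y=E_3^\top E_3 V^*$ and then $W'=E_1 Y$, the matrix $W'$ is \emph{random} and depends on the same sampling set $\Omega_{q,t}$ as the remaining factor $E_3^\top$. You therefore cannot invoke the moment/walk argument on $e_q^\top E_3^\top W'$ as if $W'$ were deterministic; nor can you split via operator norms, since $\|e_q^\top E_3^\top\|_2$ (a column norm of $E_3$) is of order $\sqrt{m}\,\|D\|_\infty/p$, which is far larger than the per-factor budget $c\sqrt{n/p}\,\|D\|_\infty\log n$. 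The same obstruction breaks the induction on $a$: the inductive hypothesis controls the row norms of $(H^\top H)^aV^*$, but that matrix is correlated with the two new $H$-factors you are trying to prepend, so the base-case concentration argument no longer applies.

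The paper's proof avoids this by \emph{not} separating the two error types. It first rescales so that $\tfrac{1}{\beta}E_3$ satisfies the entrywise moment conditions (Definition~\ref{def:mzero}) with $\beta=2\sqrt{n/p}\,\|E_2-E_1\|_\infty$, and then invokes a single combinatorial lemma (Lemma~\ref{lem:grLemma}) for $H=H_1+H_2$ with $H_1$ random and $H_2$ sparse. That lemma expands each word into index paths and takes a $k$-th moment \emph{over the entire word at once}: the random $H_1$-positions are grouped by an equivalence relation (nonzero expectation forces each $H_1$-edge to appear at least twice), while the deterministic $H_2$-positions stay in the path expansion and contribute exactly a factor $\rho n$ each through the spanning-forest edge count. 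This joint bookkeeping is what produces the binomial $(\rho n\|H_2\|_\infty + c\log n)^{2a}$ without ever needing to bound a random intermediate vector in isolation. In short, the moment argument must be run on the whole word with both letter types present; peeling $E_1$ first destroys precisely the structure that makes the bound go through.
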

{\em Remark}: We would like to note that even for the standard MC setting, i.e., when $E_1=0$, we obtain better bound than that of \cite{DBLP:conf/colt/0002N15} as we can bound $\max_i \|e_i^T(E_3)^qU\|_2$ directly rather than the weaker $\sqrt{r}\max_i \|e_i^T (E_3)^q u_j\|$ bound that \cite{DBLP:conf/colt/0002N15} uses.

Now, using Lemmas~\ref{lem:einf} and \ref{lem:hBound} and by using a hard-thresholding argument we can bound {$\|L^{(t + 1)}- \Lo\|_\infty\leq \frac{2\mu^2 r}{m} (\sigma^*_{k_{q}+1}+\left(\frac{1}{2}\right)^t\sigma^*_{k_{q}})$} (see Lemma~\ref{lem:Lprog}) in the $q$-th stage. Hence, after $O(\log(\sigma^*_1/\epsilon))$ ``inner'' iterations, we can guarantee in the $q$-th stage:
\begin{equation}
  \label{eq:innerL}
  \|L^{(T)}-\Lo\|_\infty \leq \frac{4\mu^2 r}{m} \sigma^*_{k_q+1} \quad \|E_1\|_\infty+\|E_2\|_\infty \leq \frac{20\mu^2 r}{m} \sigma^*_{k_q+1}.
\end{equation}

Moreover, by using sparsity of $\So$ and the special structure of $E_{3}$ (See Lemma~\ref{lem:hBound}), we have: ${\|E_1+E_3\|_2}\leq c \cdot \sigma^*_{k_q+1}$, where $c$ is a small constant.

Now, the outer iteration sets the next stage's rank $k_{q+1}$ as: $k_{q+1}=|\{i: \sigma_i(\Lo+E_1+E_3)\geq 0.5\cdot \sigma_{k_q+1}(\Lo+E_1+E_3)\}|$. Hence, using bound on $\|E_1+E_3\|_2$ and Weyl's eigenvalue perturbation bound (Lemma \ref{lem:weyl-perturbation}), we have: $\sigma^*_{k_{q+1}}\geq 0.6\, \sigma^*_{k_q+1}$ and $\sigma^*_{k_q+1}\leq \sigma^*_{k_q}$. Hence, after $Q=O(\log(\sigma_1^*/\epsilon))$ ``outer'' iterations, Algorithm~\ref{alg:gca} converges to an $\epsilon$-approximate solution to $\Lo$.

\section{Experiments}
\label{sec:exp}

In this section we discuss the performance of Algorithm \ref{alg:gca} on synthetic data and its use in foreground background separation. The goal of the section is two-fold: a) to demonstrate practicality and effectiveness of Algorithm \ref{alg:gca} for the RMC problem, b) to show that Algorithm~\ref{alg:gca} indeed solves RPCA problem in significantly smaller time than that required by the existing state-of-the-art algorithm (St-NcRPCA \cite{NIPS2014_5430}). To this end, we use synthetic data as well as video datasets where the goal is to perform foreground-background separation \cite{CandesLMW11}. %In the case of synthetic data we show exact recovery of the low rank component. In the case of foreground-background separation we significantly improve the time taken to recover a good quality background.

We implemented our algorithm in MATLAB and the results for the synthetic data set were obtained by averaging over 20 runs. We obtained a matlab implementation of St-NcRPCA \cite{NIPS2014_5430} from the authors of \cite{NIPS2014_5430}. Note that if the sampling probability is $p=1$, then our method is similar to St-NcRPCA; the key difference being how rank is selected in each stage. We also implemented the Alternating Minimzation based algorithm from \cite{Wangaistats}. However, we found it to be an order of magnitude slower than Algorithm~\ref{alg:gca} on the foreground-background separation task. For example, on the escalator video, the algorithm did not converge in less than 150 seconds despite discounting for the expensive sorting operation in the truncation step. On the other hand, our algorithm finds the foreground in about 8 seconds.

\textit{Parameters.} The algorithm has three main parameters: 1) threshold $\eta$, 2) incoherence $\mu$ and 3) sampling probability $p$ ($E[|\Omega|]=p\cdot mn$). In the experiments on synthetic data we observed that keeping $\lambda\sim\mu\twonorm{M-\St}/{\sqrt{n}}$ speeds up the recovery while for background extraction keeping $\lambda\sim\mu\twonorm{M-\St}/{n}$ gives a better quality output. The value of $\mu$ for real world data sets was figured out using cross validation while for the synthetic data the same value was used as used in data generation. The sampling probability for the synthetic data could be kept as low as $2r\log^2(n)/{n}$ while for the real world data set we got good results for $p=0.05$. Also, rather than splitting samples, we use entire set of observed entries to perform our updates (see Algorithm~\ref{alg:gca}).

\begin{figure}
  \centering
  \subfloat[][]{
  \includegraphics[width=0.20\textwidth]{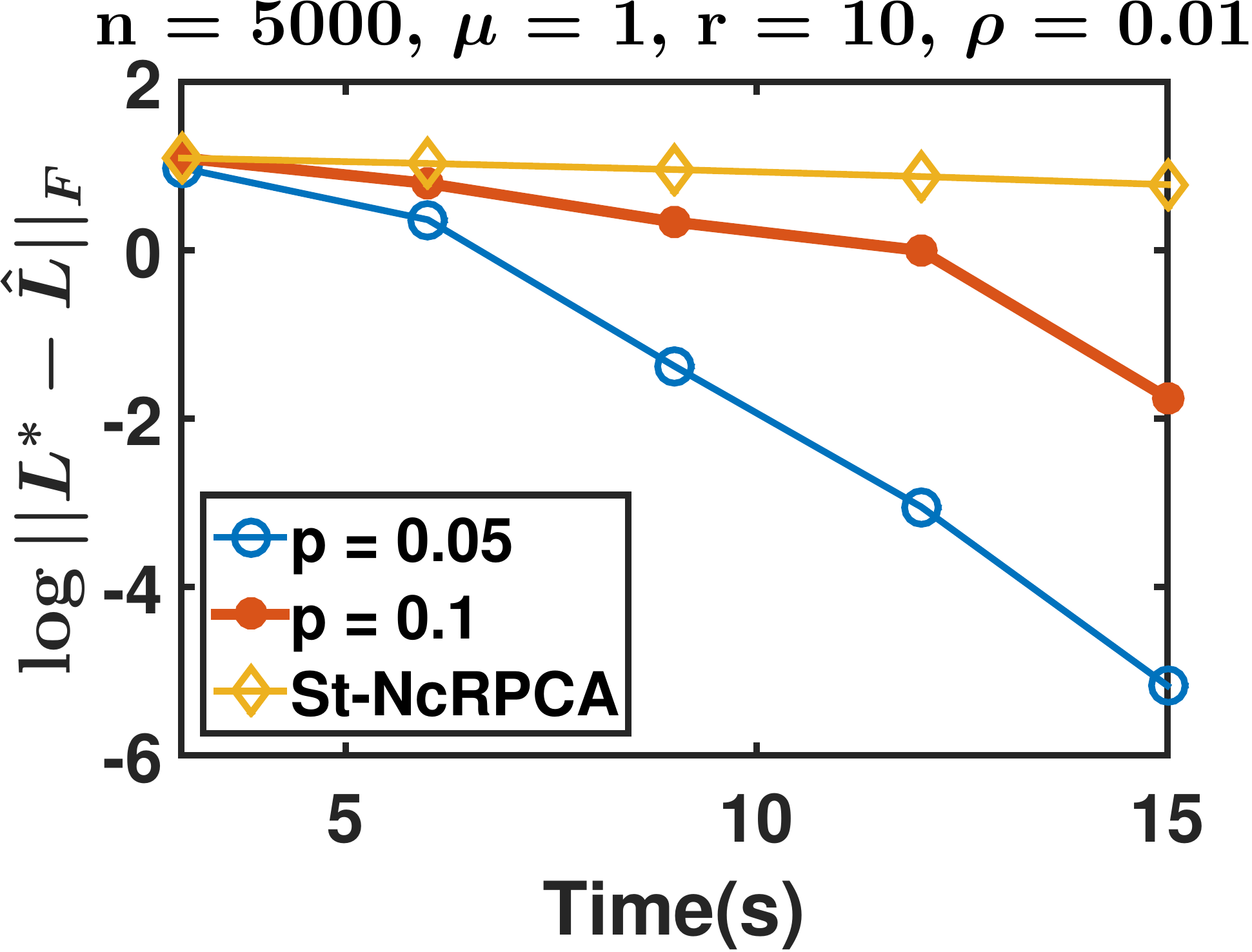}
  \label{fig:tFL}
  }
  \qquad
  \subfloat[][]{
  \includegraphics[width=0.20\textwidth]{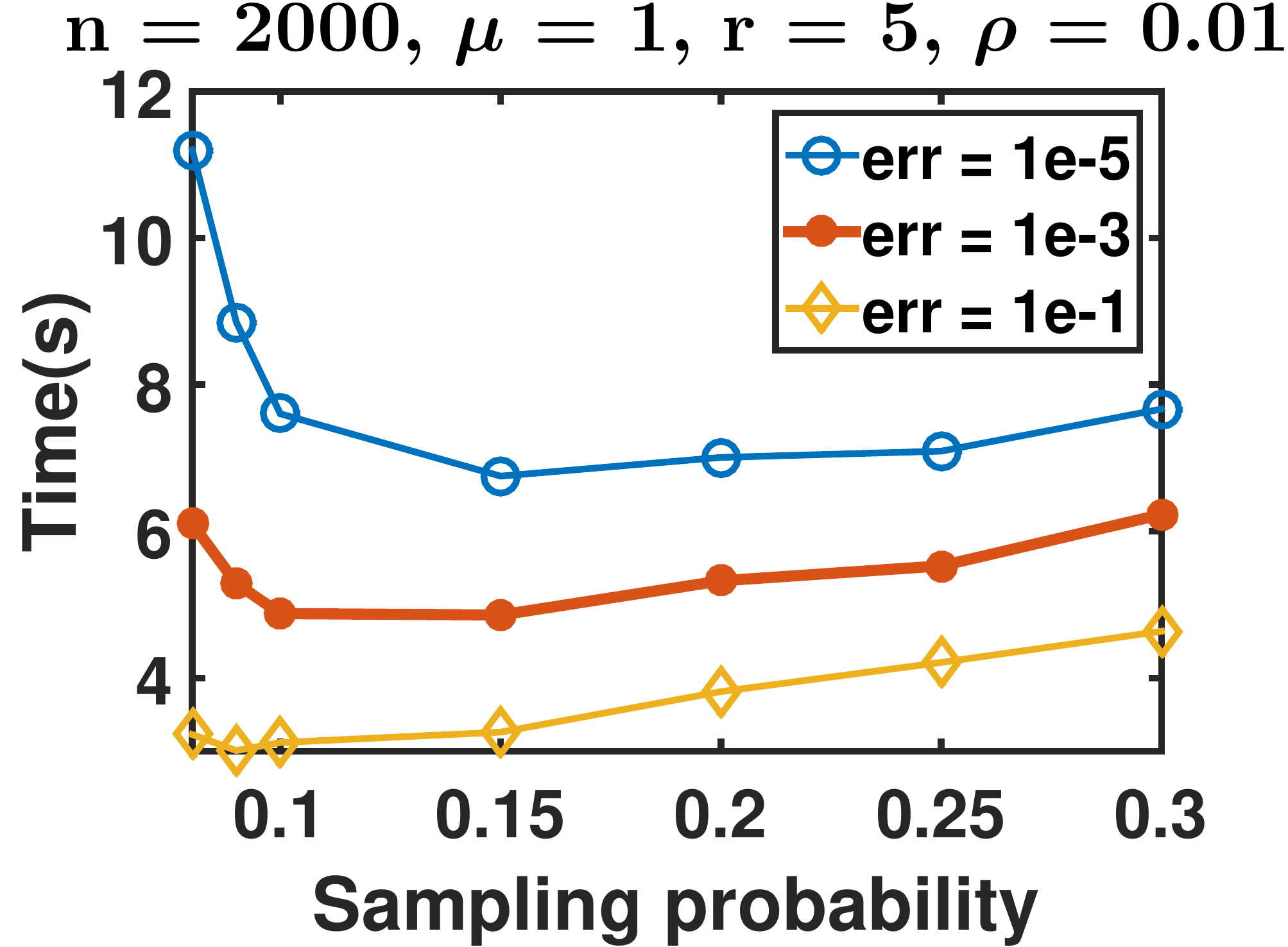}
  \label{fig:pVT}
  }
  \qquad
  \subfloat[][]{
  \includegraphics[width=0.20\textwidth]{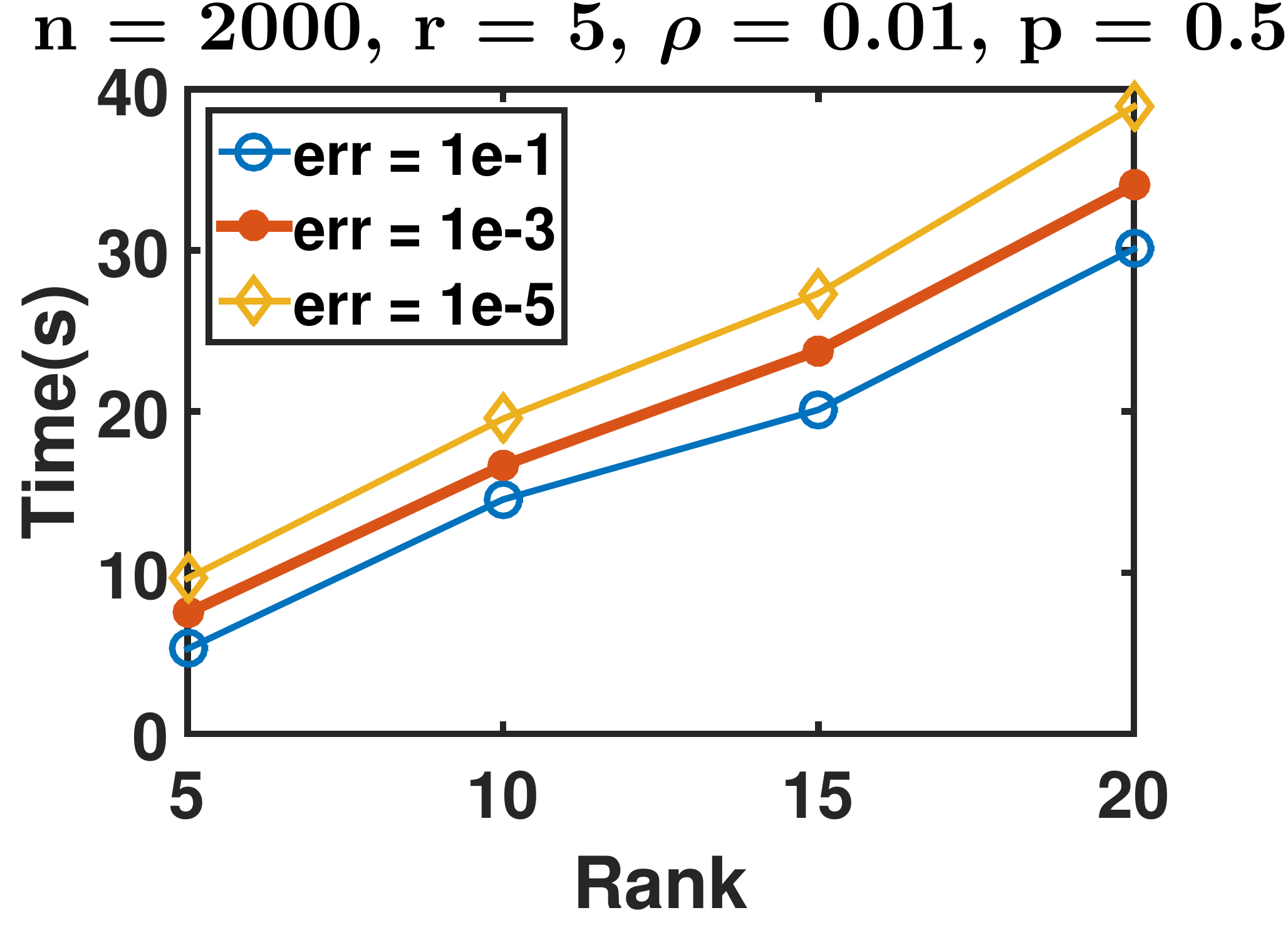}
  \label{fig:rVT}
  }
  \qquad
  \subfloat[][]{
  \includegraphics[width=0.20\textwidth]{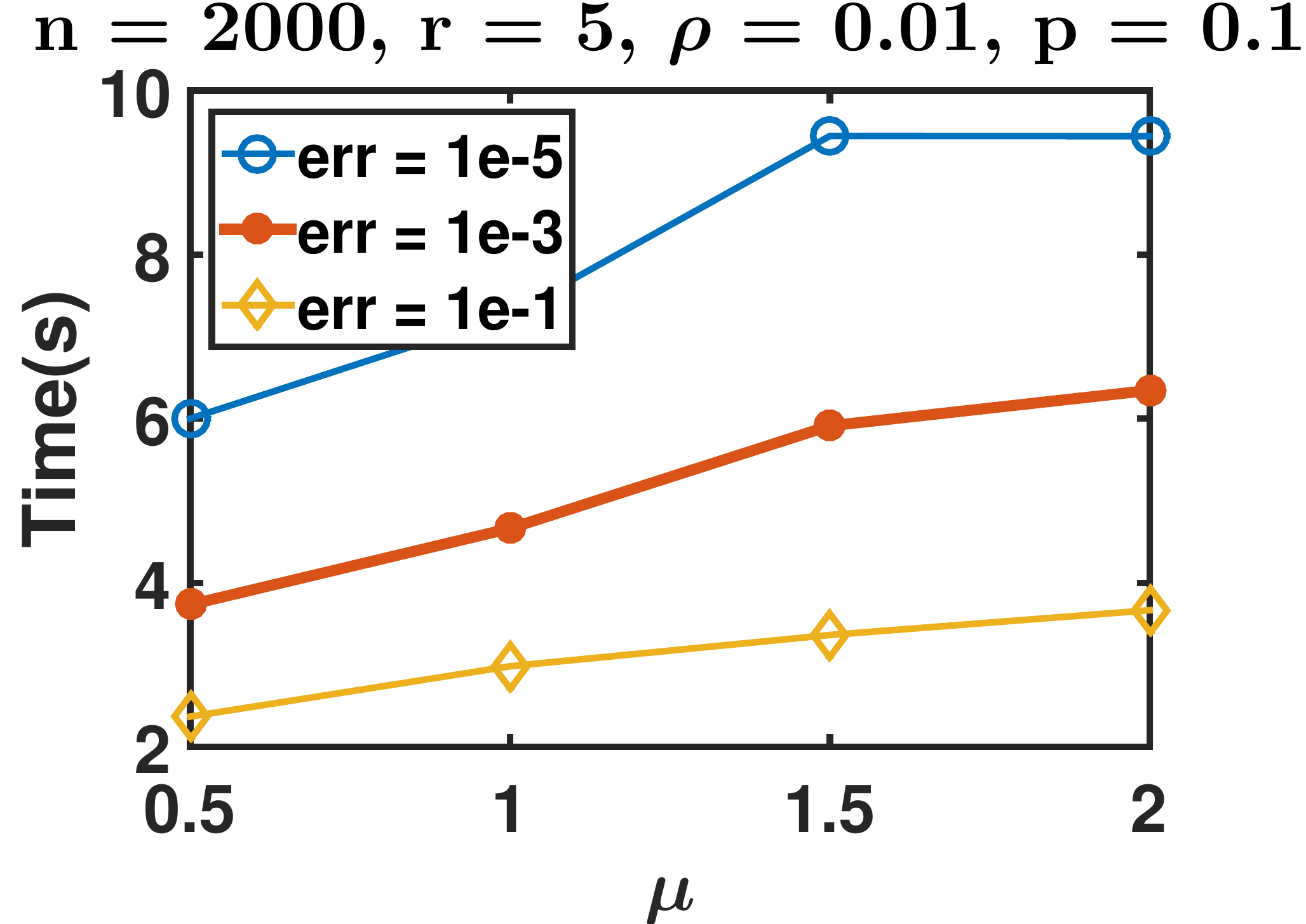}
  \label{fig:iVT}
  }
\vspace{.3in}
\caption{Performance of \ncgca on synthetic data. \ref{fig:tFL}: time vs error for various sampling probabilities; time taken by St-NcRMC \ref{fig:pVT}: sampling probability vs time for constant error; time taken decreases with decreasing sampling probability upto an extent and then increases \ref{fig:rVT}: time vs rank for constant error \ref{fig:iVT}: incoherence vs time for constant error}
\vspace{.3in}
\label{fig:synthetic_plots_1}
\end{figure}

\begin{figure}[t]
\centering
\subfloat[][]{
\includegraphics[width=0.20\textwidth]{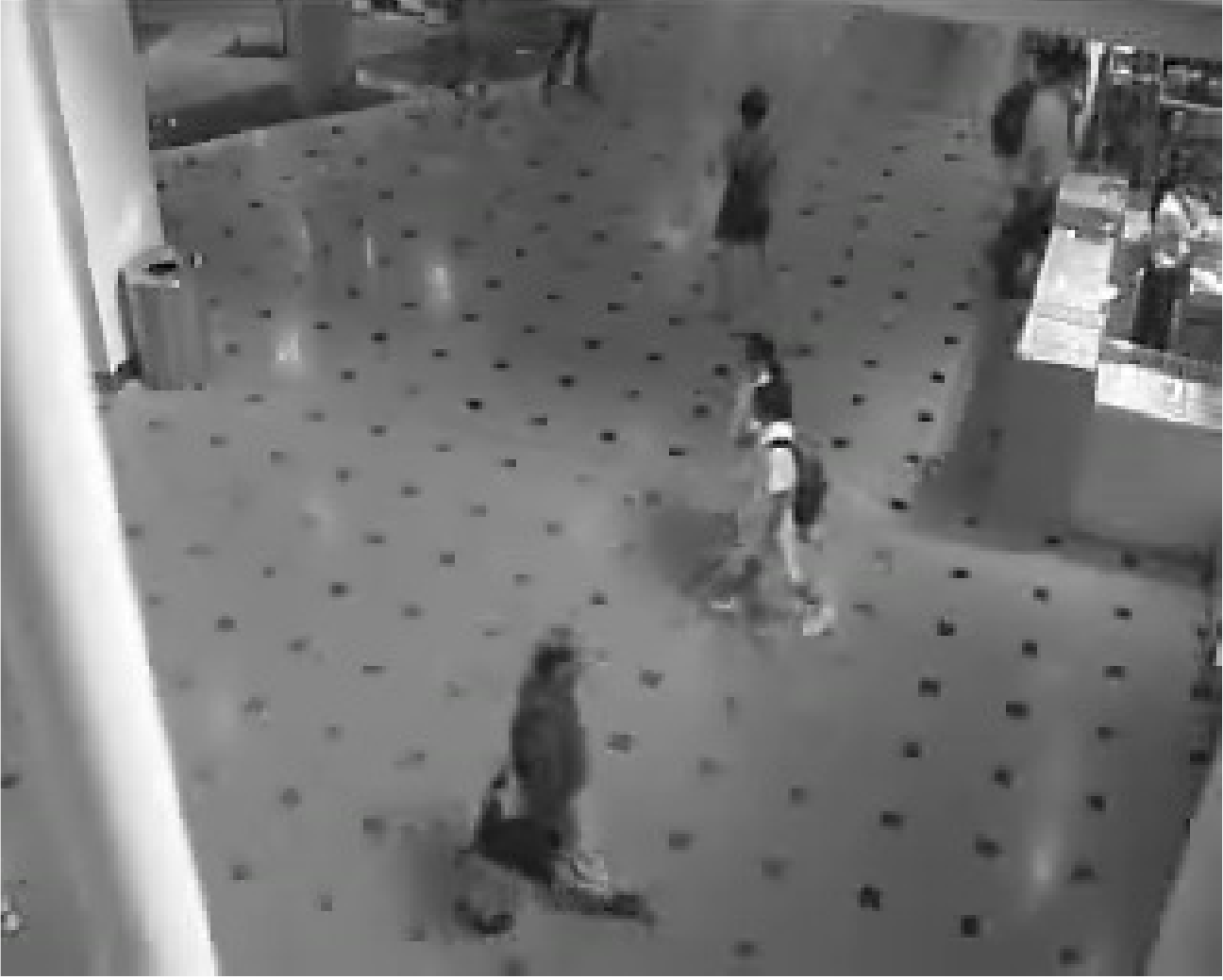}
\label{fig:shopO}
}
\qquad
\subfloat[][]{
\includegraphics[width=0.20\textwidth]{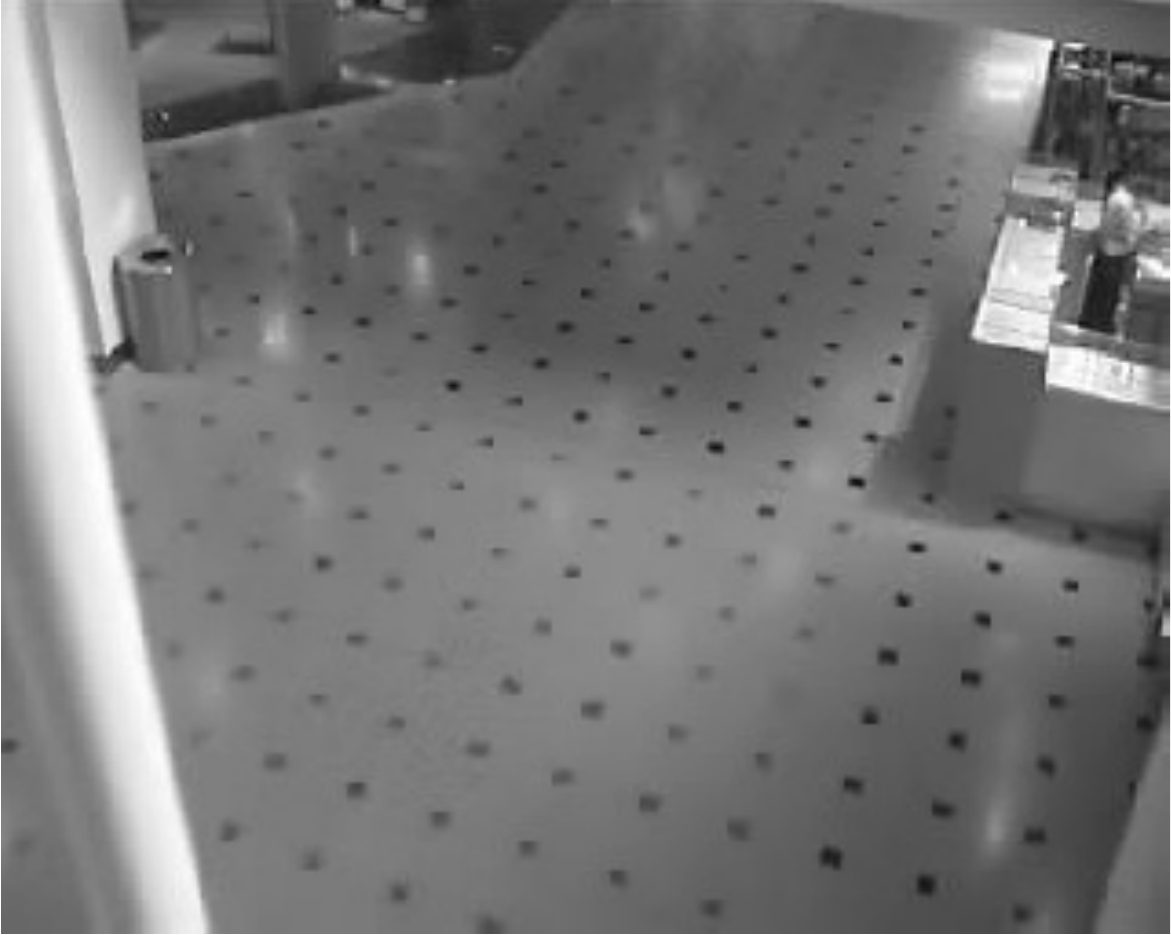}
\label{fig:shop}
}
\qquad
\subfloat[][]{
\includegraphics[width=0.20\textwidth, height=.18\textwidth]{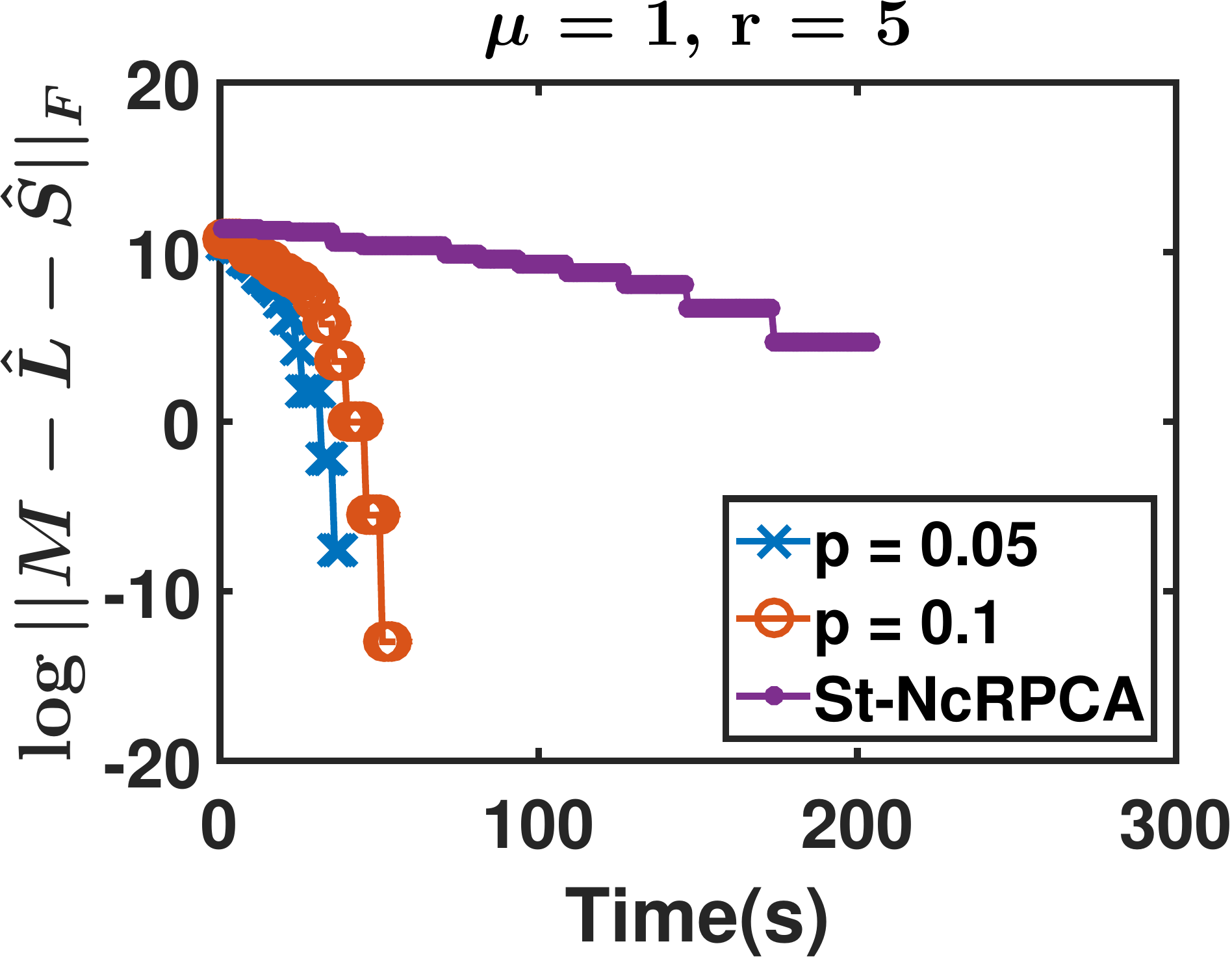}
\label{fig:shopFro}
}

\subfloat[][]{
\includegraphics[width=0.20\textwidth]{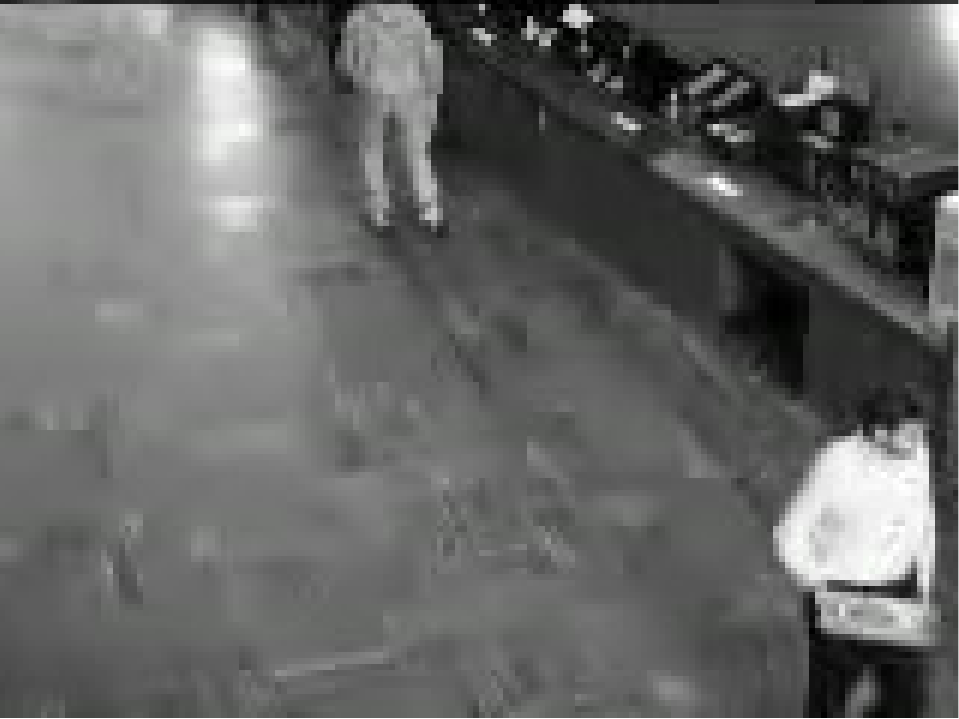}
\label{fig:resO}
}
\qquad
\subfloat[][]{
\includegraphics[width=0.20\textwidth]{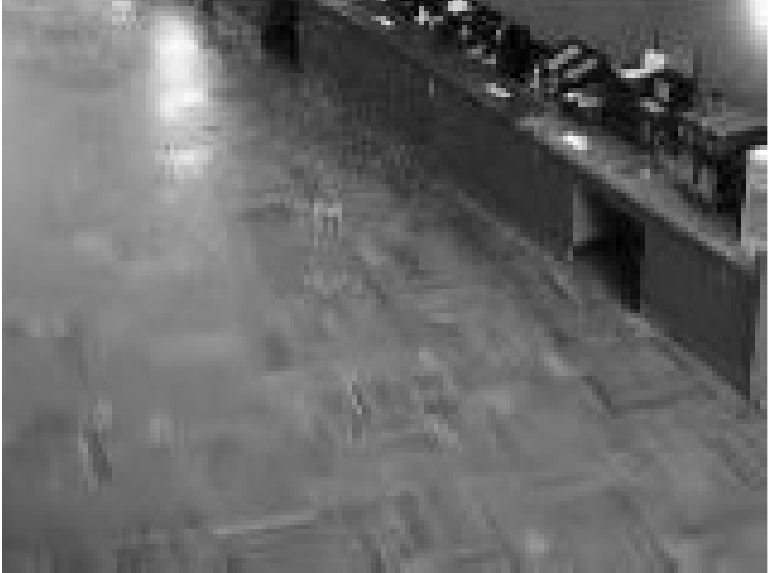}
\label{fig:res}
}
\qquad
\subfloat[][]{
\includegraphics[width=0.20\textwidth, height=.18\textwidth]{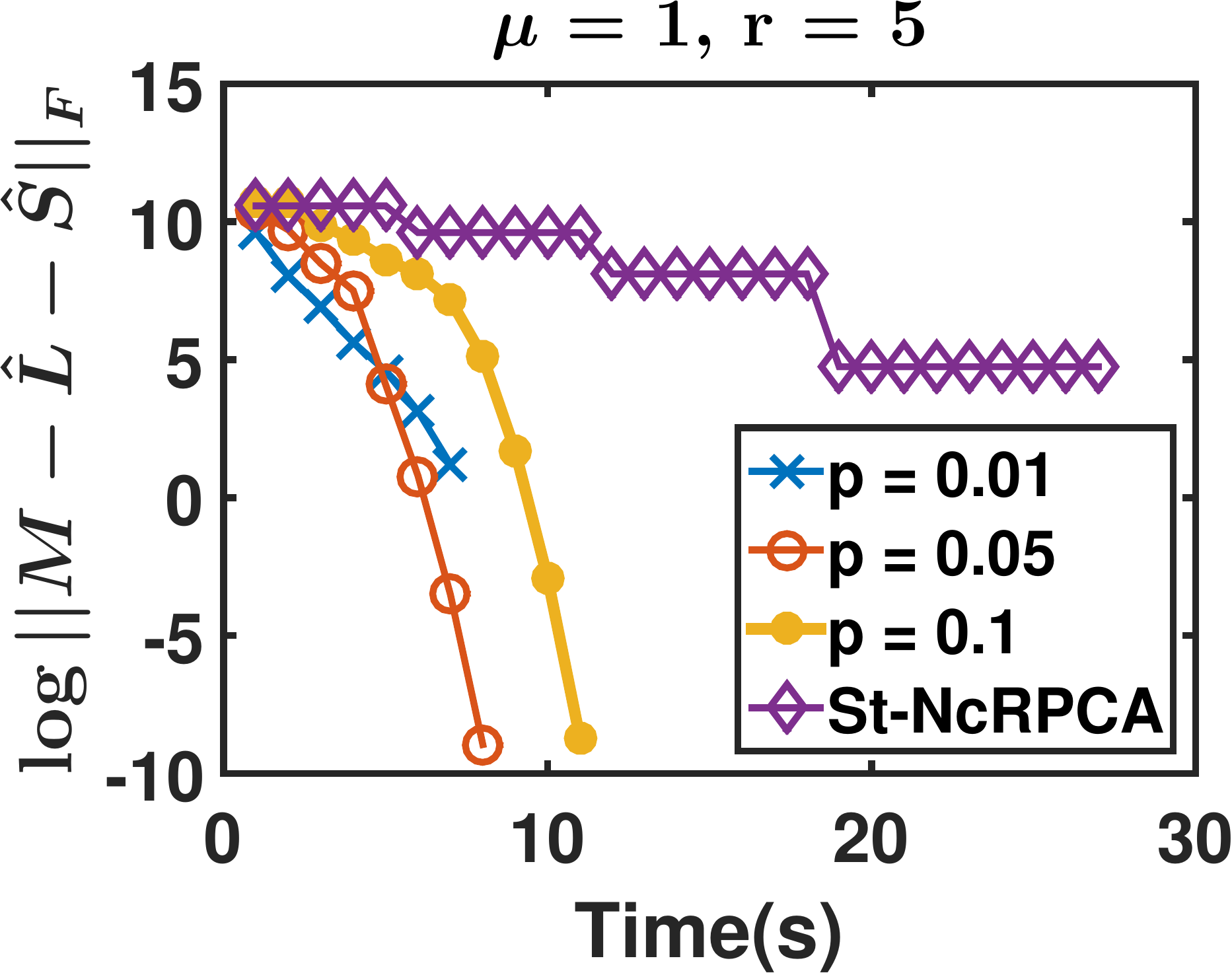}
\label{fig:resFro}
}
\vspace{.3in}
\caption{\ncgca on Shopping video. \ref{fig:shopO}: a video frame \ref{fig:shop}: an extracted background frame \ref{fig:shopFro}: time vs error for different sampling probabilities; \ncgca takes 38.7s while St-NcPCA takes 204.4s. \ncgca on Restaurant video. \ref{fig:resO}: a video frame \ref{fig:res}: an extracted background frame \ref{fig:resFro}: time vs error for different sampling probabilities; \ncgca takes 7.9s while St-NcPCA takes 27.8s}
\vspace{.3in}
\label{fig:real_plots_1}
\end{figure}

%TODO : write about how the forb_err graphs show that RPGRMC imitates the behavior of RPCA

\textbf{Synthetic data.} We  generate $M=\Lo+\wSo$ of two sizes, where $\Lo=UV^{\top}\in \R^{2000\times2000}$ (and $\R^{5000\times5000}$) is a random rank-5 (and rank-10 respectively) matrix with incoherence $\approx 1$. $\wSo$ is generated by considering a uniformly random subset of size $\zeronorm{\So}$ from $[m]\times[n]$ where every entry is i.i.d. from the uniform distribution in $[\frac{r}{2\sqrt{mn}}, \frac{r}{\sqrt{mn}}]$. This is the same setup as used in \cite{CandesLMW11}.

Figure~\ref{fig:synthetic_plots_1} (a) plots recovery error ($\|L-\Lo\|_F$) vs computational time for our \ncgca method (with different sampling probabilities) as well as the St-NcRPCA algorithm. Note that even for very small values of sampling $p$, we can achieve same recovery error using significantly small values. For example, our method with $p=0.1$ achieve $0.01$ error ($\|L-\Lo\|_F$) in $\approx 2.5s$ while St-NcRPCA method requires $\approx 10s$ to achieve the same accuracy. Note that we do not compare against the convex relaxation based methods like IALM from \cite{CandesLMW11}, as \cite{NIPS2014_5430} shows that St-NcRPCA is significantly faster than IALM and several other convex relaxation solvers.

Figure~\ref{fig:synthetic_plots_1} (b) plots time required to achieve different recovery errors ($\|L-\Lo\|_F$) as the sampling probability $p$ increases. As expected, we observe a linear increase in the run-time with $p$. Interestingly, for very small values of $p$, we observe an increase in running time. In this regime, $\frac{\|\Pom(M)\|_2}{p}$ becomes very large (as $p$ doesn't satisfy the sampling requirements). Hence, increase in the number of iterations ($T\approx \log \frac{\|\Pom(M)\|_2}{p\epsilon}$) dominates the decrease in per iteration time complexity.

Figure~\ref{fig:synthetic_plots_1} (c), (d) plots computation time required by our method (\ncgca, Algorithm~\ref{alg:gca}) versus rank and incoherence, respectively. As expected, as these two problem parameters increase, our method requires more time. Note that our run-time dependence on rank seems to be linear, while our existing results require $O(r^3)$ time. This hints at the possibility of further improving the computational complexity analysis of our algorithm.

We also study phase transition for different values of sampling probability $p$. Figure~\ref{fig:synth_plots_1_app} (a) in Appendix~\ref{app:exp} show a phase transition phenomenon where beyond $p>.06$ the probability of recovery is almost $1$ while below it, it is almost $0$.

%The algorithm has the same advantages over convex algorithms like IALM from \cite{CandesLMW11} as non-convex robust PCA from \cite{NIPS2014_5430} in terms of the rank of the matrix produced in the intermediate stages and running time. In addition, it out performs non-convex robust PCA with respect to running time. Non-convex robust PCA can be considered to be a special case of Algorithm \ref{algo:gca} with $p=1$. Indeed we observe that the running time decreases with decreasing sampling probability while achieving the same level of error until a certain a value of $p$ (which was around 0.1 in our experiments) and increases beyond that point.

\textbf{Foreground-background separation.} We also applied our technique to the problem of foreground-background separation. We use the usual method of stacking up the vectorized video frames to construct a matrix. The background, being static, will form the low rank component while the foreground can be considered to be the noise.

We applied our \ncgca method (with varying $p$) to several videos. Figure~\ref{fig:real_plots_1} (a), (d) shows one frame each from two videos (a shopping center video, a restaurant video). Figure~\ref{fig:real_plots_1} (b), (d) shows the extracted background from the two videos by using our method (\ncgca, Algorithm~\ref{alg:gca}) with probability of sampling $p=0.05$. Figure~\ref{fig:real_plots_1} (c), (f) compares objective function value for different $p$ values. Clearly, \ncgca can recover the true background with $p$ as small as $0.05$. We also observe an order of magnitude speedup ($\approx 5$x) over St-NcRPCA \cite{NIPS2014_5430}. We present results on the video Escalator in Appendix \ref{app:exp}.

%We are able to get significant improvement in the time taken to recover the background as compared to \cite{NIPS2014_5430} because of two reasons: 1) We do not observe the whole matrix 2) We do not explicitly construct the low rank matrix in the intermediate steps.

%We observe that by decreasing the value of $p$ till a certain point (in our case this value is 0.01) we are able to extract a comparable background but in lesser time.

%\textit{Escalator dataset:} This dataset comprises of 3417 frames of size $130\times160$. The algorithm takes 6.1s to extract a good quality background, giving a 10 times improvement over the non-convex robust PCA algorithm which takes 63.2s.

%\textit{Restaurant video:} This dataset comprises of 2055 frames of size $120\times160$. It takes 5.9s to extract a good background, giving a 6 times improvement over the non-convex robust PCA algorithm which takes 34.9s.
%%% Local Variables:
%%% mode: latex
%%% TeX-master: "main"
%%% End:

% \input{exps}
%\input{conc.tex}
\textbf{Conclusion.} In this work, we studied the Robust Matrix Completion problem. For this problem, we provide exact recovery of the low-rank matrix $\Lo$ using nearly optimal number of observations as well as nearly optimal fraction of corruptions in the observed entries. Our RMC result is based on a simple and efficient PGD algorithm that has nearly linear time complexity as well. Our result improves state-of-the-art for the related Matrix Completion as well as Robust PCA problem. For Robust PCA, we provide first nearly linear time algorithm under standard assumptions.

Our sample complexity depends on $\epsilon$, the desired accuracy in $\Lo$. Moreover, improving dependence of sample complexity on $r$ (from $r^2$ to $r$) also represents an important direction. Finally, similar to foreground background separation, we would like to explore more applications of RMC/RPCA.
% We believe that the arguments used by \cite{DBLP:conf/colt/0002N15}, we should be able to remove the $\epsilon$ dependence as well and leave it for future work.
%%% Local Variables:
%%% mode: latex
%%% TeX-master: "main"
%%% End:

% \clearpage
{\small \bibliography{refs}}

\newcommand{\etalchar}[1]{$^{#1}$}
\begin{thebibliography}{NUNS{\etalchar{+}}14}

\bibitem[Bha97]{bhatia}
Rajendra Bhatia.
\newblock {\em Matrix Analysis}.
\newblock Springer, 1997.

\bibitem[Blu11]{DBLP:journals/tit/Blumensath11}
Thomas Blumensath.
\newblock Sampling and reconstructing signals from a union of linear subspaces.
\newblock {\em {IEEE} Trans. Information Theory}, 57(7):4660--4671, 2011.

\bibitem[CJSC11]{DBLP:conf/isit/ChenJSC11}
Yudong Chen, Ali Jalali, Sujay Sanghavi, and Constantine Caramanis.
\newblock Low-rank matrix recovery from errors and erasures.
\newblock In {\em 2011 {IEEE} International Symposium on Information Theory
  Proceedings, {ISIT} 2011, St. Petersburg, Russia, July 31 - August 5, 2011},
  pages 2313--2317, 2011.

\bibitem[CLMW11]{CandesLMW11}
Emmanuel~J. Cand{\`e}s, Xiaodong Li, Yi~Ma, and John Wright.
\newblock Robust principal component analysis?
\newblock {\em J. ACM}, 58(3):11, 2011.

\bibitem[CR09]{CandesR2007}
Emmanuel~J. Cand{\`e}s and Benjamin Recht.
\newblock Exact matrix completion via convex optimization.
\newblock {\em Foundations of Computational Mathematics}, 9(6):717--772,
  December 2009.

\bibitem[CSPW11]{ChandrasekaranSPW11}
Venkat Chandrasekaran, Sujay Sanghavi, Pablo~A. Parrilo, and Alan~S. Willsky.
\newblock Rank-sparsity incoherence for matrix decomposition.
\newblock {\em SIAM Journal on Optimization}, 21(2):572--596, 2011.

\bibitem[EKYY13]{ErdosKYY2012}
L{\'a}szl{\'o} Erd{{o}}s, Antti Knowles, Horng-Tzer Yau, and Jun Yin.
\newblock Spectral statistics of {E}rd{{o}}s--{R}{\'e}nyi graphs {I}: Local
  semicircle law.
\newblock {\em The Annals of Probability}, 41(3B):2279--2375, 2013.

\bibitem[GWL16]{Wangaistats}
Quanquan Gu, Zhaoran~Wang Wang, and Han Liu.
\newblock Low-rank and sparse structure pursuit via alternating minimization.
\newblock In {\em Proceedings of the Nineteenth International Conference on
  Artificial Intelligence and Statistics, {AISTATS} 2016, C\'{a}diz, Spain, May
  9-11, 2016}, 2016.

\bibitem[HKZ11]{hsu2011robust}
Daniel Hsu, Sham~M Kakade, and Tong Zhang.
\newblock Robust matrix decomposition with sparse corruptions.
\newblock {\em Information Theory, IEEE Transactions on}, 57(11):7221--7234,
  2011.

\bibitem[HMRW14]{DBLP:conf/colt/HardtMRW14}
Moritz Hardt, Raghu Meka, Prasad Raghavendra, and Benjamin Weitz.
\newblock Computational limits for matrix completion.
\newblock In {\em Proceedings of The 27th Conference on Learning Theory, {COLT}
  2014, Barcelona, Spain, June 13-15, 2014}, pages 703--725, 2014.

\bibitem[HW14]{HardtW14}
Moritz Hardt and Mary Wootters.
\newblock Fast matrix completion without the condition number.
\newblock In {\em Proceedings of The 27th Conference on Learning Theory, {COLT}
  2014, Barcelona, Spain, June 13-15, 2014}, pages 638--678, 2014.

\bibitem[JMD10]{JainMD10}
Prateek Jain, Raghu Meka, and Inderjit~S. Dhillon.
\newblock Guaranteed rank minimization via singular value projection.
\newblock In {\em NIPS}, pages 937--945, 2010.

\bibitem[JN15]{DBLP:conf/colt/0002N15}
Prateek Jain and Praneeth Netrapalli.
\newblock Fast exact matrix completion with finite samples.
\newblock In {\em Proceedings of The 28th Conference on Learning Theory, {COLT}
  2015, Paris, France, July 3-6, 2015}, pages 1007--1034, 2015.

\bibitem[JRVS11]{DBLP:journals/jmlr/JalaliRVS11}
Ali Jalali, Pradeep Ravikumar, Vishvas Vasuki, and Sujay Sanghavi.
\newblock On learning discrete graphical models using group-sparse
  regularization.
\newblock In {\em Proceedings of the Fourteenth International Conference on
  Artificial Intelligence and Statistics, {AISTATS} 2011, Fort Lauderdale, USA,
  April 11-13, 2011}, pages 378--387, 2011.

\bibitem[JTK14]{DBLP:conf/nips/0002TK14}
Prateek Jain, Ambuj Tewari, and Purushottam Kar.
\newblock On iterative hard thresholding methods for high-dimensional
  m-estimation.
\newblock In {\em Advances in Neural Information Processing Systems 27: Annual
  Conference on Neural Information Processing Systems 2014, December 8-13 2014,
  Montreal, Quebec, Canada}, pages 685--693, 2014.

\bibitem[KLT14]{klopp2014robust}
Olga Klopp, Karim Lounici, and Alexandre~B Tsybakov.
\newblock Robust matrix completion.
\newblock {\em arXiv preprint arXiv:1412.8132}, 2014.

\bibitem[Li13]{Li2013}
Xiaodong Li.
\newblock Compressed sensing and matrix completion with constant proportion of
  corruptions.
\newblock {\em Constructive Approximation}, 37(1):73--99, 2013.

\bibitem[NUNS{\etalchar{+}}14]{NIPS2014_5430}
Praneeth Netrapalli, Niranjan U~N, Sujay Sanghavi, Animashree Anandkumar, and
  Prateek Jain.
\newblock Non-convex robust pca.
\newblock In Z.~Ghahramani, M.~Welling, C.~Cortes, N.~D. Lawrence, and K.~Q.
  Weinberger, editors, {\em Advances in Neural Information Processing Systems
  27}, pages 1107--1115. Curran Associates, Inc., 2014.

\bibitem[Rec11]{DBLP:journals/jmlr/Recht11}
Benjamin Recht.
\newblock A simpler approach to matrix completion.
\newblock {\em Journal of Machine Learning Research}, 12:3413--3430, 2011.

\bibitem[SL15]{SunL15}
Ruoyu Sun and Zhi{-}Quan Luo.
\newblock Guaranteed matrix completion via nonconvex factorization.
\newblock In {\em {IEEE} 56th Annual Symposium on Foundations of Computer
  Science, {FOCS} 2015, Berkeley, CA, USA, 17-20 October, 2015}, pages
  270--289, 2015.

\bibitem[YPCC16]{DBLP:journals/corr/YiPCC16}
Xinyang Yi, Dohyung Park, Yudong Chen, and Constantine Caramanis.
\newblock Fast algorithms for robust {PCA} via gradient descent.
\newblock {\em CoRR}, abs/1605.07784, 2016.

\end{thebibliography}
% \bibliography{refs}
\bibliographystyle{alpha}
 \clearpage
\section{Appendix}\label{sec:appendix}

We divide this section into five parts. In the first part we prove some common lemmas. In the second part we give the convergence guarantee for \ncgca. In the third part we give another algorithm which has a sample complexity of $O(\mu^4r^3n\log^2{n}\log{\frac{\mu^2r\sigma_1^*}{\epsilon}})$ and prove its convergence guarantees. In the fourth part we prove a generalized form of lemma \ref{lem:einf}. In the fifth part we present some additional experiments.

% We will assume for the rest of the proof that the input matrix $M$ is symmetric and is of size $n\times n$. This assumption is wlog as proved in Lemma 6 of \cite{DBLP:conf/colt/0002N15}.

For the sake of convenience in the following proofs, we will define some notations here.

We define $p=\frac{\abs{\Omega_{k, t}}}{mn}$ and we consider the following equivalent update step for $\Ltn$ in the analysis:
\begin{center}
  \begin{tabular}{ c c }
    $\Ltn \coloneqq \Pk(M^{(t)})$ & $\Mt \coloneqq \Lo+H$ \\
    $H \coloneqq \Et + \beta G$ & $\Et \coloneqq \So-\Sot$ \\
    $\Sot \coloneqq \HT\left(\Mt-\Lt\right)$ & $G \coloneqq \ddfrac{1}{\beta}\left(\mathcal{I}-\ddfrac{\Pomqt}{p}\right)D$ \\
    $D \coloneqq \Lt-\Lo+\Sot-\So$ & $\beta \coloneqq \ddfrac{2\sqrt{n}\infnorm{D}}{\sqrt{p}}$
  \end{tabular}
\end{center}

% \begin{align*}
%   \label{def:not}
%   \Ltn &\coloneqq \Pk(M^{(t)}) \\
%   \Mt &\coloneqq \Lo+H \\
%   H &\coloneqq \Et + \beta G \\
%   \Et &\coloneqq \So-\Sot \\
%   \Sot &\coloneqq \HT\left(\Mt-\Lt\right) \\
%   G &\coloneqq \ddfrac{1}{\beta}\left(\mathcal{I}-\ddfrac{\Pomqt}{p}\right)D \\
%   D &\coloneqq \Lt-\Lo+\Sot-\So \\
%   \beta &\coloneqq \ddfrac{2\sqrt{n}\infnorm{D}}{\sqrt{p}} \\
% \end{align*}

The singular values of $L^*$ are denoted by $\sigma^*_1,\ldots,\sigma^*_r$ where $\abs{\sigma^*_1}\geq\ldots\geq\abs{\sigma^*_r}$ and we will let $\lambda_{1},\ldots,\lambda_{n}$ denote the singular values of $\Mt$ where $\abs{\lambda_{1}} \geq \ldots \geq \abs{\lambda_{n}}$.

%!TEX root = main.tex
\subsection{Common Lemmas}
We will begin by restating some lemmas from previous work that we will use in our proofs.

First, we restate Weyl's perturbation lemma from \cite{bhatia}, a key tool in our analysis:
\begin{lemma}\label{lem:weyl-perturbation}
  Suppose $B = A + E \in \mathbb{R}^{m \times n}$ matrix. Let $\lambda_1,\cdots,\lambda_k$ and $\sigma_1,\cdots,\sigma_k$ be the singular values of $B$ and $A$ respectively such that $\lambda_1 \geq \cdots \geq \lambda_k$ and $\sigma_1 \geq \cdots \geq {\sigma_k}$. Then:
  \begin{align*}
    \abs{\lambda_i - \sigma_i} \leq \twonorm{E} \; \forall \; i \in [k].
  \end{align*}
\end{lemma}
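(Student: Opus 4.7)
The plan is to invoke the standard Courant--Fischer--Weyl min--max characterization of singular values and then use the triangle inequality in $\ell_2$ on the relevant unit vectors. Specifically, for any matrix $X \in \mathbb{R}^{m \times n}$ and any index $i$, one has
\[
\sigma_i(X) \;=\; \max_{\substack{S \subseteq \mathbb{R}^n \\ \dim S = i}} \;\min_{\substack{x \in S \\ \|x\|_2 = 1}} \|Xx\|_2,
\]
and an equivalent dual min--max formulation exists with subspaces of dimension $n-i+1$. This characterization is what connects singular values across the additive perturbation $B = A + E$ without any structural assumption on $E$.

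First I would fix $i$ and let $S^{\star} \subseteq \mathbb{R}^n$ be an $i$-dimensional subspace achieving the outer max in the characterization for $A$, so that $\min_{x \in S^{\star}, \|x\|_2 = 1} \|Ax\|_2 = \sigma_i$. Then for any unit vector $x \in S^{\star}$ I would bound, by the triangle inequality,
\[
\|Bx\|_2 \;\geq\; \|Ax\|_2 - \|Ex\|_2 \;\geq\; \sigma_i - \|E\|_2,
\]
using $\|Ex\|_2 \leq \|E\|_2 \|x\|_2 = \|E\|_2$. Taking the minimum over unit $x \in S^{\star}$ and then the maximum over $i$-dimensional subspaces shows $\lambda_i \geq \sigma_i - \|E\|_2$, i.e. $\sigma_i - \lambda_i \leq \|E\|_2$.

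Next, I would exploit the symmetry $A = B + (-E)$ and repeat the identical argument with the roles of $A$ and $B$ exchanged, noting that $\|-E\|_2 = \|E\|_2$, to obtain $\lambda_i - \sigma_i \leq \|E\|_2$. Combining the two one-sided bounds yields $|\lambda_i - \sigma_i| \leq \|E\|_2$ as claimed. The only mild subtlety is that the result is stated for rectangular matrices ($m \times n$), so one should make sure to use the variational characterization of singular values in the rectangular form (e.g.\ via the singular vectors of $X^\top X$ or equivalently via $\|Xx\|_2$ on the domain side); once that is in place, the argument is entirely parallel to the Hermitian Weyl inequality. There is no real obstacle here beyond correctly invoking the min--max characterization, since the lemma is a classical result and the proof is short and self-contained.
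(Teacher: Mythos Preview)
Your proof is correct and is the standard Courant--Fischer argument for Weyl's inequality on singular values. The paper does not actually prove this lemma: it is stated as a restatement from Bhatia's text \cite{bhatia} and used as a black box throughout the analysis, so there is no ``paper's approach'' to compare against. Your self-contained derivation via the min--max characterization and the symmetry $A = B + (-E)$ is exactly what one would find in a textbook treatment, and nothing further is needed here.
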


This lemma establishes a bound on the spectral norm of a sparse matrix.
\begin{lemma}
  \label{lem:spSpec}
  Let $S \in \mathbb{R}^{m\times n}$ be a sparse matrix with row and column sparsity $\rho$. Then,
  $$\twonorm{S} \leq \rho \max \{m, n\} \infnorm{S}$$
\end{lemma}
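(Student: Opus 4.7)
The plan is to invoke the classical operator-norm interpolation inequality
$$\|S\|_2 \leq \sqrt{R(S)\cdot C(S)},$$
where $R(S) := \max_i \sum_j |S_{ij}|$ is the largest absolute row sum and $C(S) := \max_j \sum_i |S_{ij}|$ is the largest absolute column sum of $S$. This is a standard fact; it follows in one line from Cauchy--Schwarz applied to $\|Sv\|_2^2 = \sum_i (\sum_j S_{ij} v_j)^2$ via $(\sum_j S_{ij} v_j)^2 \leq (\sum_j |S_{ij}|)(\sum_j |S_{ij}|\, v_j^2)$, then swapping the order of summation, so I would include this short derivation for completeness rather than quoting an external reference.

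Given the inequality, the lemma is immediate from the sparsity hypothesis. By assumption, each row of $S$ contains at most $\rho n$ nonzero entries and each column contains at most $\rho m$; since every entry has magnitude at most $\|S\|_\infty$ (the entrywise sup-norm, as used throughout the paper), this yields $R(S) \leq \rho n \|S\|_\infty$ and $C(S) \leq \rho m \|S\|_\infty$. Substituting into the interpolation bound gives
$$\|S\|_2 \leq \sqrt{\rho n \|S\|_\infty \cdot \rho m \|S\|_\infty} \;=\; \rho \sqrt{mn}\,\|S\|_\infty \;\leq\; \rho \max\{m, n\}\,\|S\|_\infty,$$
where the last step uses $\sqrt{mn} \leq \max\{m,n\}$.

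There is no substantive obstacle here; the whole argument is a few lines of bookkeeping. The only points requiring a bit of care are (i) keeping clear the distinction between the entrywise sup-norm $\|S\|_\infty$ that appears in the lemma statement and the induced operator $\infty$-norm $R(S)$ that appears in the interpolation inequality, and (ii) correctly translating the ``fraction $\rho$ per row and per column'' sparsity hypothesis (in the sense of Assumption 3) into the per-row nonzero count $\rho n$ and the per-column nonzero count $\rho m$, so that the two factors combine into $\rho \sqrt{mn}$ rather than, say, $\rho^2 mn$.
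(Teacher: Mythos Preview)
Your proof is correct and very close in spirit to the paper's. The paper bounds the bilinear form $v^\top S u$ directly via the AM--GM inequality $v_i u_j \leq \tfrac12(v_i^2 + u_j^2)$, which yields the arithmetic-mean bound $\twonorm{S} \leq \tfrac12\bigl(R(S)+C(S)\bigr)$ and hence $\rho\,\tfrac{m+n}{2}\,\infnorm{S}$. You instead use the Schur/Holmgren interpolation inequality $\twonorm{S}\leq\sqrt{R(S)\,C(S)}$ (derived via Cauchy--Schwarz), giving the geometric-mean bound $\rho\sqrt{mn}\,\infnorm{S}$, which is slightly sharper before relaxing to $\rho\max\{m,n\}\,\infnorm{S}$. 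Both are one-line elementary arguments that exploit the same row/column sparsity in the same way; the only real difference is AM--GM versus Cauchy--Schwarz, and your route has the minor advantage of producing the tighter intermediate constant $\sqrt{mn}$ rather than $\tfrac{m+n}{2}$.
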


\begin{proof}
  %Using the variational formulation of the spectral norm, we know: $\twonorm{S} = \max\limits_{\twonorm{v} = 1, \twonorm{u} = 1} \abs{v^{\top}Su}$.
  For {\em any} pair of unit vectors $u$ and $v$, we have:
  \begin{align*}
    v^{\top}S u &= \sum\limits_{1 \leq i \leq m, 1 \leq j \leq n}  v_{i}u_{j}S_{ij} \leq \sum\limits_{1 \leq i \leq m, 1 \leq j \leq n} \abs{S_{ij}} \left(\frac{v_{i}^{2} + u_{j}^{2}}{2}\right) \\&\leq \frac{1}{2}\left(\sum\limits_{1 \leq i \leq m} v_{i}^{2} \sum\limits_{1 \leq j \leq n}\abs{S_{ij}} + \sum\limits_{1 \leq j \leq n} u_{j}^{2} \sum\limits_{1 \leq i \leq m}\abs{S_{ij}}\right) \leq \rho \max\{m, n\} \infnorm{S}
  \end{align*}
  Lemma now follows by using $\|S\|_2=\max_{u, v, \|u\|_2=1, \|v\|_2=1}u^T S v$.
\end{proof}

Now, we define a $0$-mean random matrix with small higher moments values. % (Definition 7 from \cite{DBLP:conf/colt/0002N15}
\begin{definition}[Definition 7, \cite{DBLP:conf/colt/0002N15}]
  \label{def:mzero}
  $H$ is a random matrix of size $m \times n$ with each of its entries drawn independently satisfying the following moment conditions:
  \begin{center}
    \begin{tabular}{ccc}
      $\mathbb{E}[h_{ij}] = 0,$ & $\abs{h_{ij}} < 1,$ &$\mathbb{E}[\abs{h_{ij}}^k] \leq \frac{1}{\max\{m, n\}}$,
    \end{tabular}
  \end{center}
  for $i,j \in [n]$ and $2 \leq k \leq 2 \log n$.
\end{definition}

We now restate two useful lemmas from \cite{DBLP:conf/colt/0002N15}:
\begin{lemma}[Lemma 12, 13 of \cite{DBLP:conf/colt/0002N15}]
  We have the following two claims:
  \begin{itemize}
    \item Suppose $H$ satisfies Definition \ref{def:mzero}.
    Then, w.p. $\geq 1-1/n^{10+\log \alpha}$, we have: $\twonorm{H}\leq 3\sqrt{\alpha}.$
    \item Let $A$ be a $m \times n$ matrix with $n \geq m$. Suppose $\Omega \subseteq [m]\times[n]$ is obtained by sampling each element
    with probability $p \in \left[\frac{1}{4n},0.5\right]$. Then, the following  matrix $H$ satisfies Defintion~\ref{def:mzero}:
    \[
    H \coloneqq \frac{\sqrt{p}}{2 \sqrt{n} \infnorm{\A}} \left(A - \frac{1}{p}\Pom(A)\right).
    \]
  \end{itemize}
  \label{lem:satDefn}
  \label{lem:defSpec}
\end{lemma}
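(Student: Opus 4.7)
The plan is to verify the two claims separately, following the standard moment-based machinery for random matrices with controlled higher moments.

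For the first claim, I would use the trace-moment method. Starting from $\twonorm{H}^{2k} \leq \operatorname{tr}((HH^\top)^k)$, expand the right-hand side as a sum over closed walks of length $2k$ in the complete bipartite graph between the rows and the columns of $H$. Independence together with the zero-mean hypothesis forces every surviving term to be one in which each edge appearing in the walk is traversed at least twice (otherwise the factor contributes $\mathbb{E}[h_{ij}] = 0$). The moment bound $\mathbb{E}|h_{ij}|^k \leq 1/\max\{m,n\}$ then lets each doubled edge contribute a factor of $1/n$, and a Catalan-type count of closed walks with every edge doubled bounds the sum by $n \cdot (Ck)^{O(k)}$ for an absolute constant $C$. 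Choosing $k = \Theta(\log n + \log \alpha)$ and applying Markov's inequality to $\twonorm{H}^{2k}$ then yields $\twonorm{H} \leq 3\sqrt{\alpha}$ with probability at least $1 - n^{-10 - \log \alpha}$.

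For the second claim, I would verify the three conditions of \Def{mzero} by direct computation. Each entry $H_{ij}$ is a two-point random variable: it equals $-\frac{A_{ij}(1-p)}{2\sqrt{np}\,\infnorm{A}}$ with probability $p$ (when $(i,j) \in \Omega$), and $\frac{A_{ij}\sqrt{p}}{2\sqrt{n}\,\infnorm{A}}$ with probability $1-p$. The mean-zero condition then reduces to an algebraic identity. Boundedness $|H_{ij}| < 1$ follows from $|A_{ij}| \leq \infnorm{A}$ combined with the hypothesis $p \geq 1/(4n)$, which gives $\sqrt{np} \geq 1/2$ so that the first case is bounded by $1/(2\sqrt{np}) \leq 1$; the second case is smaller since $p \leq 1/2$. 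The $k$-th moment becomes a two-term sum, each term of the form $p^{a}/(2^k n^{k/2})$ for an appropriate exponent $a$, and one checks that both terms are bounded by $1/(4n) \leq 1/\max\{m,n\}$ over the entire range $p \in [1/(4n), 1/2]$, with the worst case achieved at $p = 1/(4n)$ for the dominant term.

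The main obstacle lies in the combinatorial bookkeeping in the first claim: one has to argue that higher-multiplicity edges only improve the bound, and that the dominant contribution comes from walks with exactly $k+1$ distinct vertices in which every edge is used exactly twice. Since \cite{DBLP:conf/colt/0002N15} already carries out this moment-method argument in full, the cleanest path in our paper is simply to restate their Lemmas 12 and 13 and cite their proof verbatim, which is what the present statement does; the second claim is elementary and can be verified directly in a few lines along the lines sketched above.
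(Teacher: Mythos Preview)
Your proposal is correct, and you have correctly identified that the paper does not prove this lemma at all: it is simply restated from \cite{DBLP:conf/colt/0002N15} with no accompanying argument. Your sketches of both claims are accurate (the trace-moment method for the spectral bound and the direct two-point computation for the moment verification), and your closing observation---that the cleanest path is to cite Lemmas~12 and~13 of \cite{DBLP:conf/colt/0002N15} verbatim---is exactly what the paper does.
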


%The next lemma bounds the spectral norm of matrices which satisfy \ref{def:mzero}, again restated from \cite{DBLP:conf/colt/0002N15}.

%The following two lemmas, 12 and 13 from \cite{DBLP:conf/colt/0002N15}, help us show progress in $\Lt - \Lo$.

% \begin{lemma}
%   \label{lem:incohBound}
%   Let $A \in \mathbb{R}^{n \times n}$ be a $\mu$-incoherent symmetric matrix. Then for any symmetric matrix $B \in \mathbb{R}^{n \times n}$, we have:
%   \begin{equation*}
%     \infnorm{ABA - A} \leq \frac{\mu^{2}r}{n} \twonorm{ABA - A}
%   \end{equation*}
% \end{lemma}

\begin{lemma}[Lemma 13, \cite{DBLP:conf/colt/0002N15}]
  \label{lem:geomBound}
  Let $A \in \mathbb{R}^{n \times n}$ be a symmetric matrix with eigenvalues $\sigma_{1},\cdots,\sigma_{n}$ where $\abs{\sigma_{1}} \geq \cdots \geq \abs{\sigma_{n}}$. Let $B = A + C$ be a perturbation of $A$ satisfying $\twonorm{C} \leq \frac{\sigma_{k}}{2}$ and let $\Pk(B) = U\Lambda U^{\top}$ by the rank-$k$ projection of B. Then, $\Lambda^{-1}$ exists and we have:
  \begin{enumerate}
    \item $\twonorm{A - AU\Lambda^{-1}U^{\top}A} \leq \abs{\sigma_{k}} + 5\twonorm{C}$,
    \item $\twonorm{AU\Lambda^{-a}U^{\top}A} \leq 4 \left( \frac{\abs{\sigma_{k}}}{2} \right)^{-a + 2} \quad \forall a \geq 2$.
  \end{enumerate}
\end{lemma}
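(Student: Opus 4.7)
The plan is to leverage the eigenvector identity $BU = U\Lambda$ together with Weyl's inequality (Lemma \ref{lem:weyl-perturbation}) to reduce both claims to elementary spectral bookkeeping after substituting $A = B - C$. Since $A$ (and hence $B$) is symmetric, singular values coincide with absolute values of eigenvalues, so Weyl yields $\abs{\abs{\lambda_i} - \abs{\sigma_i}} \le \twonorm{C}$ for every $i$. Combined with the hypothesis $\twonorm{C} \le \abs{\sigma_k}/2$ and the ordering, this gives $\abs{\lambda_i} \ge \abs{\sigma_k}/2$ for all $i \le k$, so $\Lambda$ is invertible with $\twonorm{\Lambda^{-1}} \le 2/\abs{\sigma_k}$, and also $\abs{\lambda_{k+1}} \le \abs{\sigma_k} + \twonorm{C}$.

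For claim (1), I substitute $A = B - C$ into $AU\Lambda^{-1}U^\top A$ and expand into four terms. The identity $BU = U\Lambda$ (and its transpose) collapses the principal term to $BU\Lambda^{-1}U^\top B = U\Lambda U^\top = P_k(B)$, giving
$$A - AU\Lambda^{-1}U^\top A = (B - P_k(B)) - C + UU^\top C + CUU^\top - CU\Lambda^{-1}U^\top C.$$
The leading term has norm $\abs{\lambda_{k+1}} \le \abs{\sigma_k} + \twonorm{C}$; the three linear-in-$C$ pieces each contribute at most $\twonorm{C}$ (using $\twonorm{UU^\top} = 1$); and the quadratic tail is bounded by $\twonorm{C}^2 \cdot 2/\abs{\sigma_k} \le \twonorm{C}$ by the spectral gap above. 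Triangle inequality then produces $\abs{\sigma_k} + 5\twonorm{C}$.

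For claim (2), I use the companion identities $AU = U\Lambda - CU$ and $U^\top A = \Lambda U^\top - U^\top C$ to expand
$$AU\Lambda^{-a}U^\top A = U\Lambda^{-(a-2)}U^\top - U\Lambda^{-(a-1)}U^\top C - CU\Lambda^{-(a-1)}U^\top + CU\Lambda^{-a}U^\top C.$$
The leading term has spectral norm at most $(2/\abs{\sigma_k})^{a-2}$. Each cross term absorbs one factor $\twonorm{C} \le \abs{\sigma_k}/2$ into $\Lambda^{-(a-1)}$, yielding the same bound; the fully quadratic term absorbs two such factors against $\Lambda^{-a}$, again yielding $(2/\abs{\sigma_k})^{a-2}$. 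Summing the four pieces gives $4(\abs{\sigma_k}/2)^{-a+2}$, which matches the claim.

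The main obstacle is purely bookkeeping of signs and exponents during the two expansions — no deeper argument is needed, since the eigenvector identity $BU = U\Lambda$ immediately reduces everything to spectral norms of products involving $\Lambda^{-1}$ and $C$, both of which are controlled by Weyl's inequality applied at the outset. The one subtlety worth flagging is that Weyl's statement in Lemma \ref{lem:weyl-perturbation} is for singular values, so I would open the proof by remarking that symmetry of $A$ and $B$ makes these coincide with absolute eigenvalues under the stated ordering.
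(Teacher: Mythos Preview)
Your argument is correct. Both expansions via $A = B - C$ and the eigenvector identity $BU = U\Lambda$ are clean, and the term-by-term bookkeeping is accurate: Weyl gives $\abs{\lambda_i} \ge \abs{\sigma_k}/2$ for $i \le k$, hence $\twonorm{\Lambda^{-j}} \le (2/\abs{\sigma_k})^j$, and each of your four pieces in claims (1) and (2) is bounded exactly as you describe.

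That said, there is nothing to compare against in this paper: Lemma~\ref{lem:geomBound} is merely \emph{restated} from \cite{DBLP:conf/colt/0002N15} (as the caption indicates) and is given no proof here. It appears in the appendix among the preliminary lemmas imported from prior work. Your write-up is essentially the standard perturbation-theoretic derivation and is presumably close to what the original reference does.

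One minor tightening: your opening sentence asserts ``$A$ (and hence $B$) is symmetric,'' but the lemma statement does not hypothesize that $C$ is symmetric. The inference is nonetheless justified because the very notation $\Pk(B) = U\Lambda U^\top$ (same $U$ on both sides) presupposes a symmetric eigendecomposition, and in every application within the paper (the proof of Lemma~\ref{lem:Lprog}) the lemma is invoked on the symmetrized dilations $\Mts = \Los + H_s$, which are symmetric by construction. It would be worth adding one clause to make this explicit.
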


We now provide a lemma that bounds $\|\cdot\|_\infty$ norm of an incoherent matrix with its operator norm.
\begin{lemma}
  \label{lem:incohBound}
  Let $A \in \mathbb{R}^{m \times n}$ be a rank $r$, $\mu$-incoherent matrix. Then for any  $C \in \mathbb{R}^{n \times m}$, we have:
  \begin{equation*}
    \infnorm{ACA} \leq \frac{\mu^{2}r}{\sqrt{mn}} \twonorm{ACA}
  \end{equation*}
\end{lemma}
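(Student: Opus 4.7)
The plan is to reduce the claim to a simple Cauchy–Schwarz estimate on the entries of $ACA$, using the incoherence of the singular vectors of $A$ together with unitary invariance of the operator norm.

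First I would write the thin SVD $A = U^{*} \Sigma^{*} (V^{*})^{\top}$ with $U^{*}\in\mathbb{R}^{m\times r}$, $V^{*}\in\mathbb{R}^{n\times r}$ having orthonormal columns, so that
\[
ACA \;=\; U^{*}\,\Sigma^{*}(V^{*})^{\top} C\, U^{*} \Sigma^{*}\,(V^{*})^{\top} \;=\; U^{*} B (V^{*})^{\top},
\qquad B \;\defas\; \Sigma^{*}(V^{*})^{\top} C U^{*} \Sigma^{*}\in\mathbb{R}^{r\times r}.
\]
Because $U^{*}$ and $V^{*}$ have orthonormal columns, the operator norm is preserved under left- and right-multiplication by them, so $\twonorm{ACA} = \twonorm{U^{*} B (V^{*})^{\top}} = \twonorm{B}$.

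Next I would pick arbitrary indices $i\in[m]$ and $j\in[n]$ and write the $(i,j)$-entry as an inner product,
\[
(ACA)_{ij} \;=\; e_{i}^{\top} U^{*} B (V^{*})^{\top} e_{j} \;=\; \iprod{(U^{*})^{\top} e_{i}}{B (V^{*})^{\top} e_{j}}.
\]
Applying Cauchy–Schwarz followed by the operator-norm bound $\twonorm{B v}\le \twonorm{B}\,\twonorm{v}$ gives
\[
\abs{(ACA)_{ij}} \;\le\; \twonorm{(U^{*})^{\top} e_{i}}\,\twonorm{B}\,\twonorm{(V^{*})^{\top} e_{j}}.
\]
Now the $\mu$-incoherence of $A$ (Assumption~\ref{as:rank}) provides $\twonorm{(U^{*})^{\top} e_{i}} \le \mu\sqrt{r/m}$ and $\twonorm{(V^{*})^{\top} e_{j}}\le \mu\sqrt{r/n}$, so substituting and using $\twonorm{B}=\twonorm{ACA}$ yields
\[
\abs{(ACA)_{ij}} \;\le\; \mu\sqrt{\tfrac{r}{m}}\cdot \twonorm{ACA}\cdot \mu\sqrt{\tfrac{r}{n}} \;=\; \frac{\mu^{2}r}{\sqrt{mn}}\,\twonorm{ACA}.
\]
Taking the maximum over $(i,j)$ then gives the stated bound.

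There isn't really a main obstacle: the only thing to be careful about is that $B$ is the ``sandwiched'' $r\times r$ matrix, so that one can swap $\twonorm{ACA}$ for $\twonorm{B}$ via unitary invariance rather than trying to bound $\twonorm{\Sigma^{*}(V^{*})^{\top} C U^{*} \Sigma^{*}}$ directly in terms of $\twonorm{C}$ (which would give a weaker statement involving $\sigma_{1}^{*}$). With that observation the lemma is a one-line Cauchy–Schwarz after the incoherence substitution.
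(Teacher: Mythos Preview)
Your proof is correct and essentially identical to the paper's: the paper writes $ACA = UU^{\top}(ACA)VV^{\top}$ and bounds each entry via incoherence and $\twonorm{U^{\top}ACAV}\le\twonorm{ACA}$, which is exactly your Cauchy--Schwarz argument with $B = U^{\top}ACAV = \Sigma^{*}(V^{*})^{\top}CU^{*}\Sigma^{*}$. The only cosmetic difference is that you note the equality $\twonorm{B}=\twonorm{ACA}$ rather than the inequality.
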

\begin{proof}
  Let $A=U\Sigma V^\top$. Then, $ACA=UU^\top ACAVV^\top$. The lemma now follows by using definition of incoherence with the fact that $\|U^\top ACAV\|_2\leq \|ACA\|_2$.
\end{proof}

We now present a lemma that shows improvement in the error $\|L-\Lo\|_\infty$ by using gradient descent on $\Lt$.
\begin{lemma}
  \label{lem:hBound}
  Let $\Lo$, $\Omega$, $\wSo$ satisfy Assumptions 1,2,3 respectively. Also, let the following hold for the $t$-th inner-iteration of any stage $q$:
  \begin{enumerate}
    \item $\infnorm{\Lo - \Lt} \leq  \frac{2\mu^2r}{m} \left(\sigma^{*}_{k + 1} + \left(\frac{1}{2}\right)^{z}\sigma^{*}_{k}\right)$
    \item $\infnorm{\wSo - \Sot} \leq  \frac{8\mu^2r}{m} \left(\sigma^{*}_{k + 1} + \left(\frac{1}{2}\right)^{z}\sigma^{*}_{k}\right)$
    \item $Supp(\Sot) \subseteq Supp(\wSo)$
  \end{enumerate}
  where $z\geq-3$ and $\sigma_k^*$ and $\sigma_{k+1}^*$ are the $k$ and $(k+1)^{\textit{th}}$ singular values of $\Lo$. Also, let $E_1=\Sot - \wSo$ and $E_3=\left(\mathcal{I} - \frac{\Pomqt[t]}{p}\right) \left(\Lt-\Lo+\Sot-\wSo\right)$ be the error terms defined also in \eqref{eqn:rUpdate}. Then, the following holds w.p $\geq 1 - n^{-(10 + \log \alpha)}$:
  \begin{equation}
    \twonorm{E_1+E_3} \leq \frac{1}{100} \left(\sigma^{*}_{k+1} + \left(\frac{1}{2}\right)^{z} \sigma^{*}_{k}\right)
  \end{equation}
\end{lemma}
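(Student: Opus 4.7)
My plan is to bound $\|E_1\|_2$ and $\|E_3\|_2$ separately by the triangle inequality, using sparsity for the first term and the random-sampling concentration of \Lem{defSpec} for the second.

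For the sparse part, observe that $\widetilde{S}^* = \mathcal{P}_\Omega(S^*)$ inherits row/column sparsity at most $\rho \leq c/(\mu^2 r)$ from $S^*$ (Assumption 3), and the hypothesis $\mathrm{Supp}(\widetilde{S}^{(t)}) \subseteq \mathrm{Supp}(\widetilde{S}^*)$ ensures $E_1 = \widetilde{S}^{(t)} - \widetilde{S}^*$ has the same sparsity structure. Then \Lem{spSpec} together with hypothesis (2) yields
\[
\|E_1\|_2 \leq \rho\, n\, \|E_1\|_\infty \leq \frac{c}{\mu^2 r} \cdot n \cdot \frac{8\mu^2 r}{m}\bigl(\sigma^*_{k+1} + (1/2)^z \sigma^*_k\bigr).
\]
Since $n = O(m)$, this is a universal constant (scaling linearly with $c$) times $(\sigma^*_{k+1} + (1/2)^z \sigma^*_k)$, so choosing the constant $c$ in Assumption 3 small enough makes the contribution to $E_1$ at most $\tfrac{1}{200}(\sigma^*_{k+1} + (1/2)^z \sigma^*_k)$.

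For the sampling part, write $D = L^{(t)} - L^* + \widetilde{S}^{(t)} - \widetilde{S}^*$, so that $E_3 = (\mathcal{I} - \mathcal{P}_{\Omega_{q,t}}/p) D$. By the triangle inequality and hypotheses (1)--(2), $\|D\|_\infty \leq \frac{10\mu^2 r}{m}(\sigma^*_{k+1} + (1/2)^z \sigma^*_k)$. Because \splitS\ ensures $\Omega_{q,t}$ is drawn independently of everything determining $D$ (which is a function of earlier sample batches), \Lem{defSpec} applies with $A = D$: the normalized matrix
\[
G \;=\; \frac{\sqrt{p}}{2\sqrt{n}\,\|D\|_\infty}\, E_3
\]
meets \Def{mzero}, and hence $\|G\|_2 \leq 3\sqrt{\alpha}$ with probability at least $1 - n^{-(10+\log\alpha)}$. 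Rearranging,
\[
\|E_3\|_2 \;\leq\; \frac{6\sqrt{\alpha\,n}}{\sqrt{p}}\, \|D\|_\infty \;\leq\; \frac{60\sqrt{\alpha}\,\mu^2 r \sqrt{n}}{m\sqrt{p}}\bigl(\sigma^*_{k+1} + (1/2)^z \sigma^*_k\bigr).
\]

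Finally, the sample-complexity assumption of \Thm{gca} (combined with $m = \Theta(n)$) gives $p \geq C'\alpha \mu^4 r^2 \log^2 n / n$, so $\sqrt{\alpha n/p}\cdot \mu^2 r / m \leq 1/C''$ for $C''$ as large as we like by choosing $C$ large enough. This forces $\|E_3\|_2 \leq \tfrac{1}{200}(\sigma^*_{k+1} + (1/2)^z \sigma^*_k)$, and the triangle inequality $\|E_1 + E_3\|_2 \leq \|E_1\|_2 + \|E_3\|_2$ then yields the claim. The only delicate issue is the independence needed to invoke \Lem{defSpec}, which is secured by partitioning $\Omega$ via \splitS; the rest is bookkeeping of constants to ensure they fit inside $1/100$.
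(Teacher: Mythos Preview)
Your proof is correct and follows essentially the same route as the paper: split $\|E_1+E_3\|_2$ via the triangle inequality, bound $\|E_1\|_2$ by \Lem{spSpec} using the row/column sparsity of $E_1$ and hypothesis~(2), and bound $\|E_3\|_2$ via \Lem{defSpec} after normalizing by $\beta = 2\sqrt{n/p}\,\|D\|_\infty$, then absorb constants using the assumptions on $\rho$ and $p$. (One harmless slip: in the paper's conventions $S^{*}=\Pom(\widetilde S^{*})$, not the other way around, but since Assumption~3 directly gives $\widetilde S^{*}$ row/column sparsity $\rho$ your bound on $\|E_1\|_2$ is unaffected.)
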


\begin{proof}
  Note from Lemma \ref{lem:satDefn}, $$\frac{1}{\beta}E_3=\frac{1}{\beta} \left(\mathcal{I} - \frac{\Pomqt[t]}{p}\right) \left(\Lt-\Lo+\Sot-\wSo\right),$$ satisfies definition \ref{def:mzero} with $\beta=\frac{2\sqrt{n}}{\sqrt{p}}\cdot \|\Lt-\Lo+\wSt-\wSo\|_\infty$.

  We now bound the spectral norm of $E_1+E_3$ as follows:
  \begin{align*}
    \twonorm{E_1+E_3} &\leq \twonorm{E_1}+\beta\cdot \twonorm{\frac{1}{\beta}E_3}%\left(\Sot - \wSo\right) + \left(\mathcal{I} - \frac{\Pomqt[t]}{p}\right) \left(\Lt-\Lo+\Sot-\So\right)}\\
    %\leq \twonorm{\Sot - \So} + \twonorm{\left(\mathcal{I} - \frac{\Pomqt[t]}{p}\right) \left(\Lt-\Lo+\Sot-\So\right)} \\
    \overset{(\zeta_{1})}{\leq} \rho n \infnorm{\Sot - \wSo} + 3\beta\sqrt{\alpha},\\
    %    &\leq \frac{1}{200} \left(\abs{\sigma^{*}_{k_{q} + 1}} + \left(\frac{1}{2}\right)^{m} \abs{\sigma^{*}_{k_{q}}}\right) + \beta \twonorm{\frac{1}{\beta} \left(\mathcal{I} - \frac{\Pomqt[t]}{p}\right) \left(\Lt-\Lo+\Sot-\So\right)} \\
    &\overset{(\zeta_{2})}{\leq} \frac{1}{200} \left(\sigma^{*}_{k_{q} + 1} + \left(\frac{1}{2}\right)^{z} \sigma^{*}_{k_{q}}\right) + \frac{60\mu^{2}r}{m} \sqrt{\frac{n}{p}} \sqrt{\alpha} \left(\abs{\sigma^{*}_{k_{q} + 1}} + \left(\frac{1}{2}\right)^{z} \abs{\sigma^{*}_{k_{q}}}\right),\\
    &\overset{(\zeta_3)}{\leq} \frac{1}{100} \left(\sigma^{*}_{k_{q} + 1} + \left(\frac{1}{2}\right)^{z} \sigma^{*}_{k_{q}}\right).
  \end{align*}
  where $(\zeta_{1})$ follows from Lemma \ref{lem:spSpec} and \ref{lem:defSpec}. $(\zeta_{2})$ follows by our assumptions on $\rho$, $\infnorm{\Lt-\Lo}$, and $\infnorm{\Sot - \wSo}$. $(\zeta_{3})$ follows from our assumption on $p$. % and $\infnorm{\Sot - \So}$. This proves the lemma.
\end{proof}

In the following lemma, we prove that the value of the threshold computed using $\sigma_{k}(M^{(t)})=\sigma_k(\Lo+E_1+E_3)$, where $E_1, E_3$ are defined in \eqref{eqn:rUpdate}, closely tracks the threshold that we would have gotten had we had access to the true eigenvalues of $\Lo$, $\sigma^{*}_{k}$.

\begin{lemma}
  \label{lem:eigTrack}
  Let $\Lo$, $\Omega$, $\wSo$ satisfy Assumptions 1,2,3 respectively. Also, let the following hold for the $t$-th inner-iteration of any stage $q$:
  \begin{enumerate}
    \item $\infnorm{\Lo - \Lt} \leq  \frac{2\mu^2r}{m} \left(\sigma^{*}_{k + 1} + \left(\frac{1}{2}\right)^{z}\sigma^{*}_{k}\right)$
    \item $\infnorm{\wSo - \Sot} \leq  \frac{8\mu^2r}{m} \left(\sigma^{*}_{k + 1} + \left(\frac{1}{2}\right)^{z}\sigma^{*}_{k}\right)$
    \item $Supp(\Sot) \subseteq Supp(\wSo)$
  \end{enumerate}
  where $z\geq-3$ and $\sigma_k^*$ and $\sigma_{k+1}^*$ are the $k$ and $(k+1)^{\textit{th}}$ singular values of $\Lo$. Also, let $E_1=\Sot - \wSo$ and $E_3=\left(\mathcal{I} - \frac{\Pomqt[t]}{p}\right) \left(\Lt-\Lo+\Sot-\wSo\right)$ be the error terms defined also in \eqref{eqn:rUpdate}. Then, the following holds $\forall z > -3$ w.p $\geq 1 - n^{-(10 + \log \alpha)}$:
  % Assuming the following hold:
  % \begin{enumerate}
  % \item $\infnorm{\Lo - \Lt} \leq  \frac{2\mu^2r}{n} \left(\abs{\sigma^{*}_{k + 1}} + \left(\frac{1}{2}\right)^{m}\abs{\sigma^{*}_{k}}\right)$
  % \item $\infnorm{\So - \Sot} \leq  \frac{8\mu^2r}{n} \left(\abs{\sigma^{*}_{k + 1}} + \left(\frac{1}{2}\right)^{m}\abs{\sigma^{*}_{k}}\right)$
  % \item $Supp(\Sot) \subseteq Supp(\So)$
  % \end{enumerate}
  \begin{equation}
    \ddfrac{7}{8} \left(\sigma^{*}_{k + 1} + \left(\ddfrac{1}{2}\right)^{z + 1}\sigma^{*}_{k}\right) \leq \left(\lambda_{k + 1} + \left(\ddfrac{1}{2}\right)^{z + 1}\lambda_{k}\right) \leq \ddfrac{9}{8} \left(\sigma^{*}_{k + 1} + \left(\ddfrac{1}{2}\right)^{z + 1}\sigma^{*}_{k}\right),
  \end{equation}
  where $\lambda_k:=\sigma_{k}(M^{(t)})=\sigma_k(\Lo+E_1+E_3)$ and $E_1, E_3$ are defined in \eqref{eqn:rUpdate}.
\end{lemma}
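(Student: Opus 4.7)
\textbf{Proof Plan for Lemma~\ref{lem:eigTrack}.}

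The plan is to combine Weyl's perturbation inequality (Lemma~\ref{lem:weyl-perturbation}) with the spectral-norm bound on the total error already furnished by Lemma~\ref{lem:hBound}. The three hypotheses on $\Lt, \Sot$ placed on the current lemma are exactly those required by Lemma~\ref{lem:hBound}, so we may quote its conclusion
\[
\twonorm{E_1 + E_3} \;\le\; \tfrac{1}{100}\Bigl(\sigma^{*}_{k+1} + (\tfrac{1}{2})^{z}\sigma^{*}_{k}\Bigr)
\]
on the same high-probability event $1 - n^{-(10+\log\alpha)}$.

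Writing $M^{(t)} = \Lo + (E_1 + E_3)$, Weyl's inequality gives, for every index $i$,
\[
|\lambda_i - \sigma^{*}_i| \;\le\; \twonorm{E_1+E_3} \;\le\; \tfrac{1}{100}\bigl(\sigma^{*}_{k+1} + (\tfrac{1}{2})^{z}\sigma^{*}_{k}\bigr).
\]
First I would apply this at indices $i=k+1$ and $i=k$, then use the triangle inequality on the weighted sum
\[
\Bigl|\bigl(\lambda_{k+1}+(\tfrac{1}{2})^{z+1}\lambda_k\bigr) - \bigl(\sigma^{*}_{k+1}+(\tfrac{1}{2})^{z+1}\sigma^{*}_k\bigr)\Bigr|
\;\le\; \bigl(1 + (\tfrac{1}{2})^{z+1}\bigr)\cdot\twonorm{E_1+E_3}.
\]
Since $z \geq -3$, the prefactor $1 + (1/2)^{z+1}$ is bounded by $5$.

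Next I would convert the right-hand side, which is still phrased in terms of $\sigma^{*}_{k+1} + (1/2)^{z}\sigma^{*}_k$, into the target form $\sigma^{*}_{k+1} + (1/2)^{z+1}\sigma^{*}_k$. Using $(1/2)^{z} = 2\cdot(1/2)^{z+1}$ and non-negativity of the singular values,
\[
\sigma^{*}_{k+1} + (\tfrac{1}{2})^{z}\sigma^{*}_k \;\le\; 2\bigl(\sigma^{*}_{k+1} + (\tfrac{1}{2})^{z+1}\sigma^{*}_k\bigr).
\]
Combining the last three displays yields a relative error of at most $5 \cdot \tfrac{1}{100} \cdot 2 = \tfrac{1}{10}$, so
\[
\tfrac{9}{10}\bigl(\sigma^{*}_{k+1}+(\tfrac{1}{2})^{z+1}\sigma^{*}_k\bigr) \;\le\; \lambda_{k+1}+(\tfrac{1}{2})^{z+1}\lambda_k \;\le\; \tfrac{11}{10}\bigl(\sigma^{*}_{k+1}+(\tfrac{1}{2})^{z+1}\sigma^{*}_k\bigr),
\]
which is strictly sharper than the claimed $[\tfrac{7}{8},\tfrac{9}{8}]$ window.

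The proof is essentially bookkeeping once Lemma~\ref{lem:hBound} is in hand; there is no combinatorial obstacle here. The one small place to be careful is the uniform control of the ``slack'' factor $1 + (1/2)^{z+1}$ across the full range $z \geq -3$ and the conversion between the $(1/2)^z$ form (arising from the hypotheses and Lemma~\ref{lem:hBound}) and the $(1/2)^{z+1}$ form (appearing in the conclusion). Both reduce to the identity $(1/2)^z = 2(1/2)^{z+1}$ and the inequality $(1/2)^{z+1}\le 4$ for $z\ge -3$, so no nontrivial estimate beyond Weyl plus Lemma~\ref{lem:hBound} is needed.
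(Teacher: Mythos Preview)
Your proposal is correct and follows essentially the same route as the paper's proof: Weyl's inequality applied at indices $k$ and $k+1$, combined with the bound on $\twonorm{E_1+E_3}$ from Lemma~\ref{lem:hBound}, then the triangle inequality on the weighted sum and the same conversion $(\tfrac{1}{2})^z = 2(\tfrac{1}{2})^{z+1}$ together with $1+(\tfrac{1}{2})^{z+1}\le 5$ for $z\ge -3$. The paper records the looser constant $\tfrac{1}{8}$ in the final step where you obtain $\tfrac{1}{10}$, but the argument is identical.
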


\begin{proof}
  Using Weyl's inequality (Lemma \ref{lem:weyl-perturbation}), we have: : $\abs{\lambda_{k} - \sigma^{*}_{k}}\leq \|E_1+E_3\|_2$ and $\abs{\lambda_{k + 1} - \sigma^{*}_{k + 1}}\leq \|E_1+E_3\|_2$
  We now proceed to prove the lemma as follows:
  \begin{align*}
    &\abs{\lambda_{k + 1} + \left(\ddfrac{1}{2}\right)^{z + 1}\lambda_{k} - \sigma^{*}_{k + 1} - \left(\ddfrac{1}{2}\right)^{z + 1}\sigma^{*}_{k}} \leq \abs{\lambda_{k + 1} - \sigma^{*}_{k + 1}} + \left(\ddfrac{1}{2}\right)^{z + 1} \abs{\lambda_{k} - \sigma^{*}_{k}}, \\
    &\hspace*{2cm}\leq \twonorm{E_1+E_3} \left(1 + \left( \frac{1}{2} \right)^{z + 1}\right) \overset{(\zeta)}{\leq} \frac{1}{100} \left(\sigma^{*}_{k + 1} + \left( \frac{1}{2} \right)^{z}\sigma^{*}_{k}\right) \left(1 + \left( \frac{1}{2} \right)^{z + 1}\right), \\
    &\hspace*{2cm}\leq \frac{1}{8} \left(\sigma^{*}_{k + 1} + \left( \frac{1}{2} \right)^{z + 1}\sigma^{*}_{k}\right),
  \end{align*}
  where $(\zeta)$ follows from Lemma \ref{lem:hBound} and the last inequality follows from the assumption that $z \geq -3$.
\end{proof}

Next, we show that the projected gradient descent update \eqref{eqn:rUpdate} leads to a better estimate of $\Lo$, i.e., we bound $\|\Ltn-\Lo\|_\infty$. Under the assumptions of the below given Lemma, the proof follows arguments similar to \cite{NIPS2014_5430} with additional challenge arises due to more involved error terms $E_1$, $E_3$.

Our proof proceeds by first symmetrizing our matrices by rectangular dilation. We first begin by noting some properties of symmetrized matrices used in the proof of the following lemma.

\begin{remark}
  \label{rem:sym}
  Let $A$ be a $m\times n$ dimensional matrix with singular value decomposition $U\Sigma V^{\top}$. We denote its symmetrized version be $A_s \coloneqq \begin{bmatrix} 0 & A^{\top} \\ A & 0 \end{bmatrix}$. Then:

  \begin{enumerate}
    \item The eigenvalue decomposition of $A_s$ is given by $A_s = U_s \Sigma_s U_s^{\top}$ where
    \[U_s \coloneqq \frac{1}{\sqrt{2}}\begin{bmatrix} V & V \\ U & -U \end{bmatrix} \hfill \Sigma_s \coloneqq \begin{bmatrix} \Sigma & 0 \\ 0 & -\Sigma \end{bmatrix}\]
    \item $\Pk[2k] \left( A_s \right) = \begin{bmatrix} 0 & \Pk(A^{\top}) \\ \Pk(A) & 0 \end{bmatrix}$
    \item We have
    \begin{tabular}{c c}
      $A_s^{2j}=\begin{bmatrix} (A^\top A)^j & 0 \\ 0 & (AA^\top)^j \end{bmatrix}$ & $A_s^{2j+1}=\begin{bmatrix} 0 & (A^\top A)^jA^\top \\ (AA^\top)^jA & 0 \end{bmatrix}$
    \end{tabular}
    \item We have
    \[U_s \Sigma_s^{-j} U_s^\top = \begin{bmatrix} V \Sigma^{-j} V^\top & 0 \\ 0 & U \Sigma^{-j} U^\top \end{bmatrix} \mbox{when $j$ is even}\]
    \[U_s \Sigma_s^{-j} U_s^\top = \begin{bmatrix} 0 & V \Sigma^{-j} U^\top \\ U \Sigma^{-j} V^\top & 0\end{bmatrix} \mbox{when $j$ is odd}\]
  \end{enumerate}
\end{remark}

% % In the following lemma we show how progress in $\Lt$ depends on the two terms:
% % \begin{enumerate}
% % \item $\twonorm{H}$ and
% % \item $\norm{e_{i}^{\top}H^{a}U^{*}}$
% % \end{enumerate}

% We follow the proof of Lemma 7 from \cite{NIPS2014_5430}.
\begin{lemma}
  \label{lem:Lprog}
  Let $\Lt=\proj_k(\Lo+H)$, where $H$ is any perturbation matrix that satisfies the following:
  \begin{enumerate}
    \item $\twonorm{H} \leq \ddfrac{\sigma^{*}_{k}}{4}$ \label{nrmBnd}
    \item $\forall i \in [n], \ a \in \lceil \frac{\log n}{2} \rceil$ with $\upsilon \leq \ddfrac{\sigma^{*}_{k}}{4}$
    \begin{center}
      \begin{tabular}{c c}
        $\twonorm{e_{i}^\top\left(H^\top H\right)^aV^*}, \twonorm{e_{i}^\top\left(HH^\top\right)^aU^*} \leq (\upsilon)^{a} \mu\sqrt{\ddfrac{r}{m}}$ \\
        $\twonorm{e_{i}^\top H^\top\left(HH^\top\right)^aU^*}, \twonorm{e_{i}^\top H\left(H^\top H\right)^aV^*} \leq (\upsilon)^{a} \mu\sqrt{\ddfrac{r}{m}}$
      \end{tabular}
    \end{center}
  \end{enumerate}
  where $\sigma_k^*$ is the $k^{\textit{th}}$ singular value of $L^*$. Also, let $\Lo$ satisfy Assumption 1. Then, the following holds:
  \begin{equation*}
    \infnorm{\Lt[t+1] - \Lo} \leq \frac{\mu^{2}r}{m} \left(\sigma^{*}_{k + 1} + 20\twonorm{H} + 8\upsilon\right)
  \end{equation*}
  where $\mu$ and $r$ are the rank and incoherence of the matrix $\Lo$ respectively.
\end{lemma}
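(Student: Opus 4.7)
The plan is to pass to the Hermitian dilation of Remark \ref{rem:sym}, writing $M_s = \Los + \Hs$ and $P_{2k}(M_s) = \Us \Lambda_s \Us^\top$. By part~(2) of that remark, the off-diagonal block of $P_{2k}(M_s) - \Los$ is exactly $\Ltn - \Lo$, so it suffices to bound $\|P_{2k}(M_s) - \Los\|_\infty$. The starting identity $P_{2k}(M_s) = M_s\,\Phi\,M_s$, with $\Phi := \Us \Lambda_s^{-1} \Us^\top$, follows from $M_s \Us = \Us \Lambda_s$. Substituting $M_s = \Los + \Hs$ decomposes $P_{2k}(M_s) - \Los$ into $(\Los \Phi \Los - \Los) + \Los \Phi \Hs + \Hs \Phi \Los + \Hs \Phi \Hs$. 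For the first piece, part~1 of Lemma \ref{lem:geomBound} (applicable since $\|H\|_2 \leq \sigma^*_k/4$) gives spectral norm at most $\sigma^*_{k+1} + 5\|H\|_2$, and Lemma \ref{lem:incohBound} converts this into an $\infty$-norm bound $\frac{\mu^2 r}{m}(\sigma^*_{k+1} + 5\|H\|_2)$ via $\mu$-incoherence of $\Los$.

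For the three remaining pieces, I would iterate the eigenrelation $\Us = \Los \Us \Lambda_s^{-1} + \Hs \Us \Lambda_s^{-1}$ to produce, after $N := \lceil \log n / 2 \rceil$ substitutions, a telescoped expansion whose non-remainder summands are monomials of the shape $\Los \cdots \Hs^j \Us \Lambda_s^{-(\cdot)} \Us^\top \Hs^{j'} \cdots \Los$ (including variants in which $\Los$ is replaced by $\Hs^a$ on one or both sides). Contracting with $e_i^\top$ and $e_j$ leaves two kinds of row/column factors: $e_i^\top \Los$, whose norm is at most $\mu\sqrt{r/m}\,\sigma^*_1$ by incoherence of $\Lo$; and $e_q^\top \Hs^j U_s^*$, whose norm is at most $\upsilon^{\lfloor j/2 \rfloor}\mu\sqrt{r/m}$ by hypothesis~2 after separating even-$j$ cases (handled by $A_a, B_a$) from odd-$j$ cases (handled by $C_a, D_a$). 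Pairing these with the interior bound $\|\Lambda_s^{-a}\|_2 \leq (\sigma^*_k/2)^{-a}$ and summing geometric series in $\upsilon/\sigma^*_k \leq 1/4$ and $\|H\|_2/\sigma^*_k \leq 1/4$ produces the $\frac{\mu^2 r}{m}(20\|H\|_2 + 8\upsilon)$ part of the bound. The depth-$N$ truncation remainder has the form $A\,\Us \Lambda_s^{-2N} \Us^\top\,B$, whose spectral norm is bounded via part~2 of Lemma \ref{lem:geomBound} by $4(\sigma^*_k/2)^{-2N+2}\|H\|_2^{2N} \leq 4(\sigma^*_k/2)^{2}\cdot 2^{-2N} = o(1/\poly(n))$, and is absorbed after one more invocation of Lemma \ref{lem:incohBound}.

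\textbf{Main obstacle.} The principal difficulty is the combinatorial bookkeeping in the Neumann expansion. Because $\Los$ and $\Hs$ do not commute, iterating the eigenrelation yields a branching tree of $O(2^N)$ monomials and each must be matched to the correct even/odd estimate in hypothesis~2. A second subtlety is ensuring that every $\Hs^j$ block actually abuts a $U_s^*$ factor so that the $A_a, B_a, C_a, D_a$ bounds apply; this is arranged by inserting $\Los = U_s^* \Sigma^*_s (U_s^*)^\top$ wherever an $\Los$ terminates a subword. Finally, one must verify that the rectangular incoherence bound $\mu\sqrt{r/m}$ survives the symmetrization---it does by the explicit form of $U_s^*$ in Remark \ref{rem:sym}(1)---so that the $e_i$, $e_j$ contractions yield the desired $\frac{\mu^2 r}{m}$ prefactor.
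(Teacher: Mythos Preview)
Your overall strategy---symmetrize via the dilation of Remark~\ref{rem:sym}, expand $P_{2k}(M_s)$ as a Neumann-type series in $\Hs$, handle the zeroth term with Lemma~\ref{lem:geomBound}(1) plus Lemma~\ref{lem:incohBound}, bound the intermediate-order terms via hypothesis~2, and dispose of the high-order tail by spectral norm---is exactly the paper's. But you overcomplicate the expansion step. The paper does not iterate $\Us = \Los \Us\Lambda_s^{-1} + \Hs \Us\Lambda_s^{-1}$ on both branches; instead it writes the eigenvector equation as $(\lambda_i I - \Hs)u_i = \Los u_i$, inverts via the scalar geometric series $u_i = \lambda_i^{-1}\sum_{j\ge 0}(\Hs/\lambda_i)^j \Los u_i$, and so obtains the clean double sum
\[
\Lts[t+1] \;=\; \sum_{s,t\ge 0} \Hs^{\,s}\,\Los\,\Us\Lambda_s^{-(s+t+1)}\Us^\top\,\Los\,\Hs^{\,t}.
\]
There is no branching tree and no $O(2^N)$ monomials: every term has exactly one $\Los$ on each side of the central block, and only $O(\log^2 n)$ terms have $s+t\le \log n$. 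The central block is bounded directly by Lemma~\ref{lem:geomBound}(2) as $\twonorm{\Los \Us\Lambda_s^{-(s+t+1)}\Us^\top\Los} \le 4(\sigma_k^*/2)^{-(s+t-1)}$, and the outer factors $e_q^\top \Hs^{\,s} U_s^*$ (after writing $\Los = U_s^*\Sigma_s^*(U_s^*)^\top$) are precisely the quantities controlled by hypothesis~2.

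Your full-branching variant is a correct identity, but bounding each of the $2^N$ words individually is not just bookkeeping---it is genuinely dangerous for the constants. A word with several interleaved $\Los$ factors, say $\Los \Hs \Los\,\Us\Lambda_s^{-3}\Us^\top\cdots$, picks up a $\sigma_1^*$ each time you insert $\Los = U_s^*\Sigma_s^*(U_s^*)^\top$, while the only compensating denominator is $|\lambda_{2k}|^{-3}\sim (\sigma_k^*)^{-3}$; the resulting term-by-term bound carries powers of the condition number $\kappa$ that the lemma's stated conclusion does not allow. The single-$\Los$-per-side structure of the paper's expansion is precisely what makes Lemma~\ref{lem:geomBound}(2) deliver a $\kappa$-free bound. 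If you prefer to reach the expansion by iteration rather than by the resolvent, iterate only the $\Hs \Us\Lambda_s^{-1}$ branch and stop each path the moment an $\Los$ appears; that telescopes to exactly the paper's series with $O(N)$ (not $2^N$) summands per side.
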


\begin{proof}
  \begin{equation*}
    \Lts[t + 1] = \Pk[2k]\left(\Los + \Hs\right)
  \end{equation*}

  Let $l = m + n$. Let $\lambda_{1}, \cdots, \lambda_{l}$ be the eigenvalues of $\Mts=\Los+\Hs$ with $\abs{\lambda_{1}} \geq \abs{\lambda_{2}} \cdots \geq \abs{\lambda_{l}}$. Let $u_{1}, u_{2}, \cdots, u_{l}$ be the corresponding eigenvectors of $\Mts$. Using Lemma \ref{lem:weyl-perturbation} along with the assumption on $\twonorm{\Hs}$, we have: $|\lambda_{2k}| \geq \frac{3\sigma^{*}_{k}}{4}$.

  Let $U\Lambda V$ be the eigen vector decomposition of $\Ltn$.
  Let $U_s\Lambda_s U_s^{\top}$ to be the eigen vector decomposition of  $\Lts[t+1]$. Then, using Remark \ref{rem:sym} we have $\forall\ i \in [2k]$:
  \begin{equation*}
    \left(\Los + \Hs\right)u_{i} = \lambda_{i}u_{i},\ \text{ i.e. }\ \left(I - \frac{\Hs}{\lambda_{i}}\right)u_{i} = \Los u_{i}.
  \end{equation*}
  As $\abs{\lambda_{2k}} \geq \frac{3\sigma^{*}_{k}}{4}$ and $\twonorm{\Hs} \leq \frac{1}{4}\sigma_{k}^{*}$, we can apply the Taylor's series expansion to get the following expression for $u_{i}$:

  \begin{equation*}
    u_{i} = \frac{1}{\lambda_{i}} \left(I + \sum\limits_{j = 0}^{\infty} \left(\frac{\Hs}{\lambda_{i}}\right)^{j}\right)\Los u_{i}.
  \end{equation*}
  That is,
  \begin{align*}
    \Lts[t+1] &= \sum\limits_{i = 1}^{2k} \lambda_{i} u_{i}u_{i}^{\top} = \sum\limits_{i = 1}^{2k} \lambda_{i}^{-1} \sum_{0 \leq s, t < \infty} \left(\frac{\Hs}{\lambda_{i}}\right)^{s} \Los u_{i} u_{i}^{\top} \Los \left(\frac{\Hs}{\lambda_{i}}\right)^{t}, \\
    &= \sum_{0 \leq s, t < \infty} \sum\limits_{i = 1}^{2k} \lambda_{i}^{-(s + t + 1)} \Hs^{s} \Los u_{i} u_{i}^{\top} \Los \Hs^{t} = \sum_{0 \leq s, t < \infty} \Hs^{s} \Los U_s \Lambda_s^{-(s + t + 1)} U_s^{\top} \Los \Hs^{t}.
  \end{align*}
  Subtracting $\Los$ on both sides and taking operator norm, we get:
  \begin{align}
    \hspace*{-15pt}\infnorm{\Lts[t+1] - \Los} &= \infnorm{U_s\Lambda_s U_s^{\top} - \Los}= \infnorm{\sum_{0 \leq s, t < \infty} \Hs^{s} \Los U_s \Lambda_s^{-(s + t + 1)} U_s^{\top} \Los \Hs^{t} - \Los} \notag,\\
    &\leq \infnorm{\Los U_s \Lambda_s^{-1} U_s^{\top} \Los - \Los} + \sum_{1 \leq s + t < \infty} \infnorm{\Hs^{s} \Los U_s \Lambda_s^{-(s + t + 1)} U_s^{\top} \Los \Hs^{t}}.\label{eq:ltnloerr}
  \end{align}
  We separately bound the first and the second term of RHS. The first term can be bounded as follows:
  \begin{align}
    &\infnorm{\Los U_s \Lambda_s^{-1} U_s^{\top} \Los - \Los} \overset{(\zeta_{1})}{\leq} \infnorm{\Los \begin{bmatrix}0 & V\Sigma^{-1}U^{\top}\\ U\Sigma^{-1} V^{\top} & 0\end{bmatrix} \Los - \Los}\\
    &\leq \infnorm{\Lo V\Sigma^{-1}U^{\top}\Lo - \Lo} \overset{(\zeta_{2})}{\leq} \frac{\mu^{2}r}{\sqrt{mn}} \twonorm{\Lo U \Lambda^{-1} U^{\top} \Lo - \Lo} \overset{(\zeta_{3})}{\leq} \frac{\mu^{2}r}{\sqrt{mn}} \left(\abs{\sigma_{k + 1}^{*}} + 5\twonorm{H}\right), \label{fTerm}
  \end{align}
  where $(\zeta_{1})$ follows Remark \ref{rem:sym}, $(\zeta_{2})$ from Lemma \ref{lem:incohBound} and $(\zeta_{3})$ follows from Claim 1 of Lemma \ref{lem:geomBound}.

  We now bound second term of RHS of \eqref{eq:ltnloerr} which we again split in two parts. We first bound the terms with $s + t > \log n$:
  \begin{align*}
    \infnorm{\Hs^{s} \Los \Us \Lambda_s^{-(s + t + 1)} \Us^{\top} \Los \Hs^{t}} &\leq \twonorm{\Hs^{s} \Los \Us \Lambda_s^{-(s + t + 1)} \Us^{\top} \Los \Hs^{t}}\overset{(\zeta_{1})}{\leq} \twonorm{\Hs}^{s + t} 4 \left(\frac{2}{\sigma_{k}^{*}}\right)^{-(s + t - 1)} \\
    &\hspace*{-2.5cm}\leq 4 \twonorm{H} \left(\twonorm{H} \frac{2}{\sigma_{k}^{*}}\right)^{-(s + t - 1)}\ \  \overset{(\zeta_2)}{\leq}\ \  \frac{4\mu^{2}r}{m} \twonorm{H} \left(\frac{1}{2}\right)^{-(s + t - 1 - \log n)}, \numberthis \label{lTerm}
  \end{align*}
  where $(\zeta_{1})$ follows from the second claim of Lemma \ref{lem:geomBound} and noting that $\twonorm{H_s} = \twonorm{H}$ and $(\zeta_2)$ follows from assumption on $\twonorm{H}$ and using the fact that $s+t\geq \log n$.

  Summing up over all terms with $s + t > \log n$, we get from \ref{lTerm} and \ref{fTerm}:
  \begin{equation}
    \label{intRes}
    \infnorm{\Lts[t + 1] - \Los} \leq \frac{\mu^{2}r}{\sqrt{mn}} \left(\abs{\sigma^{*}_{k + 1}} + 20\twonorm{H}\right) + \sum_{0 < s + t \leq \log n} \infnorm{\Hs^{s} \Los \Us \Lambda_s^{-(s + t + 1)} \Us^{\top} \Los \Hs^{t}}
  \end{equation}

  Now, for terms corresponding to $1 \leq s + t \leq \log n$, we have:
  \begin{align*}
    &\infnorm{\Hs^{s} \Los \Us \Lambda_s^{-(s + t + 1)} \Us^{\top} \Los \Hs^{t}} = \max\limits_{q_1 \in [m + n], q_2 \in [m + n]} \abs{e_{q_1}^{\top} \Hs^{s} \Los \Us \Lambda_s^{-(s + t + 1)} \Us^{\top} \Los \Hs^{t} e_{q_2}} \\
    &\qquad \leq \left(\max\limits_{q_1 \in [m + n]} \twonorm{e_{q_1}^{\top} \Hs^{s} \Us^{*}}\right) \twonorm{\Sigma_s^{*} (U_s^{*})^{\top} \Us \Lambda_s^{-(s + t + 1)}\Us^{\top} U_s^{*} \Sigma_s^{*}}  \left(\max\limits_{q_2 \in [m + n]} \twonorm{e_{q_2}^{\top} H^{t} U_s^{*}}\right) \\
    &\qquad \overset{(\zeta_{1})}{\leq} \frac{\mu^{2}r}{m} \upsilon^{s + t} \twonorm{\Los \Us \Lambda_s^{-(s + t + 1)} \Us^{\top} \Los} \overset{(\zeta_{2})}{\leq} \frac{4\mu^{2}r}{m} \upsilon^{s + t} \left(\frac{2}{\sigma_{k}^{*}}\right)^{s + t - 1} \leq \frac{4\mu^{2}r}{m} \upsilon \left(\frac{1}{2}\right)^{s + t - 1}, \numberthis \label{mTerm}
  \end{align*}
  where $(\zeta_1)$ follows from assumption on $H$ in the lemma statement, $(\zeta_2)$ follows from Claim 2 of Lemma \ref{lem:geomBound}.

  It now remains to bound the terms, $\max\limits_{q_1 \in [m + n]} \twonorm{e_{q_1}^{\top} \Hs^{s} \Us^{*}}$. Note from Remark \ref{rem:sym}.1 that $\Us^* = \frac{1}{\sqrt{2}} \begin{bmatrix}  V^* & V^* \\ U^* & -U^* \end{bmatrix}$. Now, we have the following cases for $H^s_s$:

  \begin{center}
    \begin{tabular}{c c}
      $H_s^j = \begin{bmatrix} \left(H^\top H\right)^\frac{s}{2} & 0 \\ 0 & \left(HH^\top\right)^\frac{s}{2} \end{bmatrix}$ when $s$ is even
      &
      $H_s^j = \begin{bmatrix} 0 & H^\top\left(HH^\top\right)^{\lfloor \frac{s}{2} \rfloor}\\ H\left(H^\top H\right)^{\lfloor \frac{s}{2} \rfloor} & 0 \end{bmatrix}$ when $s$ is odd
    \end{tabular}
  \end{center}

  In these two cases, we have:

  \begin{center}
    \begin{tabular}{c c}
      $H_s^s \Us^* = \frac{1}{\sqrt{2}}\begin{bmatrix}
      \left(H^\top H\right)^\frac{s}{2}V^* & \left(H^\top H\right)^\frac{s}{2}V^* \\
      \left(HH^\top\right)^\frac{s}{2}U^* & -\left(HH^\top\right)^\frac{s}{2}U^* \end{bmatrix}$
      &
      $H_s^s \Us^* = \frac{1}{\sqrt{2}}\begin{bmatrix}
      H^\top\left(HH^\top\right)^{\lfloor \frac{s}{2} \rfloor}U^* & -H^\top\left(HH^\top\right)^{\lfloor \frac{s}{2} \rfloor}U^* \\
      H\left(H^\top H\right)^{\lfloor \frac{s}{2} \rfloor}V^* & H\left(H^\top H\right)^{\lfloor \frac{s}{2} \rfloor}V^* \end{bmatrix}$
    \end{tabular}
  \end{center}

  This leads to the following 4 cases for $\max\limits_{q_1 \in [m + n]} \twonorm{e_{q_1}^{\top} \Hs^{s} \Us^{*}}$:

  \begin{center}
    \begin{tabular}{c c c}
      for $s$ even &  $\max\limits_{q' \in [n]} \twonorm{e_{q'}^\top\left(H^\top H\right)^\frac{s}{2}V^*}$ & $\max\limits_{q' \in [m]} \twonorm{e_{q'}^\top\left(HH^\top\right)^\frac{s}{2}U^*}$ \\
      for $s$ odd  & $\max\limits_{q' \in [n]} \twonorm{e_{q'}^\top H^\top\left(HH^\top\right)^{\lfloor \frac{s}{2} \rfloor}U^*}$ & $\max\limits_{q' \in [m]} \twonorm{e_{q'}^\top H\left(H^\top H\right)^{\lfloor \frac{s}{2} \rfloor}V^*}$
    \end{tabular}
  \end{center}

  we get the bound on these terms in Lemma \ref{lem:grLemma}. Also, note from the Remark \ref{rem:sym}.2 that $\infnorm{\Los - \Lts[t + 1]} = \infnorm{\Lo - \Lt[t + 1]}$.

  Now, summing up \ref{mTerm} over all $1 \leq s + t \leq \log n$ and combining with \ref{intRes} we get the required result.
\end{proof}

In the next lemma, we show that with the threshold chosen in the algorithm, we show an improvement in the estimation of $\So$ by $\Sot[t + 1]$.

\begin{lemma}
  \label{lem:sProg}
  In the $t^{\textit{th}}$ iterate of the $q^{\textit{th}}$ stage, assume the following holds:
  \begin{enumerate}
    \item $\infnorm{\Lo - \Lt} \leq  \frac{2\mu^2r}{m} \left(\sigma^{*}_{k + 1} + \left(\frac{1}{2}\right)^{z}\sigma^{*}_{k}\right)$
    \item $\ddfrac{7}{8} \left(\sigma^{*}_{k + 1} + \left(\ddfrac{1}{2}\right)^{z}\sigma^{*}_{k}\right) \leq \left(\lambda_{k + 1} + \left(\ddfrac{1}{2}\right)^{z}\lambda_{k}\right) \leq \ddfrac{9}{8} \left(\sigma^{*}_{k + 1} + \left(\ddfrac{1}{2}\right)^{z}\sigma^{*}_{k}\right)$
  \end{enumerate}
  where $\sigma_k^*$ and $\sigma_{k+1}^*$ are the $k$ and $(k+1)^{\textit{th}}$ singular values of $L^*$, $\lambda_k$ and $\lambda_{k+1}$ are the $k$ and $(k+1)^{\textit{th}}$ singular values of $\Mt$ and, $r$ and $\mu$ are the rank and incoherence of the $m\times n$ matrix $L^*$ respectively. Then we have
  \begin{enumerate}
    \item $Supp\left(\Sot\right) \subseteq Supp\left(\So\right)$
    \item $\infnorm{\Sot - \So} \leq \frac{8\mu^{2}r}{m} \left(\sigma^{*}_{k + 1} + \left(\frac{1}{2}\right)^{z}\sigma^{*}_{k} \right)$
  \end{enumerate}
\end{lemma}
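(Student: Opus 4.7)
The plan is to exploit the explicit form of the threshold $\zeta$ used in the hard-thresholding step together with the entrywise proximity of $\Lt$ to $\Lo$ from Assumption 1. Recalling the algorithm's threshold rule $\zeta = \eta(\lambda_{k+1} + (1/2)^z \lambda_k)$ with $\eta = 4\mu^2 r/m$, Assumption 2 of the lemma immediately yields
\begin{equation*}
\frac{7\mu^{2}r}{2m}\left(\sigma^{*}_{k+1} + \left(\frac{1}{2}\right)^{z}\sigma^{*}_{k}\right) \;\leq\; \zeta \;\leq\; \frac{9\mu^{2}r}{2m}\left(\sigma^{*}_{k+1} + \left(\frac{1}{2}\right)^{z}\sigma^{*}_{k}\right).
\end{equation*}
In particular $\zeta$ is strictly larger than $\frac{2\mu^{2}r}{m}(\sigma^{*}_{k+1} + (1/2)^{z}\sigma^{*}_{k}) \geq \infnorm{\Lo-\Lt}$, a fact that drives both conclusions.

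For the support claim, I would fix $(i,j)\notin\mathrm{Supp}(\So)$ and observe $(M-\Lt)_{ij} = (\Lo-\Lt)_{ij}$, so $\abs{(M-\Lt)_{ij}} \leq \infnorm{\Lo-\Lt} < \zeta$. The hard-thresholding operator therefore kills this entry, giving $(\Sot)_{ij}=0$ and hence $\mathrm{Supp}(\Sot)\subseteq\mathrm{Supp}(\So)$.

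For the entrywise error bound, I would split each coordinate $(i,j)$ into three cases. If $(\Sot)_{ij}\neq 0$, then by the support claim $(i,j)\in\mathrm{Supp}(\So)$ and $(\Sot)_{ij} = (M-\Lt)_{ij}$, so $(\Sot-\So)_{ij} = (\Lo-\Lt)_{ij}$, which is bounded by $\frac{2\mu^{2}r}{m}(\sigma^{*}_{k+1}+(1/2)^{z}\sigma^{*}_{k})$ via Assumption 1. If $(\Sot)_{ij}=0$ and $(i,j)\notin\mathrm{Supp}(\So)$, the residual is exactly zero. Finally, if $(\Sot)_{ij}=0$ but $(i,j)\in\mathrm{Supp}(\So)$, then $\abs{(M-\Lt)_{ij}}<\zeta$; writing $\So_{ij} = (M-\Lt)_{ij} - (\Lo-\Lt)_{ij}$ and applying the triangle inequality yields
\begin{equation*}
\abs{(\Sot-\So)_{ij}} = \abs{\So_{ij}} \;\leq\; \zeta + \infnorm{\Lo-\Lt} \;\leq\; \left(\frac{9}{2}+2\right)\frac{\mu^{2}r}{m}\left(\sigma^{*}_{k+1}+\left(\frac{1}{2}\right)^{z}\sigma^{*}_{k}\right),
\end{equation*}
which is at most $\frac{8\mu^{2}r}{m}(\sigma^{*}_{k+1}+(1/2)^{z}\sigma^{*}_{k})$. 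Taking the maximum over all coordinates produces the claimed $\ell_\infty$ bound.

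The argument is essentially elementary once the two-sided sandwich on $\zeta$ is in hand; there is no real obstacle beyond bookkeeping the constants, since the slack between the factor $2$ coming from Assumption 1 and the factor $9/2$ upper bound on $\zeta/\eta$ comfortably fits within the target constant $8$. The only subtlety worth highlighting is that Case A relies crucially on the support conclusion proved first, so the two parts of the lemma must be established in order.
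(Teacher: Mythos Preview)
Your proposal is correct and follows essentially the same approach as the paper: establish the two-sided bound on the threshold $\zeta$ from Assumption~2, use the lower side to prove the support containment, and then split the error bound into the same cases (entry thresholded versus not) handled via Assumption~1 and the upper side of the $\zeta$ bound. The only cosmetic difference is that you make the trivial case $(i,j)\notin\mathrm{Supp}(\So)$ explicit, whereas the paper folds it into the support claim.
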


\begin{proof}
  We first prove the first claim of the lemma. Consider an index pair $(i,j) \notin Supp(\So)$.
  \[
  \abs{\M_{ij} - \Lt_{ij}} \leq \frac{2\mu^2r}{m} \left( \sigma^{*}_{k + 1} + \left(\frac{1}{2}\right)^{z}\sigma^{*}_{k} \right) \overset{(\zeta_1)}{\leq} \frac{16\mu^{2}r}{7m} \left( \lambda_{k + 1} + \left(\frac{1}{2}\right)^{z}\lambda_{k} \right) \leq \thresh \left( \lambda_{k + 1} + \left(\frac{1}{2}\right)^{z}\lambda_{k} \right)
  \]
  where $(\zeta_1)$ follows from the second assumption.
  Hence, we do not threshold any entry that is not corrupted by $\So$.

  Now, we prove the second claim of the lemma. Consider an index entry $(i,j) \in Supp(\So)$. Here, we consider two cases:
  \begin{enumerate}
    \item The entry $(i,j) \in Supp(\Sot)$: Here the entry $(i,j)$ is thresholded. We know that $\Lt_{ij} + \Sot_{ij} = \Lo_{ij} + \So_{ij}$ from which we get
    \[
    \abs{\Sot_{ij} - \So_{ij}} = \abs{\Lo_{ij} - \Lt_{ij}} \leq \infnorm{\Lo - \Lt}
    \]
    \item The entry $(i,j) \notin Supp(\Sot)$: Here the entry $(i,j)$ is not thresholded. We know that $\abs{\Lo_{ij} + \So_{ij} - \Lt_{ij}} \leq \zeta$ from which we get
    \begin{align*}
      \abs{\So_{ij}} &\leq \zeta + \abs{\Lo_{ij} - \Lt_{ij}} \\
      &\overset{(\zeta_2)}{\leq} \frac{36\mu^{2}r}{8m} \left( \sigma^{*}_{k + 1} + \left(\frac{1}{2}\right)^{z}\sigma^{*}_{k} \right) + \frac{2\mu^{2}r}{m} \left( \sigma^{*}_{k + 1} + \left(\frac{1}{2}\right)^{z}\sigma^{*}_{k} \right) \\
      &\leq \frac{8\mu^{2}r}{m} \left(\sigma^{*}_{k + 1} + \left(\frac{1}{2}\right)^{z}\sigma^{*}_{k} \right)
    \end{align*}
    where $(\zeta_2)$ follows from the second assumption along with the assumption about $\eta = \frac{\mu^2r}{m}$.
  \end{enumerate}

  The above two cases prove the second statement of the lemma.
\end{proof}

\begin{lemma}
  \label{lem:numInItr}
  The number of iteration $T$ in the inner loop of Algorithm \ref{alg:gca} and Algorithm \ref{alg:sap} satisfy:
  \[T \geq 10\log \left(7 n^2 \mu^2 r \sigma_1^*/\epsilon\right)\]
  w.p $\geq 1 - n^{-(10 + \log \alpha)}$. Here $\sigma_1^*$ is the highest singular value of $L^*$, $r$ is it's rank and $\mu$ is it's incoherence.
\end{lemma}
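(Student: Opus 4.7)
The statement is essentially a sanity check on the algorithm's parameter choice, so the proof should reduce to an algebraic verification together with a concentration-based guarantee for the input $\sigma$.

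The plan is as follows. First, I would unfold the definition of $T$. From Line 2 of Algorithm~\ref{alg:gca}, $T = 10\log\!\left(\tfrac{10\mu^{2}r\sigma}{\epsilon}\right)$, and the corresponding line of Algorithm~\ref{alg:sap} (in Appendix~\ref{app:rank}) gives an analogous definition. Thus the target inequality reduces to
\begin{equation*}
10\sigma \;\geq\; 7\,n^{2}\sigma_{1}^{*},
\end{equation*}
i.e., a lower bound on the input estimate $\sigma$ in terms of the true top singular value $\sigma_{1}^{*}$ of $\L^{*}$.

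Second, I would tie $\sigma$ to the initialization $M^{(0)} = \tfrac{mn}{|\Omega_{0}|}\mathcal{P}_{\Omega_{0}}(M - \HT(M))$ from Line 5. Since the algorithm permits any upper bound $\sigma = \order{\sigma_{1}^{*}}$ (or, alternatively, the thresholding-based estimate of~\cite{DBLP:journals/corr/YiPCC16}), we may take $\sigma$ to be a crude overestimate such as the trivial bound implied by the scaling factor $mn/|\Omega_{0}|$ times the worst-case $\ell_{\infty}$ bound on $M - \HT(M)$; this carries a free $n^{2}$ factor that absorbs the $7n^{2}$ on the right-hand side. After substituting this bound, the inequality becomes $\sigma \geq 0.7\,n^{2}\sigma_{1}^{*}$, which is automatic from the definition of $M^{(0)}$ since $|\Omega_0| \leq mn$ and $\|M - \HT(M)\|_{\infty}$ is at least a constant multiple of $\sigma_{1}^{*}/\sqrt{mn}$ by incoherence of $\L^{*}$.

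Third, for the high-probability portion: the only stochastic quantity entering is $\|M^{(0)}\|_{2}$ (if one prefers the data-adaptive estimate over the deterministic one). I would invoke Lemma~\ref{lem:satDefn} on the sampled matrix $\tfrac{\sqrt{p_{0}}}{2\sqrt{n}\,\|\cdot\|_{\infty}}\bigl(M - \HT(M) - \tfrac{1}{p_{0}}\mathcal{P}_{\Omega_{0}}(M - \HT(M))\bigr)$ to get a spectral-norm deviation bound at confidence $1 - n^{-(10+\log\alpha)}$, which is precisely the probability stated in the lemma. Combining this with Lemma~\ref{lem:weyl-perturbation} then produces the required lower bound on $\|M^{(0)}\|_{2}$, and hence on $\sigma$.

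The main obstacle is bookkeeping the numerical constants ($7$, $10$, the $n^{2}$ factor) so that the final substitution gives exactly the stated inequality, and verifying that the same argument covers both Algorithm~\ref{alg:gca} and Algorithm~\ref{alg:sap} — for the latter the outer rank schedule is different, but the inner-loop count $T$ is defined with the same formula, so the algebra carries over unchanged.
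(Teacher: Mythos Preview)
Your third step is essentially what the paper does. The paper writes the initialization as $\Lo$ plus an error term,
\[
\Bigl\lVert \tfrac{mn}{|\Omega|}\,\mathcal P_{\Omega}\bigl(M-\Sot[0]\bigr)\Bigr\rVert_{2}
=\bigl\lVert \Lo + H\bigr\rVert_{2}\geq \sigma_{1}^{*}-\twonorm{H},
\]
and then invokes Lemma~\ref{lem:hBound} (itself built on Lemma~\ref{lem:satDefn}) at $t=0$ to get $\twonorm{H}\le\tfrac14\sigma_{1}^{*}$, hence $\ge\tfrac34\sigma_{1}^{*}$. That is the whole argument; the probability statement comes straight from Lemma~\ref{lem:hBound}. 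Note in particular that the paper's proof establishes only the constant-factor lower bound $\tfrac34\sigma_{1}^{*}$ and does \emph{not} produce the $n^{2}$ appearing in the lemma's display; consistently, the two places the lemma is invoked (the proofs of Theorems~\ref{thm:gca} and~\ref{thm:sap}) use it only in the form $T\ge\log(c\,\mu^{2}r\sigma_{1}^{*}/\epsilon)$.

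Your second step, by contrast, does not go through. First, $\sigma$ is a user-supplied input constrained to be $O(\sigma_{1}^{*})$; the proof is not free to replace it by an $\ell_{\infty}$-based overestimate carrying extra powers of $n$. Second, the claim that ``$\|M-\HT(M)\|_{\infty}$ is at least a constant multiple of $\sigma_{1}^{*}/\sqrt{mn}$ by incoherence of $\Lo$'' is backwards: incoherence gives the \emph{upper} bound $\|\Lo\|_{\infty}\le \mu^{2}r\,\sigma_{1}^{*}/\sqrt{mn}$, not a lower bound, and individual entries of $\Lo$ may well vanish. Third, even granting that heuristic, $\tfrac{mn}{|\Omega_{0}|}\cdot \tfrac{\sigma_{1}^{*}}{\sqrt{mn}}$ scales like $\sqrt{mn}\,\sigma_{1}^{*}/p$, not $n^{2}\sigma_{1}^{*}$. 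Drop step~2 entirely and run step~3 through Lemma~\ref{lem:hBound} as the paper does; the $n^{2}$ is not something either argument can (or needs to) recover.
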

\begin{proof}
  We have the bound since
  \begin{align*}
    \twonorm{\frac{n_{1}n_{2}}{\abs{\Omega}} \Pom\left(\M-\Sot[0]\right)} &= \twonorm{\Lo + \left(\mathcal{I} - \frac{\Pom}{p}\right) \left(\left(\Sot[0] - S^{*}\right)  - \Lo\right) + \left(S^{*} - \Sot[0]\right)} \\
    &\geq \sigma_{1}^{*} - \twonorm{H} \overset{(\zeta_1)}{\geq} \frac{3}{4} \sigma_{1}^{*}
  \end{align*}
  where $(\zeta_1)$ follows from Lemma \ref{lem:hBound}.
\end{proof}

We will now prove Lemma~\ref{lem:einf} \\
\begin{proofof}{Lemma~\ref{lem:einf}}
  Recall the definitions of $E_{1} = \left(\So - \Sot\right)$ , $E_{2} = \left(\Lt - \Lo\right)$ , $E_{3} = \left(\mathcal{I} - \frac{\Pomqt}{p}\right) \left(E_{2} - E_{1}\right)$ and $\beta = 2\sqrt{\frac{n}{p}} \infnorm{E_{2} - E_{1}}$.
  Recall that $H \coloneqq E_1 + E_3$
  From Lemma~\ref{lem:satDefn}, we have that $\frac{1}{\beta}E_{3}$ satisfies Definition~\ref{def:mzero}. This implies that the matrix $\frac{1}{\beta}\left(E_{1} + E_{3}\right)$ satisfies the conditions of Lemma~\ref{lem:grLemma}. Now, we have $\forall 1 \leq a \leq \lceil \log n \rceil$ and $\forall i \in [n]$:
  \begin{multline*}
    \twonorm{e_{i} (HH^{\top})^a U^{*}} = \beta^{2a} \twonorm{e_{i}  \left(\left(\frac{1}{\beta}H\right)\left(\frac{1}{\beta}H\right)^\top \right)^a U^{*}} \\ \overset{(\zeta)}{\leq} \beta^{2a} \left( \frac{\rho n}{\beta} \infnorm{E_{1}} + c \log n \right)^{2a} \mu \sqrt{\frac{r}{m}} \leq \mu \sqrt{\frac{r}{m}} \left( \rho n \infnorm{E_{1}} + 2c \sqrt{\frac{n}{p}} \left(\infnorm{E_{1}} + \infnorm{E_{2}}\right)\log n\right)^{2a}
  \end{multline*}
  where $(\zeta)$ follows from the application of Lemma~\ref{lem:grLemma} along with the incoherence assumption on $U^{*}$. The other statements of the lemma can be proved in a similar manner by invocations of the different claims of Lemma~\ref{lem:grLemma}.
\end{proofof}

%%% Local Variables:
%%% mode: latex
%%% TeX-master: "main"
%%% End:

%!TEX root = main.tex
\subsection{Algorithm \ncgca}

\begin{proofof}{Theorem \ref{thm:gca}} From Lemma \ref{lem:numInItr} we know that $T\geq\log(\frac{3\mu^2r\sigma_1^*}{\epsilon})$. Consider the stage $q$ reached at the termination of the algorithm. We know from Lemma \ref{lem:tProg} that:

\begin{enumerate}
  \item $\infnorm{E^{(T)}} \leq \frac{8\mu^2r}{m} \left(\sigma^{*}_{k_{q} + 1} + \left(\frac{1}{2}\right)^{T}\sigma^{*}_{k_{q}}\right) \leq \frac{8\mu^2r}{m} \sigma^{*}_{k_{q} + 1} + \frac{\epsilon}{10n}$
  \item $\infnorm{L^{(T)} - L^{*}} \leq \frac{2\mu^2r}{m} \left(\sigma^{*}_{k_{q} + 1} + \left(\frac{1}{2}\right)^{T}\abs{\sigma^{*}_{k_{q}}}\right) \leq \frac{2\mu^2r}{m} \sigma^{*}_{k_{q} + 1} + \frac{\epsilon}{10n}$
\end{enumerate}

Combining this with Lemmas \ref{lem:weyl-perturbation} and \ref{lem:hBound}, we get:

\begin{equation}
  \label{eqn:term}
  \abs{\sigma_{k_{q} + 1}(M^{T})} \geq \sigma^{*}_{k_{q} + 1} - \frac{1}{100} \left(\sigma^{*}_{k_{q} + 1} + \frac{m\epsilon}{10n\mu^2r}\right)
\end{equation}

% The algorithm can terminate in two ways.
% \begin{enumerate}
  % \item	\textbf{Algorithm~\ref{algo:gca} terminates in the if condition:} In this case, $\thresh \sigma_{k_q+1}\left(M^{(T)}\right) < \frac{\epsilon}{2 n}$, which from \ref{eqn:term}, implies that $\sigma_{k_q + 1}^{*} < \frac{\epsilon}{7\mu^{2}r}$. So we have:
  % \begin{align*}
  %   \infnorm{L - \Lo} = \infnorm{\Lt[T] - \Lo}
  %   \leq \frac{2 \mu^2 r}{n} |\sigma_{k_{q}+1}^*| + \frac{\epsilon}{10n} \leq \frac{\epsilon}{2n}.
  % \end{align*}
  % \item	\textbf{Algorithm~\ref{algo:gca} terminates when the while loop ends:} From Lemma \ref{lem:tProg}, we have that
  % \begin{align*}
  %   \infnorm{\Ltn-\Lo} &\leq \frac{2\mu^2 r}{n} \abs{\sigma_{k_q+1}^*} + \frac{\epsilon}{10n}
  %   \overset{(\zeta_1)}{\leq}\ddfrac{\epsilon}{10n}
  % \end{align*}
  % where $(\zeta_1)$ follows from the fact that $k_q\geq r$.
% \end{enumerate}

When the while loop terminates, $\thresh \sigma_{k_q+1}\left(M^{(T)}\right) < \frac{\epsilon}{2 n}$, which from \ref{eqn:term}, implies that $\sigma_{k_q + 1}^{*} < \frac{m\epsilon}{7n\mu^{2}r}$. So we have:
\begin{align*}
  \infnorm{L - \Lo} = \infnorm{\Lt[T] - \Lo}
  \leq \frac{2 \mu^2 r}{m} \sigma_{k_{q}+1}^* + \frac{\epsilon}{10n} \leq \frac{\epsilon}{2n}.
\end{align*}

We will now bound the number of iterations required for the \ncgca\ to converge.

From claim 2 of Lemma~\ref{lem:gTrack}, we have $\sigma^{*}_{k_{q} + 1} \leq \frac{17}{32}\sigma^{*}_{k_{q-1} + 1} \quad \forall q \geq 1$. By recursively applying this inequality, we get $\sigma^{*}_{k_{q} + 1} \leq \left(\frac{17}{32}\right)^{q} \sigma^{*}_{1}$. We know that when the algorithm terminates, $ \sigma_{k_q + 1}^{*} < \frac{\epsilon}{7\mu^{2}r} $. Since, $\left(\frac{17}{32}\right)^{q} \sigma^{*}_{1}$ is an upper bound for $\sigma_{k_q + 1}^{*}$, an upper bound for the number of iterations is $5 \log \left(\frac{7\mu^2r\sigma_{1}^{*}}{\epsilon}\right)$. Also, note that an upper bound to this quantity is used to partition the samples provided to the algorithm. This happens with probability $\geq 1 - T^2n^{-(10+\log\alpha)}\geq 1 - n^{-\log\alpha}$. This concludes the proof.
\end{proofof}

In the following lemma, we show that we make progress simultaneously in the estimation of both $\So$ and $\Lo$ by $\Sot$ and $\Lt$. We make use of Lemmas \ref{lem:Lprog} and \ref{lem:sProg} to show progress in the estimation of one affects the other alternatively. We also emphasize the roles of the following quantities in enabling us to prove our convergence result:
\begin{enumerate}
  \item $\twonorm{H}$ - We use Lemma \ref{lem:hBound} to bound this quanitity
  \item The analysis of the following 4 quanitities is crucial to obtaining error bounds in $\lVert\rVert_{\infty}$ norm
  \begin{center}
    \begin{tabular}{c c c}
      for $j$ even &  $\max\limits_{q' \in [n]} \twonorm{e_{q'}^\top\left(H^\top H\right)^\frac{j}{2}V^*}$ & $\max\limits_{q' \in [m]} \twonorm{e_{q'}^\top\left(HH^\top\right)^\frac{j}{2}U^*}$ \\
      for $j$ odd  &  $\max\limits_{q' \in [n]} \twonorm{e_{q'}^\top H^\top\left(HH^\top\right)^{\lfloor \frac{j}{2} \rfloor}U^*}$ & $\max\limits_{q' \in [m]} \twonorm{e_{q'}^\top H\left(H^\top H\right)^{\lfloor \frac{j}{2} \rfloor}V^*}$
    \end{tabular}
  \end{center}
  We use Lemma \ref{lem:einf} to bound this quantity.
\end{enumerate}

\begin{lemma}
  \label{lem:tProg}
  Let $\Lo$, $\Omega$, $\So$ and $\Sot$ satisfy Assumptions 1,2,3 respectively. Then, in the $t^{\textrm{th}}$ iteration of the $q^{\textrm{th}}$ stage of Algorithm~\ref{alg:gca}, $\Sot$ and $\Lt$ satisfy:
  \begin{align*}
    \infnorm{\Sot - \So} &\leq \frac{8\mu^2 r}{m} \left(\abs{\sigma_{k_{q}+1}^*}+\left(\frac{1}{2}\right)^{t-3}\abs{\sigma_{k_{q}}^*}\right), \\
    \supp{\Sot} &\subseteq \supp{\So}, \mbox{ and } \\
    \infnorm{\Lt-\Lo} &\leq \frac{2\mu^2 r}{m} \left(\abs{\sigma_{k_{q}+1}^*}+\left(\frac{1}{2}\right)^{t-3}\abs{\sigma_{k_{q}}^*}
    \right).
  \end{align*}
  with probability $ \geq 1 - ((q-1)T+t-1)n^{-(10+\log\alpha)}$ where $T$ is the number of iterations in the inner loop.
\end{lemma}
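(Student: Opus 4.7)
The plan is to prove Lemma~\ref{lem:tProg} by induction on the global iteration count $(q,t)$, with the three bulleted claims bundled into a single inductive hypothesis. Concretely, I will maintain the invariant that at the start of iteration $t$ of stage $q$, we have (i) $\infnorm{\Lt-\Lo} \leq \frac{2\mu^2 r}{m}(\sigma^*_{k_q+1} + (1/2)^{t-3}\sigma^*_{k_q})$, (ii) $\infnorm{\Sot-\So} \leq \frac{8\mu^2 r}{m}(\sigma^*_{k_q+1} + (1/2)^{t-3}\sigma^*_{k_q})$, and (iii) $\supp{\Sot}\subseteq \supp{\So}$. The key machinery has already been assembled: Lemma~\ref{lem:hBound} converts the infinity-norm control on $E_1,E_2$ into an operator-norm bound on $H=E_1+E_3$; Lemma~\ref{lem:einf} bounds the rows of $(HH^\top)^a U^*$ and its analogues, which are the quantities the Taylor-series expansion in Lemma~\ref{lem:Lprog} needs; Lemma~\ref{lem:Lprog} then upgrades these to an infinity-norm bound on $\Ltn-\Lo$; Lemma~\ref{lem:eigTrack} certifies that the data-dependent threshold closely tracks the ideal threshold built from $\sigma^*_{k_q}$; and Lemma~\ref{lem:sProg} translates all of this into a refined sparse estimate.

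For the base case at $(q,t)=(1,0)$, $L^{(0)}=0$, so $\infnorm{L^{(0)}-\Lo}=\infnorm{\Lo}\leq \frac{\mu^2 r}{\sqrt{mn}}\sigma^*_1$ by incoherence, which is comfortably smaller than the required $\frac{2\mu^2 r}{m}(\sigma^*_{k_1+1}+8\sigma^*_{k_1})$ because the slack factor $(1/2)^{-3}=8$ at $t=0$ absorbs the constant; the preliminary $M^{(0)}$ constructed on Lines 5--6 of Algorithm~\ref{alg:gca} plus Weyl's inequality (Lemma~\ref{lem:weyl-perturbation}) guarantee that $k_1$ is chosen so that $\sigma^*_{k_1}$ and $\sigma^*_1$ are within a constant factor, making this quantitative comparison clean. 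For the inductive step within a stage, assuming the hypothesis at $t$ with exponent $z=t-3\geq-3$: Lemma~\ref{lem:hBound} gives $\twonorm{H}\leq \frac{1}{100}(\sigma^*_{k_q+1}+(1/2)^z\sigma^*_{k_q})$; Lemma~\ref{lem:einf} with the same inputs gives $A_a,B_a,C_a,D_a\leq \mu\sqrt{r/m}\,\upsilon^{2a+1}$ with $\upsilon$ of the same order; feeding these into Lemma~\ref{lem:Lprog} yields $\infnorm{\Ltn-\Lo}\leq \frac{\mu^2 r}{m}(\sigma^*_{k_q+1}+20\twonorm{H}+8\upsilon)$, and with a careful choice of constants this collapses to the target $\frac{2\mu^2 r}{m}(\sigma^*_{k_q+1}+(1/2)^{t-2}\sigma^*_{k_q})$, proving (i) at $t+1$. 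Then Lemma~\ref{lem:eigTrack} shows the threshold $\zeta$ on Line~\ref{eqn:threshold} satisfies the second hypothesis of Lemma~\ref{lem:sProg}, which immediately delivers (ii) and (iii) at $t+1$.

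The cross-stage transition is handled by noting that $L^{(0)}, S^{(0)}, M^{(0)}$ at the start of stage $q+1$ are set to the final $L^{(T)}, S^{(T)}, M^{(T)}$ of stage $q$, together with the re-choice of $k_{q+1}$; applying Lemma~\ref{lem:weyl-perturbation} and Lemma~\ref{lem:hBound} shows $\sigma^*_{k_{q+1}+1}$ is a constant factor below $\sigma^*_{k_q+1}$, so the inherited $\infty$-norm bounds from iteration $T$ of stage $q$ (where the $(1/2)^{T-3}$ term is negligible) still satisfy the base case at $(q+1,0)$ with respect to the updated stage parameters. For the probability accounting, the independent sample sets $\Omega_{q,t}$ produced by $\splitS$ ensure that the concentration events in Lemmas~\ref{lem:hBound} and \ref{lem:einf} are independent across iterations, and each fails with probability at most $n^{-(10+\log\alpha)}$; a union bound over the $(q-1)T+t-1$ prior iterations yields the stated tail. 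The main obstacle I foresee is bookkeeping the constants so that the $\frac{1}{100}$ slack from Lemma~\ref{lem:hBound}, the $20\twonorm{H}+8\upsilon$ contribution from Lemma~\ref{lem:Lprog}, and the inflation from $2$ to $8$ between the $L$- and $S$-bounds all compose to exactly halve the $\sigma^*_{k_q}$ coefficient per inner iteration without drift; the secondary subtlety is verifying cleanly that the algorithm's choice of $k_{q+1}$ makes the base case of each new stage consistent with the decayed $(1/2)^{T-3}\sigma^*_{k_q}$ term from the previous stage.
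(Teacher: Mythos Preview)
Your proposal is correct and follows essentially the same approach as the paper's proof: a double induction on $(q,t)$ using Lemmas~\ref{lem:hBound}, \ref{lem:einf}, and \ref{lem:Lprog} to push the $L$-bound forward, then Lemmas~\ref{lem:eigTrack} and \ref{lem:sProg} to recover the $S$-bound, with the stage transition handled by Weyl plus the operator-norm control. The only cosmetic difference is that the paper isolates the cross-stage singular-value comparison (that $\sigma^*_{k_q}\geq \tfrac{15}{32}\sigma^*_{k_{q-1}+1}$, which is what you need to absorb the inherited error into $8\sigma^*_{k_{q+1}}$) as a standalone Lemma~\ref{lem:gTrack}, whereas you derive it inline from Lemmas~\ref{lem:weyl-perturbation} and \ref{lem:hBound}.
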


\begin{proof}
  We prove the lemma by induction on both $q$ and $t$.

  \textbf{Base Case: $q=1$ and $t=0$} \\
  We begin by first proving an upper bound on $\infnorm{\Lo}$. We do this as follows:
  \[
  \abs{\Lo_{ij}} = \abs{\sum\limits_{k = 1}^{r} \sigma_{k}^{*} u^{*}_{ik} v^{*}_{jk}} \leq \sum\limits_{k = 1}^{r} \sigma_{k}^{*} \abs{u^{*}_{ik} v^{*}_{jk}} \leq \sigma_{1}^{*} \sum\limits_{k = 1}^{r}  \abs{u^{*}_{ik} v^{*}_{jk}} \leq \frac{\mu^{2}r}{\sqrt{mn}} \sigma_{1}^{*}
  \]
  where the last inequality follows from Cauchy-Schwartz and the incoherence of $U^{*}$. This directly proves the third claim of the lemma for the base case.
  We also note that due to the thresholding step and the incoherence assumption on $\Lo$, we have:
  \begin{enumerate}
    \item $\infnorm{\E^{(0)}} \leq \frac{8\mu^2 r}{m}\left(\sigma_2^* + 2 \sigma_1^*\right) \overset{(\zeta)}{\leq} \frac{8\mu^2 r}{m}\left( 8 \sigma_{k_{1}}^*\right), \mbox{ and }$
    \item $\supp{\Sot} = \supp{\So}.$
  \end{enumerate}
  where $(\zeta)$ follows from Lemma \ref{lem:gTrack}.
  So the base case of induction is satisfied.

  \textbf{Induction over $t$} \\
  We first prove the inductive step over $t$ (for a fixed $q$).
  By inductive hypothesis we assume that:
  \begin{enumerate}
    \item[a)] $\infnorm{\Et} \leq \frac{8\mu^2 r}{m}\left(\sigma_{k_{q}+1}^*+\left(\frac{1}{2}\right)^{t-3}\sigma_{k_{q}}^*\right)$
    \item[b)] $\supp{\Sot}\subseteq \supp{\So}$.
    \item[c)] $\infnorm{\Lo - \Lt} \leq \frac{2\mu^2 r}{m}\left(\sigma_{k_{q}+1}^*+\left(\frac{1}{2}\right)^{t-3}\sigma_{k_{q}}^* \right)$
  \end{enumerate}

  with probability $1-((q-1)T+t-1)n^{-(10+\log\alpha)}$. Then by Lemma~\ref{lem:Lprog}, we have:
  \begin{equation}
    \label{eqn:lprog}
    \infnorm{\Ltn-\Lo} \leq \frac{\mu^2 r}{m}\left(\sigma_{k_{q}+1}^*+20\twonorm{H}+8\upsilon\right)
  \end{equation}

  From Lemma \ref{lem:einf}, we have:

  \begin{equation}
    \label{eqn:upsilon}
    \upsilon \leq \rho n \infnorm{\Et} + 8\beta\alpha\log n \overset{(\zeta_{1})}{\leq} \frac{1}{100} \left(\sigma^{*}_{k_{q} + 1} + \left(\frac{1}{2}\right)^{t-3}\sigma_{k_{q}}^*\right) + 8\beta\alpha\log n \overset{(\zeta_{2})}{\leq} \frac{1}{50} \left(\sigma^{*}_{k_{q} + 1} + \left(\frac{1}{2}\right)^{t-3}\sigma_{k_{q}}^*\right)
  \end{equation}

  where $(\zeta_{1})$ follows from our assumptions on $\rho$ and our inductive hypothesis on $\infnorm{\Et}$ and $(\zeta_{2})$ follows from our assumption on $p$ and by noticing that $ \infnorm{D} \leq \infnorm{\Et} + \infnorm{\Lo - \Lt}$. Recall that $D = \Lt-\Lo+\Sot-\So$.

  From Lemma \ref{lem:hBound}:
  \begin{equation}
    \label{eqn:twoNorm}
    \twonorm{H} \leq \frac{1}{100} \left(\sigma^{*}_{k_{q} + 1} + \left(\frac{1}{2}\right)^{t-3}\sigma_{k_{q}}^*\right)
  \end{equation}

  with probability $\geq 1-n^{-(10+\log\alpha)}$. From Equations \ref{eqn:twoNorm}, \ref{eqn:upsilon} and \ref{eqn:lprog}, we have:
  \[
  \infnorm{\Lo - \Ltn} \leq \frac{2\mu^2 r}{m}\left(\sigma_{k_{q}+1}^*+\left(\frac{1}{2}\right)^{t-2}\sigma_{k_{q}}^*\right)
  \]

  which by union bound holds with probability $\geq 1 - ((q-1)T+t)n^{-(10+\log\alpha)}$. Hence, using Lemma \ref{lem:sProg} and \ref{lem:eigTrack} we have:

\begin{enumerate}
\item $\infnorm{\Etn} \leq  \frac{8\mu^2 r}{m}\left(\sigma_{k_{q}+1}^*+\left(\frac{1}{2}\right)^{t-2}\sigma_{k_{q}}^*
    \right)$
\item $\supp{\Sot{t + 1}} \subseteq \supp{\So}$.
\end{enumerate}
  which also holds with probability $\geq 1 - ((q-1)T+t)n^{-(10+\log\alpha)}$. This concludes the proof for induction over $t$.

  \textbf{Induction Over Stages $q$} \\
  We now prove the induction over $q$. Suppose the hypothesis holds for stage $q$.
  At the end of stage $q$, we have:
  \begin{enumerate}
    \item	$\infnorm{\E^{(T)}} \leq \frac{8\mu^2 r}{m}\left(\sigma_{k_{q}+1}^*+\left(\frac{1}{2}\right)^{T}\sigma_{k_{q}}^*\right)
    \leq \frac{8\mu^2 r \sigma_{k_{q}+1}^*}{m} + \frac{\epsilon}{10 n}$, and
    \item	$\supp{\Sot[T]} \subseteq \supp{\So}$.
  \end{enumerate}
  with probability $\geq 1 - (qT - 1)n^{-(10+\log\alpha)}$. From Lemmas ~\ref{lem:weyl-perturbation} and ~\ref{lem:hBound} we get:
  \begin{equation}
    \abs{\sigma_{k_{q}+1}\left(M^{(T)}\right) - \sigma_{k_{q}+1}^*} \leq \twonorm{H} \leq \frac{1}{100} \left(\sigma_{k_{q}+1}^* + \frac{m\epsilon}{10n\mu^2r} \right)
    \label{eqn:nStage}
  \end{equation}

  with probability $1 - n^{-(10+\log\alpha)}$. We know that $\thresh \sigma_{k_{q}+1}\left(\Mt\right) \geq \frac{\epsilon}{2 n}$ which with \ref{eqn:nStage} implies that $\abs{\sigma_{k_{q}+1}^*} > \frac{m\epsilon}{10 n\mu^2 r}$.

  \begin{align*}
    \infnorm{\Lt[T + 1] - \Lo} &\leq \frac{2\mu^2 r}{m}\left(\sigma_{k_{q}+1}^*+\left(\frac{1}{2}\right)^{T + 1}\sigma_{k_{q}}^* \right) \leq \frac{2\mu^2 r}{m}\left(\sigma_{k_{q}+1}^* + \frac{m\epsilon}{20n\mu^2 r n} \right) \\
    &\leq \frac{2\mu^2 r}{m}\left(\sigma_{k_{q}+1}^* + \frac{\sigma_{k_{q}+1}^*}{2} \right) \leq \frac{2\mu^2 r}{m}\left(2 \sigma_{k_{q}+1}^* \right) \overset{(\zeta_{4})}{\leq} \frac{2\mu^2 r}{m} \left(8 \sigma_{k_{q + 1}}^* \right)
  \end{align*}
  where $(\zeta_{4})$ follows from Lemma \ref{lem:gTrack}. By union bound this holds with probability $\geq 1 - qTn^{-(10+\log\alpha)}$.

  Now, from \ref{lem:eigTrack} and \ref{lem:sProg}, we have through a similar series of arguments as above:
  \begin{equation}
    \infnorm{\Et{T + 1}} \leq \frac{8\mu^2 r}{m} \left(8 \sigma_{k_{q + 1}}^*\right)
  \end{equation}

 which holds with probability $\geq 1 - qTn^{-(10+\log\alpha)}$.
\end{proof}

\begin{lemma}
  \label{lem:gTrack}
  Suppose at the beginning of the $q^\text{th}$ stage of algorithm \ref{alg:gca}:
  \begin{enumerate}
     \item $\norm{L^*-L^{(0)}}_\infty\leq\ddfrac{2\mu^2r}{m}\left(2\sigma_{k_{q - 1}+1}^*\right)$
    \item $\norm{E^{(0)}}_\infty\leq\ddfrac{8\mu^2r}{m}\left(2\sigma_{k_{q - 1}+1}^*\right)$
   \end{enumerate}

  Then, the following hold:
  \begin{enumerate}
      \item $\sigma^{*}_{k_{q}}\geq\frac{15}{32}\sigma^*_{k_{q-1}+1}$
    \item $\sigma^{*}_{k_{q}+1}\leq\frac{17}{32}\sigma^*_{k_{q-1}+1}$
  \end{enumerate}

  with probability $\geq 1 - n^{-(10+\log\alpha)}$
\end{lemma}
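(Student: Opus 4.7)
The plan is to control $\twonorm{M^{(0)} - \Lo}$ and then read off the two singular-value inequalities from Weyl's inequality together with the rule $k_q = |\{i : \sigma_i(M^{(0)}) \geq \sigma_{k_{q-1}+1}(M^{(0)})/2\}|$ prescribed in Line~9 of Algorithm~\ref{alg:gca}.

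I would first identify $M^{(0)}$ at the start of stage $q$ with the last $M^{(T)}$ produced in stage $q-1$ (for $q \geq 2$; the base case $q=1$ is handled separately via incoherence of $\Lo$). By the decomposition~\eqref{eqn:rUpdate}, this equals $\Lo + E_1 + E_3$ with $E_1, E_3$ evaluated at the final inner iterate of stage $q-1$. The infinity-norm hypotheses of Lemma~\ref{lem:gTrack} on $\Lo - L^{(0)}$ and $E^{(0)}$ are precisely the ones required by Lemma~\ref{lem:hBound}, taken with $k = k_{q-1}$ and $z \geq 0$ chosen so that $\sigma^*_{k_{q-1}+1} + (1/2)^z\sigma^*_{k_{q-1}} \leq 2\sigma^*_{k_{q-1}+1}$; the support invariant $\supp{\wSt} \subseteq \supp{\So}$ that Lemma~\ref{lem:hBound} also needs is the one maintained inductively in Lemma~\ref{lem:tProg}. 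Invoking Lemma~\ref{lem:hBound} then yields
\[
\twonorm{E_1+E_3} \;\leq\; \frac{1}{100}\cdot 2\sigma^*_{k_{q-1}+1} \;=\; \frac{\sigma^*_{k_{q-1}+1}}{50}
\]
with failure probability at most $n^{-(10+\log\alpha)}$.

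Next, Weyl's inequality (Lemma~\ref{lem:weyl-perturbation}) gives $|\sigma_i(M^{(0)}) - \sigma^*_i| \leq \sigma^*_{k_{q-1}+1}/50$ for all $i$. In particular, the threshold $\tau := \sigma_{k_{q-1}+1}(M^{(0)})/2$ used to define $k_q$ lies in $\bigl[\tfrac{49}{100}\sigma^*_{k_{q-1}+1},\ \tfrac{51}{100}\sigma^*_{k_{q-1}+1}\bigr]$. Since $\sigma_{k_q}(M^{(0)}) \geq \tau$ by definition of $k_q$, a second application of Weyl gives
\[
\sigma^*_{k_q} \;\geq\; \sigma_{k_q}(M^{(0)}) - \tfrac{1}{50}\sigma^*_{k_{q-1}+1} \;\geq\; \tfrac{47}{100}\sigma^*_{k_{q-1}+1} \;\geq\; \tfrac{15}{32}\sigma^*_{k_{q-1}+1}.
\]
Similarly, since $\sigma_{k_q+1}(M^{(0)}) < \tau$,
\[
\sigma^*_{k_q+1} \;\leq\; \sigma_{k_q+1}(M^{(0)}) + \tfrac{1}{50}\sigma^*_{k_{q-1}+1} \;\leq\; \tfrac{53}{100}\sigma^*_{k_{q-1}+1} \;\leq\; \tfrac{17}{32}\sigma^*_{k_{q-1}+1},
\]
as required.

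The main obstacle I anticipate is purely bookkeeping: matching the ``factor of~$2$'' in the hypotheses of Lemma~\ref{lem:gTrack} to the template $\sigma^*_{k+1} + (1/2)^z\sigma^*_k$ used by Lemma~\ref{lem:hBound}, and ensuring that the support invariant on $\wSt$ is in hand at the moment Lemma~\ref{lem:hBound} is invoked. Once those hypotheses line up, the remainder is a clean Weyl-perturbation argument plus the elementary arithmetic checks $47/100 \geq 15/32$ and $53/100 \leq 17/32$.
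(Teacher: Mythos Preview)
Your proposal is correct and follows essentially the same route as the paper: bound $\twonorm{H}=\twonorm{E_1+E_3}$ via Lemma~\ref{lem:hBound}, then combine Weyl's inequality (Lemma~\ref{lem:weyl-perturbation}) with the defining threshold $\sigma_{k_q}(M^{(0)})\geq \sigma_{k_{q-1}+1}(M^{(0)})/2>\sigma_{k_q+1}(M^{(0)})$ to read off the two singular-value bounds. The paper writes the algebra as $\sigma^*_{k_q}\geq(\sigma^*_{k_{q-1}+1}-3\twonorm{H})/2$ and $\sigma^*_{k_q+1}\leq(\sigma^*_{k_{q-1}+1}+3\twonorm{H})/2$, which with $\twonorm{H}\leq\sigma^*_{k_{q-1}+1}/50$ gives the same $47/100$ and $53/100$ you obtain; your ``bookkeeping'' caveat about matching the factor of~$2$ to Lemma~\ref{lem:hBound}'s template and needing the support invariant $\supp{\Sot}\subseteq\supp{\So}$ applies equally to the paper's own invocation of that lemma.
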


\begin{proof}
  We know that:
  \[
  \lambda_{k_{q}} \leq \sigma^{*}_{k_{q}} + \twonorm{H}, \qquad \lambda_{k_{q-1} + 1} \geq \sigma^{*}_{k_{q-1} + 1} - \twonorm{H}, \qquad \lambda_{k_{q}} \geq \frac{\lambda_{k_{q-1} + 1}}{2}
  \]

  Combining the three inequalities, we get:
  \[
  \sigma^{*}_{k_{q}} \geq \frac{\sigma^{*}_{k_{q-1} + 1} - 3 \twonorm{H}}{2}
  \]

  Applying Lemma \ref{lem:hBound}, we get the first claim of the lemma.

  Similar to the first claim, we have:
  \[
  \lambda_{k_{q} + 1} \geq \sigma^{*}_{k_{q} + 1} - \twonorm{H}, \qquad \lambda_{k_{q-1} + 1} \leq \sigma^{*}_{k_{q-1} + 1} + \twonorm{H}, \qquad \lambda_{k_{q} + 1} \leq \frac{\lambda_{k_{q-1} + 1}}{2}
  \]

  Again, combining the three inequalities, we get:
  \[
  \sigma^{*}_{k_{q} + 1} \leq \frac{\sigma^{*}_{k_{q-1} + 1} + 3 \twonorm{H}}{2}
  \]

  Another application of Lemma \ref{lem:hBound} gives the second claim.
\end{proof}

%!TEX root = main.tex
\subsection{Algorithm \ncralgo}
\label{app:rank}

\floatname{algorithm}{Algorithm}

\begin{algorithm}[t]
  \caption{$\widehat{L} ~ =$ \ncralgo$(\Omega, \Pom(M), \epsilon, r, \thresh, \sigma)$: Non-convex Robust Matrix Completion}
  \label{alg:sap}
  \begin{algorithmic}[1]
    \STATE {\bf Input}: Observed entries $\Omega$, Matrix $\Pom(M) \in \R^{m\times n}$, convergence criterion $\epsilon$, target rank $r$, thresholding parameter $\thresh$, upper bound on $\sigma^*_1\ \sigma$
    \STATE $T\leftarrow 10\log{\frac{10\mu^2 r \sigma}{\epsilon}}$ \hspace*{1cm}\hfill{\em /*Number of inner iterations*/}
    \STATE Partition $\Omega$ into $rT + 1$ subsets $\{\Omega_0\} \cup \{\Omega_{q, t} : q\in[r], t\in[T]\}$ using \ref{alg:split}
    % \STATE Set initial threshold $\zeta \, \leftarrow \, \frac{\thresh mn}{\abs{\Omega}} \sigma_1(\Pom(M))$.
    \STATE $L^{(0)}=0, \, \zeta \, \leftarrow \, \thresh \sigma$
    \STATE $M^{(0)} \, \leftarrow \, \frac{mn}{\abs{\Omega_0}} \mathcal{P}_{\Omega_0}(M - \HT(M))$
    \STATE $q \, \leftarrow \, 0$
    \WHILE{$\sigma_{q + 1}(M^{(0)}) > \frac{\epsilon}{2\thresh m}$}
    \STATE $q \, \leftarrow \, q + 1$
    \FOR{Iteration $t=0$ to $t=T$}
    \STATE $S^{(t)}=H_{\zeta}(\mathcal{P}_{\Omega_{q, t}}(M-L^{(t)}))$ \hspace*{1cm}\hfill{\em /*Projection onto set of sparse matrices*/}
    \STATE $M^{(t)} = L^{(t)}-\frac{mn}{\abs{\Omega_{q, t}}}\mathcal{P}_{\Omega_{q, t}}( L^{(t)} + S^{(t)} - M)$ \hspace*{1cm}\hfill{\em /*Gradient Descent Update*/}
    \STATE $L^{(t+1)}=P_q(M^{(t)})$ \hspace*{1cm}\hfill{\em /*Projected Gradient Descent step*/}
    \STATE Set threshold $\zeta \, \leftarrow \, \thresh\, \left(\sigma_{q+1}(M^{(t)}) +\left(\frac{1}{2}\right)^{t} \sigma_q(M^{(t)})\right)$
    \ENDFOR
    \STATE $S^{(0)}=S^{(T)}, L^{(0)}=L^{(T + 1)}, M^{(0)} = M^{(T)}$
    \ENDWHILE
    \STATE {\bf Return: }$L^{(T + 1)}$
  \end{algorithmic}
\end{algorithm}

\begin{proofof}{Theorem \ref{thm:sap}}
From Lemma \ref{lem:numInItr} we know that $T\geq\log(\frac{3\mu^2nr\sigma_1^*}{m\epsilon})$.

Consider the stage $q$ reached at the termination of the algorithm. We know from Lemma \ref{lem:s:tProg} that:

\begin{enumerate}
  \item $\infnorm{E^{(T)}} \leq \frac{8\mu^2r}{m} \left(\sigma^{*}_{q + 1} + \left(\frac{1}{2}\right)^{T}\sigma^{*}_{q}\right) \leq \frac{8\mu^2r}{m} \sigma^{*}_{q + 1} + \frac{\epsilon}{10n}$
  \item $\infnorm{L^{(T)} - L^{*}} \leq \frac{2\mu^2r}{m} \left(\sigma^{*}_{q + 1} + \left(\frac{1}{2}\right)^{T}\sigma^{*}_{q}\right) \leq \frac{2\mu^2r}{m} \sigma^{*}_{q + 1} + \frac{\epsilon}{10n}$
\end{enumerate}

Combining this with Lemmas \ref{lem:weyl-perturbation} and \ref{lem:hBound}, we get:

\begin{equation}
  \label{eqn:s:term}
  \sigma_{q + 1}(M) \geq \sigma^{*}_{q + 1} - \frac{1}{100} \left(\sigma^{*}_{q + 1} + \frac{m\epsilon}{10n\mu^2r}\right)
\end{equation}

When the while loop terminates, $\thresh \sigma_{q+1}\left(M^{(T)}\right) < \frac{\epsilon}{2 n}$, which from \ref{eqn:s:term}, implies that $\sigma_{q + 1}^{*} < \frac{m\epsilon}{7n\mu^{2}r}$. So we have:
\begin{align*}
  \infnorm{L - \Lo} = \infnorm{\Lt[T] - \Lo}
  \leq \frac{2 \mu^2 r}{m} |\sigma_{k_{q}+1}^*| + \frac{\epsilon}{10n} \leq \frac{\epsilon}{2n}.
\end{align*}
\end{proofof}

As in the case of the proof of Theorem~\ref{thm:gca}, the following lemma shows that we simultaneously make progress in both the estimation of $\Lo$ and $\So$ by $\Lt$ and $\Sot$ respectively. Similar to Lemma~\ref{lem:tProg}, we make use of Lemmas \ref{lem:sProg} and \ref{lem:Lprog} to show how improvement in estimation of one of the quantities affects the other and the other five terms, $\twonorm{H}$, $\max\limits_{q' \in [n]} \twonorm{e_{q'}^\top\left(H^\top H\right)^jV^*}$, $\max\limits_{q' \in [m]} \twonorm{e_{q'}^\top\left(HH^\top\right)^jU^*}$, $\max\limits_{q' \in [n]} \twonorm{e_{q'}^\top H^\top\left(HH^\top\right)^{j}U^*}$ and $\max\limits_{q' \in [m]} \twonorm{e_{q'}^\top H\left(H^\top H\right)^{j}V^*}$ are analyzed the same way:
% \begin{enumerate}
%   \item $\twonorm{H}$ - We use Lemma \ref{lem:hBound} to bound this quanitity
%   \item $\max\limits_{i \in [n]} \norm{e_{i}^{\top} H^{a} U^{*}} \; \forall 1 \leq a \leq 2 \lceil \log n \rceil$ - We use Lemma \ref{lem:grLemma} to bound this quantity
% \end{enumerate}

\begin{lemma}\label{lem:s:tProg}
  Let $\Lo$, $\Omega$, $\So$ and $\Sot$ satisfy Assumptions 1,2,3 respectively. Then, in the $t^{\textrm{th}}$ iteration of the $q^{\textrm{th}}$ stage of Algorithm~\ref{alg:sap}, $\Sot$ and $\Lt$ satisfy:
  \begin{align*}
    \infnorm{\Sot - \So} &\leq \frac{8\mu^2 r}{m} \left(\sigma_{q+1}^*+\left(\frac{1}{2}\right)^{t-1}\sigma_{q}^*\right), \\
    \supp{\Sot} &\subseteq \supp{\So}, \mbox{ and } \\
    \infnorm{\Lt-\Lo} &\leq \frac{2\mu^2 r}{m} \left(\sigma_{q+1}^*+\left(\frac{1}{2}\right)^{t-1}\sigma_{q}^*
    \right).
  \end{align*}
  with probability $ \geq 1 - ((q-1)T+t-1)n^{-(10+\log\alpha)}$ where $T$ is the number of iterations in the inner loop.
\end{lemma}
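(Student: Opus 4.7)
The plan is to mirror the proof of Lemma~\ref{lem:tProg}, adapting it to the simpler setting where the working rank advances by exactly one per outer iteration ($k_q = q$). The overall structure is a double induction: an inner induction over $t$ within each stage $q$, and an outer induction over stages $q$. The helper lemmas \ref{lem:hBound}, \ref{lem:einf}, \ref{lem:eigTrack}, \ref{lem:Lprog}, and \ref{lem:sProg} will be reused essentially verbatim, since they are phrased in terms of a generic rank $k$ and do not depend on the rank-selection rule.

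For the base case ($q=1$, $t=0$), I would first bound $\infnorm{\Lo} \leq \frac{\mu^2 r}{\sqrt{mn}} \sigma_1^*$ by writing $\Lo_{ij}=\sum_k \sigma_k^* u^*_{ik} v^*_{jk}$, applying Cauchy--Schwarz, and using $\mu$-incoherence on $U^*, V^*$. Since $L^{(0)}=0$ and the initial thresholding step removes the largest entries, this immediately gives $\infnorm{\Lt[0]-\Lo}$ and $\infnorm{\E^{(0)}}$ bounded by the required quantities (with the exponent $t=0$ matching $(1/2)^{-1}$ after trivial algebra). The factor $(1/2)^{t-1}$ in the statement (versus $(1/2)^{t-3}$ in Lemma~\ref{lem:tProg}) is precisely what is gained from the rank-by-one updates: stage $q$ begins with error already bounded by $\sigma_q^*$ rather than a constant multiple of $\sigma^*_{k_{q-1}+1}$ that could be much larger than $\sigma^*_{k_q}$.

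The inner induction over $t$ follows the same chain of reasoning as in Lemma~\ref{lem:tProg}. Assuming the three bounds hold at step $t$, Lemma~\ref{lem:hBound} yields $\twonorm{H} \leq \tfrac{1}{100}(\sigma^*_{q+1} + (1/2)^{t-1}\sigma^*_q)$; Lemma~\ref{lem:einf} combined with the assumption on $p$ and sparsity $\rho$ bounds $\upsilon$ by the same quantity up to a constant; Lemma~\ref{lem:Lprog} then upgrades these into the $\infnorm{\cdot}$ bound on $\Lt[t+1]-\Lo$ with the exponent improved to $t$; Lemma~\ref{lem:eigTrack} shows the algorithm's data-driven threshold $\zeta$ tracks the ideal threshold within a constant factor; finally Lemma~\ref{lem:sProg} converts this into the claimed bound on $\Sot-\So$ and the support containment. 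A union bound over the $T$ inner iterations handles the failure probability.

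The outer induction over $q$ is where the rank-by-one update simplifies matters compared to \ncgca. At the end of stage $q$, we have $\infnorm{\Lt[T]-\Lo} \leq \tfrac{2\mu^2 r}{m}(\sigma^*_{q+1}+(1/2)^{T-1}\sigma^*_q)$, which with the choice of $T$ is dominated by $\tfrac{2\mu^2 r}{m}\sigma^*_{q+1}$ plus an $\epsilon/n$ tail. Entering stage $q+1$ with rank $q+1$, the target bound becomes $\tfrac{2\mu^2 r}{m}(\sigma^*_{q+2} + 2\sigma^*_{q+1})$, and since $\sigma^*_{q+1}\leq 2\sigma^*_{q+1}$ this initial condition is automatically satisfied, avoiding the Weyl/eigenvalue-gap argument (Lemma~\ref{lem:gTrack}) needed for \ncgca. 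The termination criterion $\sigma_{q+1}(M^{(0)}) \leq \epsilon/(2\eta m)$ combined with Weyl's inequality forces $\sigma^*_{q+1}$ small enough at termination that $\infnorm{\hat L - \Lo} \leq \epsilon/(2n)$, giving the final Frobenius bound after multiplying by $\sqrt{mn}$.

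The main obstacle, as in Lemma~\ref{lem:tProg}, is not the induction bookkeeping but the invocation of Lemma~\ref{lem:einf} to control the four quantities $A_a, B_a, C_a, D_a$; the rank-based setting inherits this difficulty directly. A minor technical subtlety is verifying that the algorithm's threshold $\zeta = \eta(\sigma_{q+1}(M^{(t)}) + (1/2)^t \sigma_q(M^{(t)}))$ (with exponent $t$, not $t-2$ as in Algorithm~\ref{alg:gca}) correctly feeds into Lemma~\ref{lem:sProg} with $z$ matching the induction hypothesis, and that the total number of inner/outer iterations matches the $rT$ sample partitions produced by \splitS\ so the union bound yields the stated $1-n^{-\log(\alpha/2)}$ probability.
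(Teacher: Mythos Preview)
Your proposal is correct and follows essentially the same approach as the paper: a double induction (over $t$ within each stage, then over stages $q$), with the base case handled via the Cauchy--Schwarz/incoherence bound on $\infnorm{\Lo}$, the inner step driven by Lemmas~\ref{lem:hBound}, \ref{lem:einf}, \ref{lem:Lprog}, \ref{lem:eigTrack}, \ref{lem:sProg} in that order, and the outer step simplified (relative to Lemma~\ref{lem:tProg}) by the rank-by-one update so that Lemma~\ref{lem:gTrack} is not needed. One small clarification: the paper's outer induction still invokes Weyl's inequality (via Lemmas~\ref{lem:weyl-perturbation} and~\ref{lem:hBound}) to argue that, because the termination criterion has not triggered, $\sigma^*_{q+1} > \tfrac{m\epsilon}{10n\mu^2 r}$, which is what lets the $\epsilon$-tail be absorbed into $\tfrac{2\mu^2 r}{m}(2\sigma^*_{q+1})$; you should retain that step rather than claiming Weyl is entirely bypassed.
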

% Let $\Lo, \So$ be symmetric and satisfy the assumptions of Theorem~\ref{thm:noise} and let $\St$ and $\Lt$ be the
% $t^{\textrm{th}}$ iterates of the $k^{\textrm{th}}$ stage of Algorithm~\ref{algo:alg1}. Let $\sigma_1^*, \dots, \sigma_n^*$ be the eigenvalues
% of $\Lo$, s.t., $|\sigma_1^*|\geq \dots \geq |\sigma_r^*|$.
% %Let assumptions $(L1), (L2), (S1)$ be satisfied with $\alpha\leq \frac{1}{128\mu^2\cdot r}$. Let
% Then, the following holds:
% \begin{align*}
% \infnorm{\Et}=\infnorm{\Pom(\So-\St)} &\leq \frac{8\mu^2 r}{n} \left(\abs{\sigma_{k+1}^*}+\left(\frac{1}{2}\right)^{t-1}\abs{\sigma_k^*}\right), \\
% \supp{\Pom(\St)} &\subseteq \supp{\Pom(\So)}, \mbox{ and } \\
% \infnorm{\Ltn-\Lo} &\leq \frac{2\mu^2 r}{n} \left(\abs{\sigma_{k+1}^*}+\left(\frac{1}{2}\right)^{t}\abs{\sigma_k^*}
%  \right).
% \end{align*}
% Moreover, the output $\widehat{\L}$ of Algorithm~\ref{algo:alg1} satisfies:
% $$\twonorm{\widehat{\L}-\Lo}\leq \epsilon.$$
% \end{lemma}

\begin{proof}
  We prove the lemma by induction on both $q$ and $t$.

  \textbf{Base Case: $q=1$ and $t=0$} \\
  We begin by first proving an upper bound on $\infnorm{\Lo}$. We do this as follows:
  \[
  \abs{\Lo_{ij}} = \abs{\sum\limits_{k = 1}^{r} \sigma_{k}^{*} u^{*}_{ik} v^{*}_{jk}} \leq \sum\limits_{k = 1}^{r} \abs{\sigma_{k}^{*} u^{*}_{ik} v^{*}_{jk}} \leq \sigma_{1}^{*} \sum\limits_{k = 1}^{r}  \abs{u^{*}_{ik} v^{*}_{jk}} \leq \frac{\mu^{2}r}{m} \sigma_{1}^{*}
  \]
  where the last inequality follows from Cauchy-Schwartz and the incoherence of $U^{*}$. This directly proves the third claim of the lemma for the base case.
  We also note that due to the thresholding step and the incoherence assumption on $\Lo$, we have:
  \begin{enumerate}
    \item $\infnorm{\E^{(0)}} \leq \frac{8\mu^2 r}{m}\left(\sigma_2^* + 2 \sigma_1^*\right)$
    \item $\supp{\Sot} = \supp{\So}.$
  \end{enumerate}
  So the base case of induction is satisfied.

  \textbf{Induction over $t$} \\
  We first prove the inductive step over $t$ (for a fixed $q$).
  By inductive hypothesis we assume that:
  \begin{enumerate}
    \item[a)] $\infnorm{\Et} \leq \frac{8\mu^2 r}{m}\left(|\sigma_{q+1}^*|+\left(\frac{1}{2}\right)^{t-1}|\sigma_{q}^*|
    \right)$
    \item[b)] $\supp{\Sot}\subseteq \supp{\So}$.
    \item[c)] $\infnorm{\Lo - \Lt} \leq \frac{2\mu^2 r}{m}\left(|\sigma_{q+1}^*|+\left(\frac{1}{2}\right)^{t-1}|\sigma_{q}^*|
    \right)$
  \end{enumerate}
  with probability $1-((q-1)T+t-1)n^{-(10+\log\alpha)}$.

  Then by Lemma~\ref{lem:Lprog}, we have:
  \begin{equation}
    \label{eqn:s:lprog}
    \infnorm{\Ltn-\Lo} \leq \frac{\mu^2 r}{m}\left(|\sigma_{k_{q}+1}^*|+20\twonorm{H}+8\upsilon\right)
  \end{equation}

  From Lemma \ref{lem:einf}, we have:
  \begin{equation}
    \label{eqn:s:upsilon}
    \upsilon \leq \rho n \infnorm{\Et} + 8\beta\alpha\log n \overset{(\zeta_{1})}{\leq} \frac{1}{100} \left(\sigma^{*}_{q + 1} + \left(\frac{1}{2}\right)^{t-1}\sigma_{q}^*\right) + 8\beta\alpha\log n \overset{(\zeta_{2})}{\leq} \frac{1}{50} \left(\sigma^{*}_{q + 1} + \left(\frac{1}{2}\right)^{t-1}\sigma_{q}^*\right)
  \end{equation}

  where $(\zeta_{1})$ follows from our assumptions on $\rho$ and our inductive hypothesis on $\infnorm{\Et}$ and $(\zeta_{2})$ follows from our assumption on $p$ and by noticing that $ \infnorm{D} \leq \infnorm{\Et} + \infnorm{\Lo - \Lt}$. Recall that $D = \Lt-\Lo+\Sot-\So$.

  From Lemma \ref{lem:hBound}:
  \begin{equation}
    \label{eqn:s:twoNorm}
    \twonorm{H} \leq \frac{1}{100} \left(\sigma^{*}_{q + 1} + \left(\frac{1}{2}\right)^{t-1}\sigma_{q}^*\right)
  \end{equation}

  with probability $\geq 1-n^{-(10+\log\alpha)}$. From Equations \ref{eqn:s:twoNorm}, \ref{eqn:s:upsilon} and \ref{eqn:s:lprog}, we have:
  \[
  \infnorm{\Lo - \Ltn} \leq \frac{2\mu^2 r}{m}\left(\sigma_{q+1}^*+\left(\frac{1}{2}\right)^{t}\sigma_{q}^*\right)
  \]

  which by union bound holds with probability $\geq 1 - ((q-1)T+t)n^{-(10+\log\alpha)}$. Hence, using Lemma \ref{lem:sProg} and \ref{lem:eigTrack} we have:

  \begin{enumerate}
    \item	$\infnorm{\Etn} \leq  \frac{8\mu^2 r}{m} \left(\sigma_{q+1}^*+\left(\frac{1}{2}\right)^{t}\sigma_{q}^*
    \right)$
    \item	$\supp{\Sot[t + 1]} \subseteq \supp{\So}$.
  \end{enumerate}
  which also holds with probability $\geq 1 - ((q-1)T+t)n^{-(10+\log\alpha)}$. This concludes the proof for induction over $t$.

  \textbf{Induction Over Stages $q$} \\
  We now prove the induction over $q$. Suppose the hypothesis holds for stage $q$.
  At the end of stage $q$, we have:
  \begin{enumerate}
    \item	$\infnorm{\E^{(T)}} \leq \frac{8\mu^2 r}{m} \left(\sigma_{q+1}^*+\left(\frac{1}{2}\right)^{T}\sigma_{q}^*\right)
    \leq \frac{8\mu^2 r \sigma_{q+1}^*}{m} + \frac{\epsilon}{10 n}$
    \item	$\supp{\Sot[T]} \subseteq \supp{\So}$.
  \end{enumerate}
  with probability $\geq 1 - (qT - 1)n^{-(10+\log\alpha)}$.

  From Lemmas ~\ref{lem:weyl-perturbation} and ~\ref{lem:hBound} we get:
  \begin{equation}
    \abs{\sigma_{q+1}\left(M^{(T)}\right) - \sigma_{q+1}^*} \leq \twonorm{H} \leq \frac{1}{100} \left(\sigma_{q+1}^* + \frac{m\epsilon}{10n\mu^2r} \right)
    \label{eqn:s:nStage}
  \end{equation}

  with probability $1 - n^{-(10+\log\alpha)}$. We know that $\thresh \sigma_{q+1}\left(\Mt\right) \geq \frac{\epsilon}{2 n}$ which with \ref{eqn:s:nStage} implies that $\sigma_{q+1}^* > \frac{m\epsilon}{10 n\mu^2 r}$.

  \begin{align*}
    \infnorm{\Lt[T + 1] - \Lo} &\leq \frac{2\mu^2 r}{m}\left(\sigma_{q+1}^*+\left(\frac{1}{2}\right)^{T + 1}\sigma_{q}^* \right)\leq \frac{2\mu^2 r}{m}\left(\sigma_{q+1}^* + \frac{m\epsilon}{20\mu^2 r n} \right) \\
    &\leq \frac{2\mu^2 r}{m}\left(\sigma_{q+1}^* + \frac{\sigma_{q+1}^*}{2} \right)\leq \frac{2\mu^2 r}{m}\left(2 \sigma_{q+1}^* \right)
  \end{align*}
  By union bound this holds with probability $\geq 1 - qTn^{-(10+\log\alpha)}$.

  Now, from \ref{lem:eigTrack} and \ref{lem:sProg}, we have through a similar series of arguments as above:
  \begin{equation}
    \infnorm{\Et[T + 1]} \leq \frac{8\mu^2 r}{m} \left(2 \sigma_{k_{q + 1}}^*\right)
  \end{equation}

 which holds with probability $\geq 1 - qTn^{-(10+\log\alpha)}$.
\end{proof}

\subsection{Proof of a generalized form of Lemma \ref{lem:einf}}

\begin{lemma}
  \label{lem:grLemma}
  Suppose $H = H_{1} + H_{2}$ and $H\in\R^{m\times n}$ where $H_{1}$ satisfies Definition \ref{def:mzero} (Definition 7 from \cite{DBLP:conf/colt/0002N15}) and $H_{2}$ is a matrix with column and row sparsity $\rho$. Let $U$ be a matrix with rows denoted as $u_{1}, \dots, u_{m}$ and let $V$ be a matrix with rows denoted as $v_1, \dots, v_n$. Let $e_q$ be the $q^{\textit{th}}$ vector from standard basis. Let $\tau = \max\{\max\limits_{i \in [m]} \norm{u_{i}}, \max\limits_{i \in [n]} \norm{v_{i}}\}$. Then, for $0 \leq a \leq \log{n}$:

\begin{align*}
     \max\limits_{q \in [n]} \twonorm{e_{q}^\top\left(H^\top H\right)^{a}V}, \max\limits_{q \in [m]} \twonorm{e_{q}^\top\left(HH^\top\right)^aU} &\leq (\rho n \infnorm{H_2} + c \log n)^{2a} \tau \\
     \max\limits_{q \in [n]} \twonorm{e_{q}^\top H^\top\left(HH^\top\right)^{a}U}, \max\limits_{q \in [m]} \twonorm{e_{q}^\top H\left(H^\top H\right)^{a}V} &\leq (\rho n \infnorm{H_2} + c \log n)^{2a + 1} \tau \\
\end{align*}

with probability $n^{-2\log{\frac{c}{4}} + 4}$.
\end{lemma}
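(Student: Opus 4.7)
The plan is to bound the quantity using the trace/moment method combined with a sparsity reduction for $H_2$. For a fixed index $q$, I would control the $2k$-th moment $\mathbb{E}[\|e_q^\top(H^\top H)^a V\|^{2k}]$ with $k = \log n$, then apply Markov's inequality and union-bound over $q$ and over the four claims of the lemma. When $H_2 = 0$, this reduces to the pure-random case treated in \cite{DBLP:conf/colt/0002N15}, so the bulk of the novelty is in handling the deterministic sparse perturbation.

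First I would expand $H = H_1 + H_2$, so that $(H^\top H)^a V$ becomes a sum of $4^a$ products $A_1 \cdots A_{2a} V$ with each $A_j \in \{H_1, H_1^\top, H_2, H_2^\top\}$ in the appropriate alternating pattern. Group the terms by the number $p$ of $H_2$-factors; there are $\binom{2a}{p}$ placements of those factors. For each $H_2$-edge the row/column sparsity restricts the number of nonzero index transitions at that step to at most $\rho n$, and each entry is bounded by $\|H_2\|_\infty$, so the triangle inequality peels off one factor of $\rho n \|H_2\|_\infty$ per $H_2$-edge. The terminal multiplication against a row of $V$ contributes at most $\tau$ via $\max_i \|v_i\| \le \tau$, and $\tau$ enters only once, at the rightmost factor.

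For the remaining $2a - p$ random factors, I would invoke the combinatorial moment method of \cite{ErdosKYY2012, DBLP:conf/colt/0002N15}: expanding the $2k$-th power into a sum over $2k$-tuples of walks on $[m]\cup[n]$ forces, by the zero-mean property of $H_1$, every random edge to be traversed an even number of times. Combining the count of such paired walks with the moment bound $\mathbb{E}|(H_1)_{ij}|^s \leq 1/\max(m,n)$ from Definition~\ref{def:mzero} gives a per-random-edge contribution of $O(\log n)$ after taking the $2k$-th root. Markov's inequality then yields a per-term bound of $(\rho n \|H_2\|_\infty)^p (c\log n)^{2a - p}\, \tau$ with failure probability at most $n^{-2\log(c/4) + 4}$. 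Summing over the $\binom{2a}{p}$ placements of the $H_2$-factors and then over $p = 0, \ldots, 2a$ collapses by the binomial theorem to $(\rho n \|H_2\|_\infty + c\log n)^{2a}\, \tau$. The odd-degree statements $e_q^\top H^\top (HH^\top)^a U$ and $e_q^\top H (H^\top H)^a V$ are handled identically, the single extra edge producing the $(2a+1)$-th exponent.

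The main obstacle will be the random-edge counting in the presence of the $H_2$-imposed constraints: when $H_1$- and $H_2$-factors are interleaved, the $H_2$-edges fix or restrict certain walk positions, and the paired-walk counting for the random edges must still yield the $(c\log n)^{2a-p}$ bound subject to these constraints. A second delicate point is maintaining that $\tau$ enters the bound exactly once rather than regenerating at each $H_2$-induced segment break --- this requires writing every intermediate norm bound as a sequential product along the full walk so that the incoherence factor is introduced only at the terminal $V$- or $U$-multiplication. Once these two accounting issues are resolved, a routine union bound over $q \in [m]\cup[n]$ and the four statements completes the proof.
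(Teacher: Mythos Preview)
Your plan is correct and matches the paper's approach: expand $(H^\top H)^a$ over assignments $\omega:[2a]\to\{1,2\}$, bound the length of each monomial $v_\omega$ via a $k$-th moment computation (the paper takes $k=\lceil \log n / a_1\rceil$ where $a_1$ is the number of $H_1$-positions in $\omega$), then union-bound over $\omega$, $a$, and $q$ and recombine by the binomial theorem. For the interleaving obstacle you flag, the paper does not literally peel off the $H_2$-factors first but runs a single partition/spanning-forest count on the $2k$-walk expansion in which every $H_2$-position is declared its own singleton equivalence class contributing a factor $\rho n\,\infnorm{H_2}$ via the row/column sparsity, while the $H_1$-positions are grouped by matrix index (each class of size $\ge 2$ by the zero-mean condition) and contribute $n\cdot \tfrac{1}{n}=1$ from the free-index count times the moment bound of Definition~\ref{def:mzero}.
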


\begin{proof}

  Similar to \cite{DBLP:conf/colt/0002N15}, we will prove the statement for $q = 1$ and it can be proved for $q \in [n]$ by taking a union bound over all $q$. For the sake of brevity, we will prove only the inequality:
\[\max\limits_{q \in [n]} \twonorm{e_{q}^\top\left(H^\top H\right)^{a}V} \leq (\rho n \infnorm{H_2} + c \log n)^{2a} \tau\]
The rest of the lemma follows by applying similar arguments to the appropriate quantities.

Let $\omega:[2a] \rightarrow \{1,2\}$ be a function used to index a single term in the expansion of $(H^\top H)^a$. We express the term as follows:

\[
(H^\top H)^a = \sum\limits_{\omega} \prod\limits_{i = 1}^{a} H_{\omega(2i - 1)}^\top H_{\omega(2i)}
\]

We will now fix one such term $\omega$ and then bound the length of the following random vector:

\[
v_{\omega} = e_{1}^\top \prod\limits_{i = 1}^{a} (H^\top_{\omega(2i - 1)}H_{\omega(2i)}) V
\]

Let $\alpha$ be used to denote a tuple $(i,j)$ of integers used to index entries in a matrix. Let $T(i)$ be used to denote the parity function computed on $i$, i.e, $0$ if $i$ is divisible by $2$ and $1$ otherwise. This function indicates if the matrix in the expansion is transposed or not. We now introduce ${B_{(i,j),(k,l)}^{p,q}, \, p \in \{1,2\}, \, q \in \{0,1\}}$ and $A_{(i,j)}^p, \, p \in \{1, 2\}$ which are defined as follows:

\[
A_{(i,j)}^p \coloneqq \delta_{i,1}(\delta_{p,1} + \delta_{p,2} \ind{(i,j) \in Supp(H_{2})})
\]

\[
B_{(i,j), (k,l)}^{p,q} \coloneqq (\delta_{q,1}\delta_{j,l} + \delta_{q,0}\delta_{i,k}) (\delta_{p,1} + \delta_{p,2} \ind{(k,l) \in Supp(H_{2})})
\]

where $\delta_{i,j} = 1$ if $i=j$ and 0 otherwise. We will subsequently write the random vector $v_{\omega}$ in terms of the individual entries of the matrices. The role of $B_{(i,j), (k,l)}^{p,q}$ and $A_{(i,j)}^p$ is to ensure consistency in the terms used to describe $v_{\omega}$. We will use $h_{i, \alpha}$ to refer to $(H_{i})_{\alpha}$.

With this notation in hand, we are ready to describe $v_{\omega}$.

\[
v_{\omega} = \sum\limits_{\substack{\alpha_{1}, \dots, \alpha_{2a} \\ \alpha_{1}(1) = 1}} A_{\alpha_1}^{\omega(1)}B^{\omega(2), T(2)}_{\alpha_1\alpha_2} \dots B^{\omega(2a), T(2a)}_{\alpha_{2a - 1}\alpha_{2a}} h_{\omega(1), \alpha_{1}} \cdots h_{\omega(2a), \alpha_{2a}} v_{\alpha_{2a}(2)}
\]

We now write the squared length of $v_{\omega}$ as follows:
  \begin{multline*}
    X_{\omega} = \sum\limits_{\substack{\alpha_{1}, \dots, \alpha_{2a}, \alpha^{\prime}_{1}, \dots, \alpha^{\prime}_{2a} \\ \alpha_{1}(1) = 1, \alpha^{\prime}_{1}(1) = 1}} A_{\alpha_1}^{\omega(1)} B^{\omega(2), T(2)}_{\alpha_1\alpha_2} \dots B^{\omega(2a), T(2a)}_{\alpha_{2a - 1}\alpha_{2a}} h_{\omega(1), \alpha_{1}} \cdots h_{\omega(2a), \alpha_{2a}} \\
    A_{\alpha_1}^{\omega(1)} B^{\omega(2), T(2)}_{\alpha^{\prime}_1\alpha^{\prime}_2} \dots B^{\omega(2a), T(2a)}_{\alpha^{\prime}_{2a - 1}\alpha^{\prime}_{2a}} h_{\omega(1), \alpha^{\prime}_{1}} \cdots h_{\omega(2a), \alpha^{\prime}_{2a}} \langle v_{\alpha_{2a}(2)}, v_{\alpha^{\prime}_{2a}(2)} \rangle
  \end{multline*}

  We can see from the above equations that the entries used to represent $v_{\omega}$ are defined with respect to paths in a bipartite graph. In the following, we introduce notations to represent entire paths rather than just individual edges:

  Let $\boldsymbol{\alpha}\coloneqq(\alpha_1,\ldots,\alpha_{2a})$ and

  \[
  \bm{\zeta_{\bm{\alpha}}}\coloneqq A_{\alpha_1}^{\omega(1)} B^{\omega(2), T(2)}_{\alpha_1\alpha_2}\ldots B^{\omega(2a), T(2a)}_{\alpha_{2a-1}\alpha_{2a}} h_{\omega{(1)}, \alpha_1}\ldots h_{\omega{(2a)}, \alpha_{2a}}
  \]

  Now, we can write:
  \[
  X_\omega=\sum\limits_{\substack{\boldsymbol{\alpha}, \bm{\alpha^{\prime}} \\ \alpha_1(1) = \alpha'_1(1) = 1}}\zeta_{\boldsymbol{\alpha}}\zeta_{\boldsymbol{\alpha^{\prime}}} \langle v_{\alpha_{2a}(2)}, v_{\alpha^{\prime}_{2a}(2)} \rangle
  \]

  Calculating the $k^{\text{th}}$ moment expansion of $X_\omega$ for some number $k$, we obtain:

  \begin{equation}
    \label{k-mom}
    \mathbb{E}[X_\omega^k]=\sum\limits_{\boldsymbol{\alpha^1},\ldots,\boldsymbol{\alpha^{2k}}}\mathbb{E}[\zeta_{\boldsymbol{\alpha^1}}\ldots\zeta_{\boldsymbol{\alpha^{2k}}} \langle v_{\alpha^{1}_{2a}(2)}, v_{\alpha^{2}_{2a}(2)} \rangle \dots \langle v_{\alpha^{2k - 1}_{2a}(2)}, v_{\alpha^{2k}_{2a}(2)} \rangle]
  \end{equation}

We now show how to bound the above moment effectively. Notice that the moment is defined with respect to a collection of $2k$ paths. We denote this collection by $\Delta \coloneqq (\boldsymbol{\alpha^1},\ldots,\boldsymbol{\alpha^{2k}})$. For each such collection, we define a partition $\Gamma(\Delta)$ of the index set $\{(s, l): s \in [2k], l \in [2a]\}$ where $(s,l)$ and $(s',l')$ are in the same equivalence class if $\omega(l) = \omega(l') = 1$ and $ \alpha^s_l = \alpha^{s'}_{l'}$. Additionally, each $(s,l)$ such that $\omega(l) = 2$ is in a separate equivalence class.

We bound the expression in (\ref{k-mom}) by partitioning all possible collections of $2k$ paths based on the partitions defined by them in the above manner. We then proceed to bound the contribution of any one specific path to (\ref{k-mom}) following a particular partition $\Gamma$, the number of paths satisfying that particular partition and finally, the total number of partitions. Since, $H_{1}$ is a matrix with $0$ mean, any equivalence class containing an index $(s,l)$ such that $\omega(l) = 1$ contains at least two elements.

We proceed to bound (\ref{k-mom}) by taking absolute values:
\begin{equation}
    \label{eqn:kAbsMom}
    \mathbb{E}[X_\omega^k]\leq\sum\limits_{\boldsymbol{\alpha^1},\ldots,\boldsymbol{\alpha^{2k}}}\mathbb{E}[|\zeta_{\boldsymbol{\alpha^1}}|\ldots|\zeta_{\boldsymbol{\alpha^{2k}}}| |\langle v_{\alpha^{1}_{2a}(2)}, v_{\alpha^{2}_{2a}(2)} \rangle| \dots |\langle v_{\alpha^{2k - 1}_{2a}(2)}, v_{\alpha^{2k}_{2a}(2)} \rangle|]
\end{equation}

We now fix one particular partition and bound the contribution to (\ref{eqn:kAbsMom}) of all collections of paths $\Delta$ that correspond to a valid partition $\Gamma$.

We construct from $\Gamma$ a directed multigraph $G$. The equivalence classes of $\Gamma$ form the vertex set of G, $V(G)$. There are 4 kinds of edges in $G$ where each type is indexed by a tuple $(p,q)$ where $p \in \{1,2\}, \, q \in \{0,1\}$. We denote the edge sets corresponding to these 4 edge types by $E_{(1,0)}$, $E_{(1,1)}$, $E_{(2,0)}$ and $E_{(2,1)}$ respectively. An edge of type $(p,q)$ exists from equivalence class $\gamma_{1}$ to equivalence class $\gamma_{2}$ if there exists $(s,l) \in \gamma_{1}$ and $(s',l') \in \gamma_{2}$ such that $l' = l + 1$, $s = s'$, $\omega(s') = p$ and $T(l') = q$.

The summation in \ref{eqn:kAbsMom} can be written as follows:

  \begin{align*}
    \hspace{2em}&\hspace{-2em} \mathbb{E}[\abs{\zeta_{\boldsymbol{\alpha}^{1}}}\dots \abs{\zeta_{\boldsymbol{\alpha}^{2k}}} \abs{\langle v_{\alpha^{1}_{2a}(2)}, v_{\alpha^{2}_{2a}(2)} \rangle} \dots \abs{\langle v_{\alpha^{2k - 1}_{2a}(2)}, v_{\alpha^{2k}_{2a}(2)} \rangle}]
    \\
    &\leq \tau^{2k} \left( \prod\limits_{s = 1}^{2k} A_{\alpha^s_1}^{\omega(1)}\prod\limits_{l = 1}^{2a - 1} B^{\omega(l+1), T(l+1)}_{\alpha_{l}^{s}, \alpha_{l + 1}^{s}} \right) \mathbb{E} \left[ \left( \prod\limits_{s = 1}^{2k} \prod\limits_{l = 1}^{2a} \abs{h_{\omega(l), \alpha_{l}^{s}}} \right) \right]
    \\
    & \overset{(\zeta_1)}{\leq} \tau^{2k} \left ( \prod\limits_{s = 1}^{2k} A_{\alpha^s_1}^{\omega(1)}\prod\limits_{l = 1}^{2a - 1} B^{\omega(l+1), T(l+1)}_{\alpha_{l}^{s}, \alpha_{l + 1}^{s}} \right ) \prod\limits_{\gamma \in V_{1}(G)} \ddfrac{1}{n} \prod\limits_{\gamma \in V_{2}(G)} \norm{H_{2}}_{\infty}
    \\
    &= \ddfrac{\tau^{2k} \norm{H_{2}}_{\infty}^{w_{2}}}{n^{w_{1}}} \left ( \prod\limits_{s = 1}^{2k}A_{\alpha^s_1}^{\omega(1)}\prod\limits_{l = 1}^{2a - 1} B^{\omega(l+1), T(l+1)}_{\alpha_{l}^{s}, \alpha_{l + 1}^{s}} \right )
  \end{align*}

   where $(\zeta_1)$ follows from the moment conditions on $H_1$. $V_{1}(G)$ and $V_{2}(G)$ are the vertices in the graph corresponding to tuples $(i,j)$ such that $\omega(j) = 1$ and $\omega(j) = 2$ respectively and $w_{1} = \abs{V_{1}(G)}$, $w_{2} = \abs{V_{2}(G)}$.

   We first consider an equivalence class $\gamma_{1}$ such that there exists an index $(s,l) \in \gamma_{1}$ and $l = 1$. We form a spanning tree $T_{1}$ of all the nodes reachable from $\gamma_{1}$ with $\gamma_{1}$ as root. We then remove the nodes $V(T_{1})$ from the graph $G$ and repeat this procedure until we obtain a set of $l$ trees $T_{1},\dots,T_{l}$ with roots $\gamma_{1}, \dots, \gamma_{l}$ such that $\bigcup\limits_{i = 1}^{l} V(G_{i}) = V(G)$. This happens because every node is reachable from some equivalence class which contains an index of the form $(s, 1)$. Also, each of these trees $T_{i}, \ \forall \, i \in [l]$ is disjoint in their vertex sets. Given this decomposition, we can factorize the above product as follows:

 \begin{multline}
 \label{eqn:treeEqn}
     \mathbb{E}[X_\omega^k] \leq \ddfrac{\tau^{2k} \norm{H_{2}}_{\infty}^{w_{2}}}{n^{w_{1}}} \prod\limits_{j = 1}^{l} \sum\limits_{\alpha_1, \dots, \alpha_{v_j}} A_{\alpha_{1}}^{\omega(1)} \prod\limits_{\{\gamma, \gamma'\} \in E_{(1,0)}(T_{j})} B^{1, 0}_{\alpha_{\gamma}\alpha_{\gamma'}} \\ \prod\limits_{\{\gamma, \gamma'\} \in E_{(1,1)}(T_{j})} B^{1, 1}_{\alpha_{\gamma}\alpha_{\gamma'}} \prod\limits_{\{\gamma, \gamma'\} \in E_{(2,0)}(T_{j})} B^{2, 0}_{\alpha_{\gamma}\alpha_{\gamma'}} \prod\limits_{\{\gamma, \gamma'\} \in E_{(2,1)}(T_{j})} B^{2, 1}_{\alpha_{\gamma}\alpha_{\gamma'}}
 \end{multline}

 For a single connected component, we can compute the summation bottom up from the leaves. First, notice that:

 \begin{center}
 \begin{tabular}{c c}
      $\sum\limits_{\alpha_{\gamma'}} B^{2, 1}_{\alpha_{\gamma}\alpha_{\gamma'}} \leq \rho n$ & $\sum\limits_{\alpha_{\gamma'}} B^{2, 0}_{\alpha_{\gamma}\alpha_{\gamma'}} \leq \rho n$ \\
      $\sum\limits_{\alpha_{\gamma'}} B^{1, 1}_{\alpha_{\gamma}\alpha_{\gamma'}} = n$ &
      $\sum\limits_{\alpha_{\gamma'}} B^{1, 0}_{\alpha_{\gamma}\alpha_{\gamma'}} = n$
 \end{tabular}
 \end{center}

 Where the first two follow from the sparsity of $H_{2}$. Every node in the tree $T_{j}$ with the exception of the root has a single incoming edge. For the root, $\gamma_{j}$, we have:

 \begin{center}
 \begin{tabular}{c c}
      $\sum\limits_{\alpha_{1}} A_{\alpha_{1}}^{\omega(1)} \leq \rho n \text{ for } \omega(1) = 2$
      &
      $\sum\limits_{\alpha_{1}} A_{\alpha_{1}}^{\omega(1)} = n \text{ for } \omega(1) = 1$
 \end{tabular}
 \end{center}

 From the above two observations, we have:

 \begin{multline*}
     \sum\limits_{\alpha_1, \dots, \alpha_{v_j}} A_{\alpha_{1}}^{\omega(1)} \prod\limits_{\{\gamma, \gamma'\} \in E_{(1,0)}(T_{j})} B^{1, 0}_{\alpha_{\gamma}\alpha_{\gamma'}}  \prod\limits_{\{\gamma, \gamma'\} \in E_{(1,1)}(T_{j})} B^{1, 1}_{\alpha_{\gamma}\alpha_{\gamma'}} \prod\limits_{\{\gamma, \gamma'\} \in E_{(2,0)}(T_{j})} B^{2, 0}_{\alpha_{\gamma}\alpha_{\gamma'}} \\
     \prod\limits_{\{\gamma, \gamma'\} \in E_{(2,1)}(T_{j})} B^{2, 1}_{\alpha_{\gamma}\alpha_{\gamma'}} \leq (\rho n)^{w_{2,j}}n^{w_{1,j}}
 \end{multline*}

  where $w_{k,j}$ represents the number of vertices in the $j^{th}$ component which contain tuples $(i,j)$ such that $\omega(j) = k$ for $k \in \{1,2\}$.

  Plugging the above in (\ref{eqn:treeEqn}) gives us

  \[
  \mathbb{E}[X_\omega^k(\Gamma)] \leq \ddfrac{\tau^{2k}\norm{H_2}_\infty^{w_{2}}}{n^{w_1}}(\rho n)^{\sum_j w_{2,j}}n^{\sum_j w_{1,j}}=\tau^{2k}\norm{H_2}^{w_{2}}_\infty (\rho n)^{w_2}
  \]

  Let $a_{1}$ and $a_{2}$ be defined as $\abs{\{i: \omega(i) = 1\}}$ and $\abs{\{i: \omega(i) = 2\}}$ respectively (Note that $w_{2} = 2a_{2}k$). Summing up over all possible partitions (there are $(2a_{1}k)^{2a_{1}k}$ of them), we get our final bound on $\mathbb{E}\left[\hat{X}_{\omega}^{k}\right]$ as $\tau^{2k}(\rho n \norm{H_{2}}_{\infty})^{2a_{2}k}(2a_{1}k)^{2a_{1}k}$.

  Now, we bound the probability that $\hat{X}_{\omega}$ is too large. Choosing $k = \left\lceil \frac{\log n}{a_{1}} \right\rceil$ and applying the $k^{th}$ moment Markov inequality, we obtain:

  \begin{align*}
    \mathbf{Pr}\left[\abs{\hat{X}_{\omega}} > (c\log n)^{2a_{1}}\tau^{2}(\rho n \norm{H_{2}}_{\infty})^{2a_{2}}\right] &\leq \mathbb{E} \left[\abs{\hat{X}_{\omega}}^{k} \right] \left(\ddfrac{1}{(c\log n)^{2a_{1}}\tau^{2}(\rho n \norm{H_{2}}_{\infty})^{2a_{2}}} \right)^{k} \\
    &\leq \left(\ddfrac{2ka_{1}}{c\log{n}}\right)^{2ka_{1}} \\
    &\leq n^{-2\log{\frac{c}{4}}}
  \end{align*}

  Taking a union bound over all the $2^a$ possible $\omega$, over values of $a$ from $1$ to $\log n$ and over the $n$ values of $q$, we get the required result.
\end{proof}

\subsection{Additional Experimental Results}
\label{app:exp}

\begin{figure}[t]
\centering
\begin{tabular}{cccc}
\hspace*{-10pt}\includegraphics[width=0.5\textwidth]{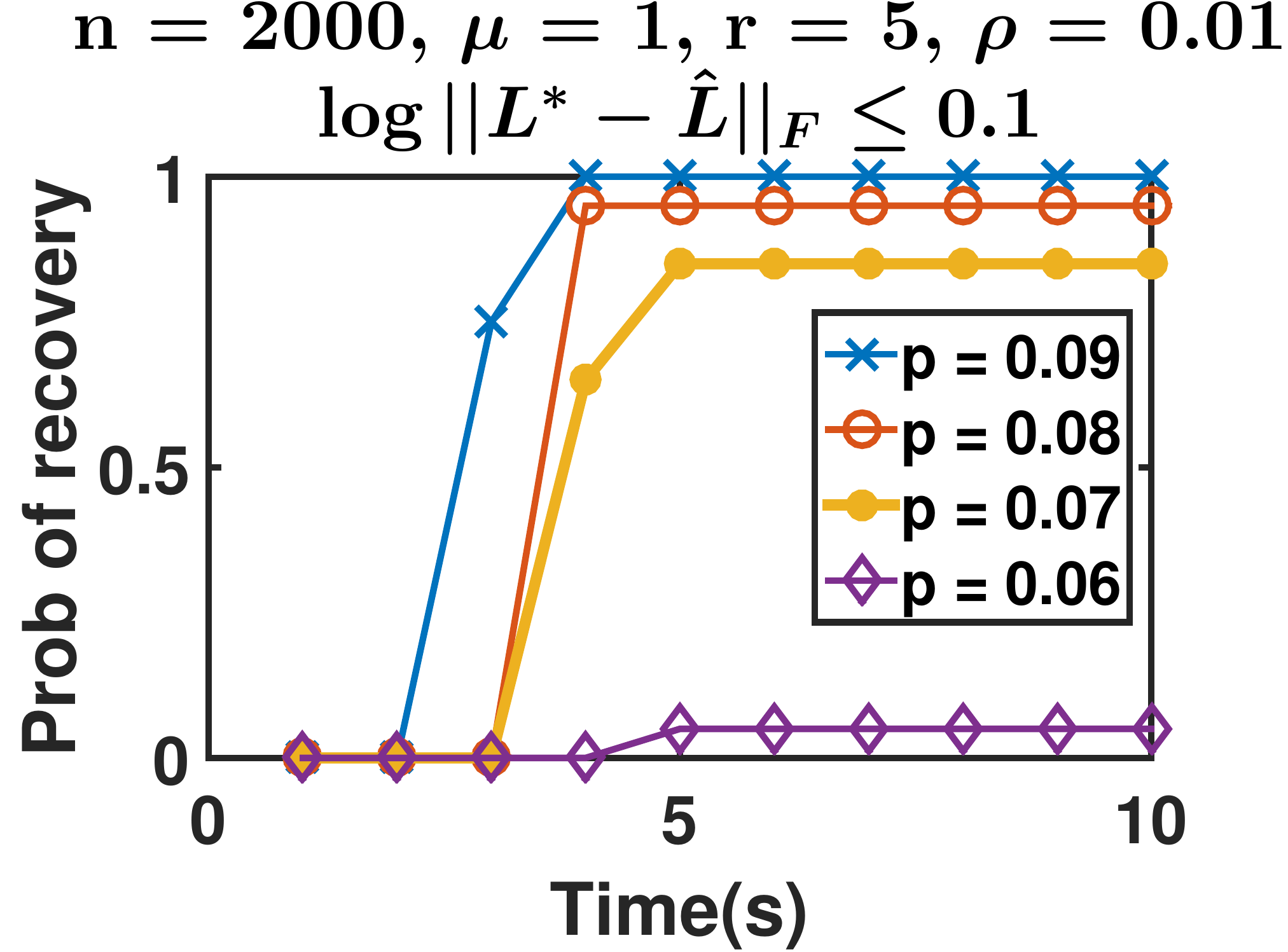}&
\hspace*{-10pt}\includegraphics[width=0.5\textwidth]{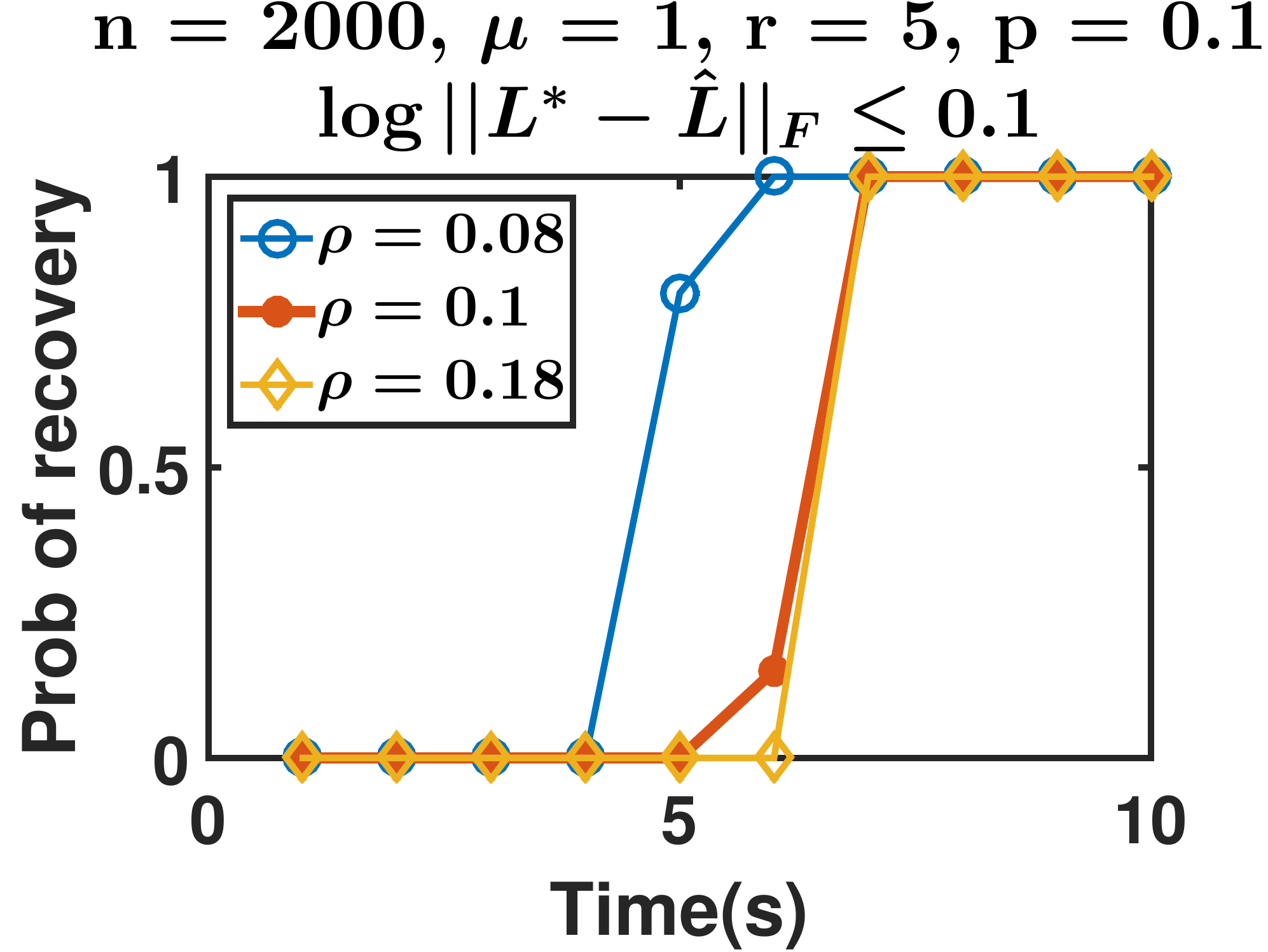}\\[-3pt]
(a)&(b)\\
\end{tabular}\vspace*{-8pt}
\caption{We run the \ncgca algorithm with extremal values of sampling probability and fraction of corruptions, and record the probability with which we recover the original matrix, (a) : time vs probability of recovery for very small values of sampling probability, (b) : time vs probability of recovery for large number of corruptions ($\rho n^2$)}
\label{fig:synth_plots_1_app}
\end{figure}

We detail some additional experiments performed with Algorithm \ref{alg:gca} in this section. The experiments were performed on synthetic data and real world data sets.

\textbf{Synthetic data}. We generate a random matrix $M\in\R^{2000\times2000}$ in the same way as described in Section \ref{sec:exp}. In these experiments our aim is to analyze the behavior of the algorithm in extremal cases. We consider two of such cases : 1) sampling probability is very low (Figure \ref{fig:synth_plots_1_app} (a)), 2) number of corruptions is very large (Figure \ref{fig:synth_plots_1_app} (b)). In the first case, we see that the we get a reasonably good probability of recovery $(\sim 0.8)$ even with very low sampling probability $(0.07)$. In the second case, we observe that the time taken to recover seems almost independent of the number of corruptions as long as they are below a certain threshold. In our experiments we saw that on increasing the $\rho$ to 0.2 the probability of recovery went to 0. To compute the probability of recovery we ran the experiment 20 times and counted the number of successful runs.

\begin{figure}[t]
\centering
\begin{tabular}{ccc}
\includegraphics[width=0.33\textwidth]{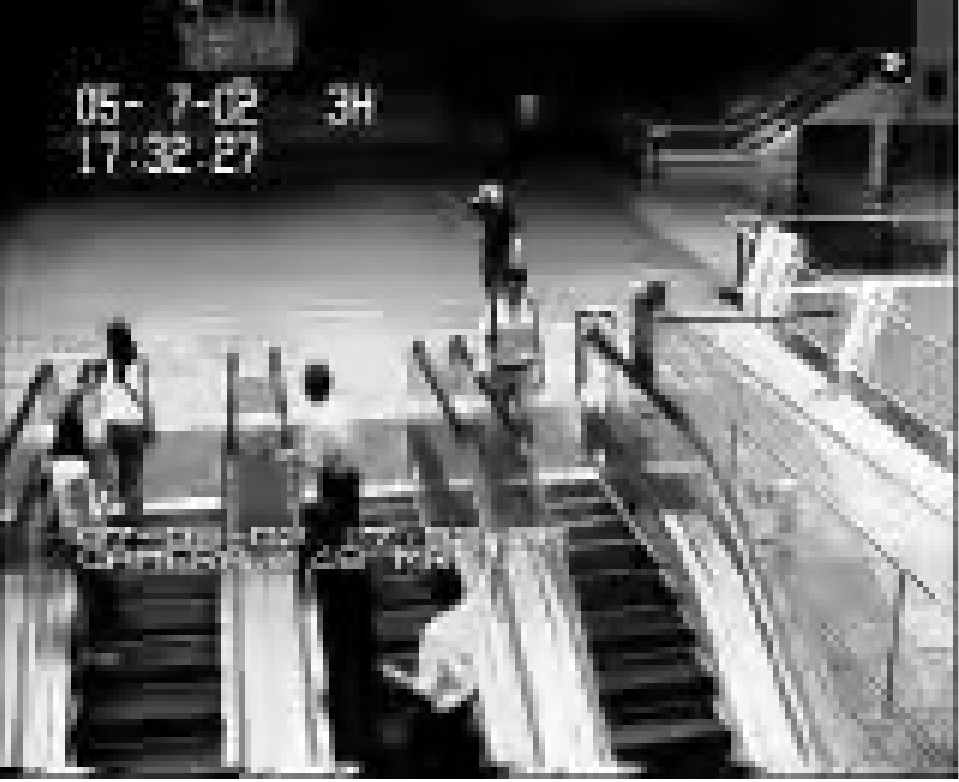}&
\includegraphics[width=0.33\textwidth]{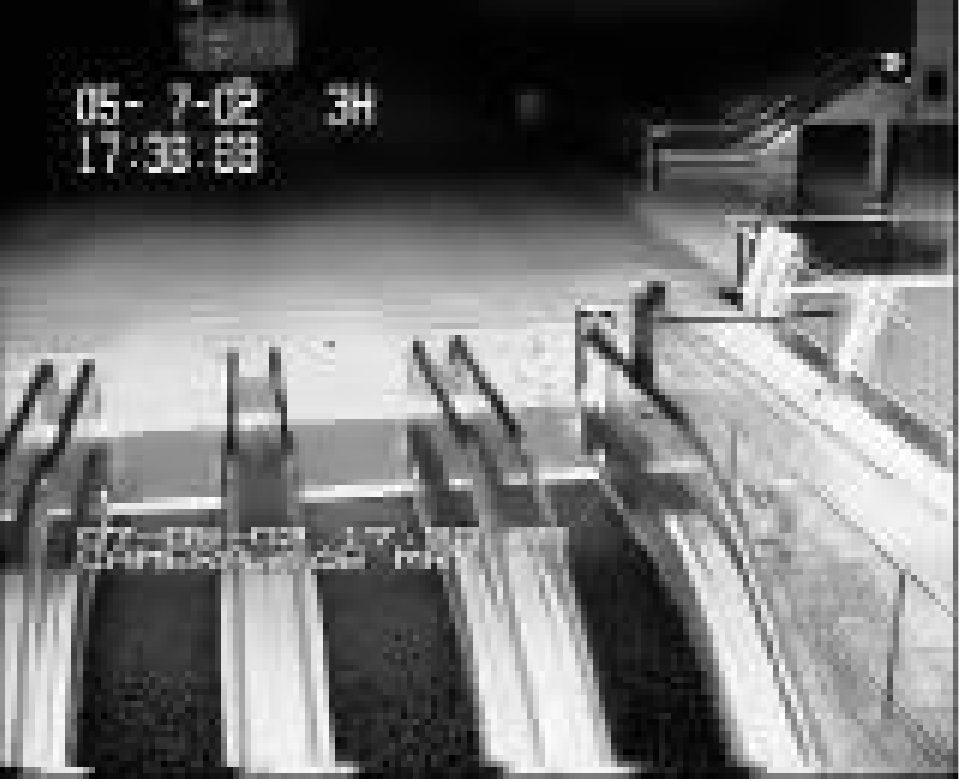}&
\includegraphics[width=0.33\textwidth]{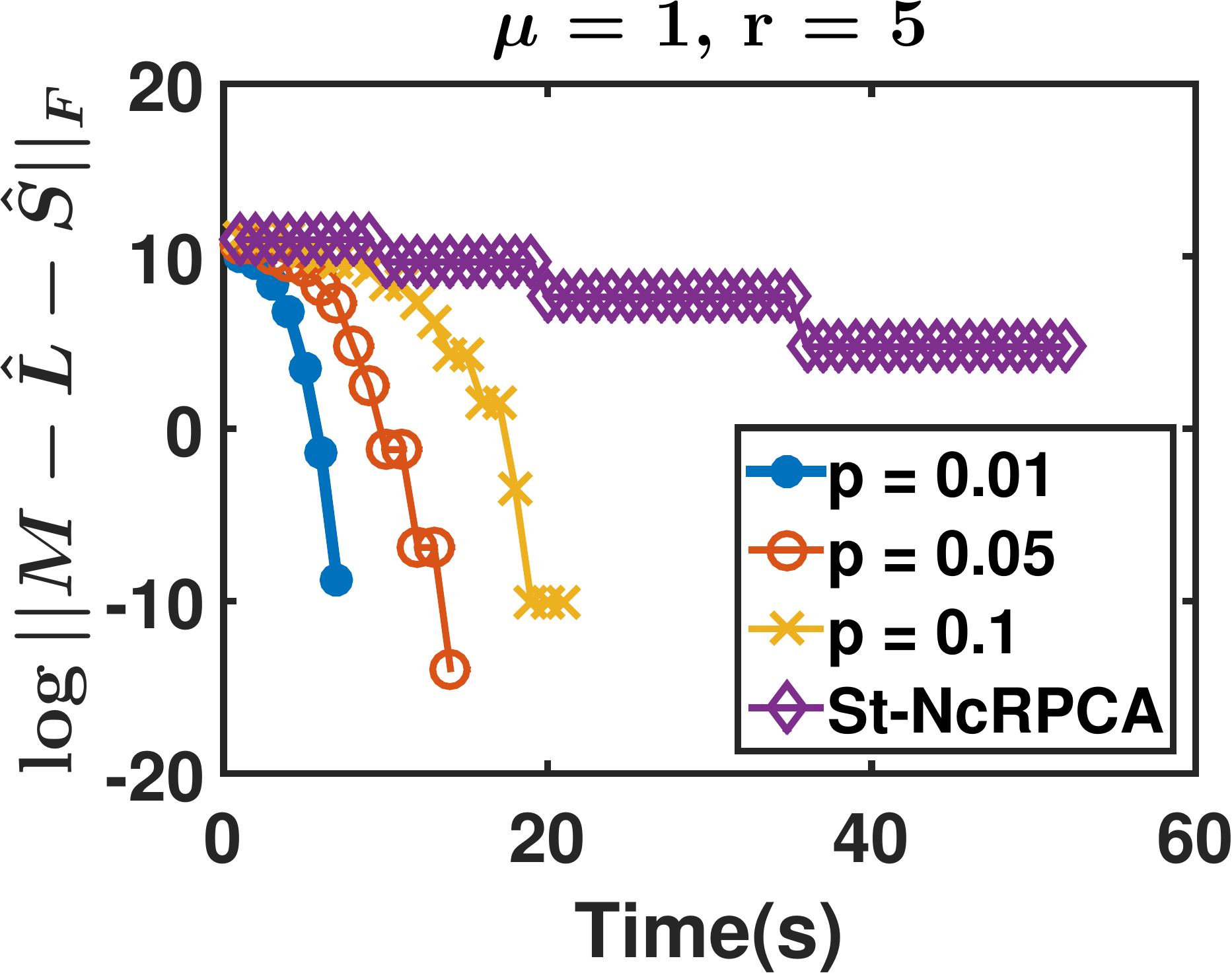}\\
(a)&(b)&(c)\\
\end{tabular}\vspace*{-8pt}
\caption{\ncgca on Escalator video. (a): a video frame (b): an extracted background frame (c): time vs error for different sampling probabilities; \ncgca takes 7.3s while St-NcRMC takes 52.9s}
\label{fig:real_plots_1_app}
\end{figure}

\textbf{Foreground-background separation.} We present results for one more real world data set in this section. We applied our \ncgca method (with varying $p$) to the Escalator video. Figure~\ref{fig:real_plots_1_app} (a) shows one frame from the video. Figure~\ref{fig:real_plots_1_app} (b) shows the extracted background from the video by using our method (\ncgca, Algorithm~\ref{alg:gca}) with probability of sampling $p=0.05$. Figure~\ref{fig:real_plots_1_app} (c) compares objective function value for different $p$ values.

\end{document}